\def\eqref#1{equation~\ref{#1}}
\def\ceil#1{\lceil #1 \rceil}
\def\floor#1{\lfloor #1 \rfloor}
\def\1{\bm{1}}
\DeclareMathAlphabet{\mathsfit}{\encodingdefault}{\sfdefault}{m}{sl}
\SetMathAlphabet{\mathsfit}{bold}{\encodingdefault}{\sfdefault}{bx}{n}
\DeclareMathOperator*{\argmin}{arg\,min}
\newcommand*{\addFileDependency}[1]{
  \typeout{(#1)}
  \@addtofilelist{#1}
  \IfFileExists{#1}{}{\typeout{No file #1.}}
}
 \newcommand*{\myexternaldocument}[1]{%
    \externaldocument{#1}%
    \addFileDependency{#1.tex}%
    \addFileDependency{#1.aux}%
}
\icmltitlerunning{Approximation and Non-parametric Estimation of ResNet-type Convolutional Neural Networks}
\newcommand{\bbN}{{\mathbb N}}
\newcommand{\bbR}{{\mathbb R}}
\newcommand{\bbE}{{\mathbb E}}
\newcommand{\bbZ}{{\mathbb Z}}
\newcommand{\bmone}{{\bm 1}}
\newcommand{\bmzero}{{\bm 0}}
\newcommand{\vect}[1]{\mathrm{vec}(#1)}
\newcommand{\relu}{\mathrm{ReLU}}
\newcommand{\id}{\mathrm{id}}
\newcommand{\op}{\mathrm{op}}
\newcommand{\bmw}{{\bm w}}
\newcommand{\bmb}{{\bm b}}
\newcommand{\bmC}{{\bm C}}
\newcommand{\bmK}{{\bm K}}
\newcommand{\bmD}{{\bm D}}
\newcommand{\bmW}{{\bm W}}
\newcommand{\bmtheta}{{\bm \theta}}
\newcommand\dotdot[2]{#1_{#2, :, :}}
\newtheorem{prop}{Proposition}
\newtheorem{definition}{Definition}
\newtheorem{lemma}{Lemma}
\newtheorem{theorem}{Theorem}
\newtheorem{remark}{Remark}
\newtheorem{cor}{Corollary}
\newcounter{mainthms}
\newcolumntype{P}[1]{>{\centering\arraybackslash}p{#1}}
\begin{document}

\twocolumn[
\icmltitle{Approximation and Non-parametric Estimation of \\ ResNet-type Convolutional Neural Networks}



\icmlsetsymbol{equal}{*}
\icmlsetsymbol{work}{*}

\begin{icmlauthorlist}
\icmlauthor{Kenta Oono}{tokyo,pfn,work}
\icmlauthor{Taiji Suzuki}{tokyo,aip}
\end{icmlauthorlist}

\icmlaffiliation{tokyo}{Graduate School of Information Science and Technology, The University of Tokyo, Tokyo, Japan}
\icmlaffiliation{pfn}{Preferred Networks, Inc. (PFN), Tokyo, Japan}
\icmlaffiliation{aip}{Center for Advanced Intelligence Project (AIP), RIKEN, Tokyo, Japan}

\icmlcorrespondingauthor{Kenta Oono}{kenta\_oono@mist.i.u-tokyo.ac.jp}

\icmlkeywords{Convolutional Neural Network (CNN), Residual Neural Network (ResNet), learning theory, approximation theory, non-parametric estimation, block-sparse}

\vskip 0.3in
]



\printAffiliationsAndNotice{\textsuperscript{*}Work at the University of Tokyo.}  

\begin{abstract}
Convolutional neural networks (CNNs) have been shown to achieve optimal approximation and estimation error rates (in minimax sense) in several function classes. However, previous analyzed optimal CNNs are unrealistically wide and difficult to obtain via optimization due to sparse constraints in important function classes, including the H\"older class. We show a ResNet-type CNN can attain the minimax optimal error rates in these classes in more plausible situations -- it can be dense, and its width, channel size, and filter size are constant with respect to sample size. The key idea is that we can replicate the learning ability of Fully-connected neural networks (FNNs) by tailored CNNs, as long as the FNNs have \textit{block-sparse} structures. Our theory is general in a sense that we can automatically translate any approximation rate achieved by block-sparse FNNs into that by CNNs. As an application, we derive approximation and estimation error rates of the aformentioned type of CNNs for the Barron and H\"older classes with the same strategy.
\end{abstract}

\section{Introduction}

Convolutional neural network (CNN) is one of the most popular architectures in deep learning research, with various applications such as computer vision~\cite{NIPS2012_4824}, natural language processing~\cite{wu2016google}, and sequence analysis in bioinformatics~\cite{alipanahi2015predicting,zhou2015predicting}.
Despite practical popularity, theoretical justification for the power of CNNs is still scarce from the viewpoint of statistical learning theory.

For fully-connected neural networks (FNNs), there is a lot of existing work, dating back to the 80's, for theoretical explanation regarding their \textit{approximation} ability~\cite{cybenko1989approximation,barron1993universal,lu2017expressive,yarotsky2017error,lee17a,petersen2018optimal} and \textit{generalization} power~\cite{barron1994approximation,arora2018stronger,suzuki18a}.
See also surveys of earlier work by Pinkus~\yrcite{Pinkus2005} and Kainen et al.~\yrcite{kainen2013}.
Although less common compared to FNNs,
recent statistical learning theories for CNNs have been studied both about approximation ability \cite{zhou2018universality,yarotsky2018universal,petersen2018equivalence} and generalization power~\cite{zhou18a}.
Among others, Petersen and Voigtlaender~\yrcite{petersen2018equivalence} showed that any function realizable by an FNN is representable with an (equivariant) CNN with the same order of parameters.
This fact means virtually any approximation and estimation error rates achieved by FNNs can be achieved by CNNs, too.
In particular, because FNNs are optimal in minimax sense \cite{Tsybakov:2008:INE:1522486,gine2015mathematical} for several important function classes such as the H\"older class \cite{yarotsky2017error,schmidt2017nonparametric}, CNNs are also minimax optimal for these classes.

However, the optimal CNN obtained by the result of \cite{petersen2018optimal} can be unrealistically \textit{wide}: for $D$ variate $\beta$-H\"older case (see Definition \ref{def:holder-class}), its depth is $O(\log N)$, while its channel size is as large as $O(N^{\frac{D}{2\beta+D}})$ where $N$ is sample size.
To the best of our knowledge, no CNNs that achieve the minimax optimal rate in important function classes, including the H\"older class, can keep the number of units per layer constant with respect to $N$.
Thanks to recent techniques such as identity mappings~\cite{he2016deep,huang18b}, sophisticated initialization schemes~\cite{he2015delving,chen18i}, and normalization methods~\cite{ioffe15,miyato2018spectral}, architectures that are considerably deep and moderate channel size and width have become feasible.
Therefore, we would argue that there are growing demands for theories that can accommodate such constant-size architectures.

The other issue is impractical \textit{sparsity} constraints imposed on neural networks.
Existing literature \cite{schmidt2017nonparametric,suzuki2018adaptivity,imaizumi19a} proved the minimax optimal property of FNNs for several function classes.
However, they picked an estimator from a set of functions realizable by FNNs with a given number of non-zero parameters.
For example, Schmidt-Hieber~\yrcite{schmidt2017nonparametric} constructed an optimal FNN that has depth $O(\log N)$, width $O(N^{\alpha})$, and $O(N^\alpha \log N)$ non-zero parameters when the true function is $D$ variate $\beta$-H\"older. Here, $N$ is the sample size and $\alpha=\frac{D}{2\beta + D}$. It means the ratio of non-zero parameters (i.e., the number of non-zero parameters divided by the number of all parameters) is $\tilde{O}(N^{-\alpha})$.
To obtain such neural networks, we need to consider impractical combinatorial problems such as $L_0$ norm optimization.
Although we can obtain minimax optimal CNNs using the equivalence of CNNs and FNNs explained before, these CNNs also have the same order of sparsity.

In this paper, we show that CNNs can achieve minimax optimal approximation and estimation error rates, even with more plausible architectures. 
Specifically, we analyze the learning ability of ResNet-type~\cite{he2016deep} CNNs with ReLU activation functions~\cite{NIPS2012_4824}, which can be dense and have constant width, channel size, and filter size against the sample size.
There are mainly two reasons that motivate us to study this type of CNNs.
First, although ResNet is a de facto architecture in various practical applications, the minimax optimal property for ResNet has not been explored extensively.
Second, constant-width CNNs are critical building blocks not only in ResNet but also in various modern CNNs such as Inception ~\cite{szegedy2015going}, DenseNet~\cite{huang2017densely}, and U-Net~\cite{ronneberger2015u}, to name a few.

Our strategy is to emulate FNNs by constructing tailored ResNet-type CNNs similar to Zhou ~\yrcite{zhou2018universality} and Petersen and Voigtlaender~\yrcite{petersen2018equivalence}. The unique point of our method is to pay attention to a \textit{block-sparse} structure of an FNN, which roughly means a linear combination of multiple possibly dense FNNs. Block-sparseness decreases the model complexity from the combinatorial sparsity patterns and promotes better bounds.
Therefore, approximation and learning theories of FNNs often utilized it both implicitly or explicitly ~\cite{yarotsky2018universal,bolcskei2019optimal}. We first prove that if an FNN is block-sparse with $M$ blocks, we can realize the FNN with a ResNet-type CNN with $O(M)$ additional parameters.
In particular, if blocks in the FNN are dense, which is often true in typical settings, the increase of parameters in number is negligible. Therefore, the order of approximation rate of CNNs is the same as that of FNNs, and hence we can also show that the CNNs can achieve the same estimation error rate as the FNNs. We also note that CNN does not have sparse structures in general in this case. Although our primary interest is the H\"older class, this result is general in the sense that it is not restricted to a specific function class as long as we can approximate it using block-sparse FNNs.

To demonstrate the broad applicability of our methods, we derive approximation and estimation errors for two types of function classes with the same strategy: the Barron class (of parameter $s=2$, see Definition \ref{def:barron-class}) and H\"older class.
We prove, as corollaries, that our CNNs can achieve the approximation error of order $\tilde{O}(M^{-\frac{D+2}{2D}})$ for the Barron class and $\tilde{O}(M^{-\frac{\beta}{D}})$ for the $\beta$-H\"older class and the estimation error of order $\tilde{O}_P (N^{-\frac{D+2}{2(D+1)}})$ for the Barron class and $\tilde{O}_P (N^{-\frac{2\beta}{2\beta + D}})$ for the $\beta$-H\"older class, where $M$ is the number of parameters (we used $M$, which is same as the number of blocks, to indicate the parameter count because it will turn out that CNNs have $\Omega(M)$ blocks for these cases), $N$ is the sample size, and $D$ is the input dimension.
These rates are same as the ones for FNNs ever known in existing literature.
An important consequence of our theory is that the ResNet-type CNN can achieve the minimax optimal estimation error (up to logarithmic factors) for the H\"older class even if it can be dense, and its width, filter size, and channel size are constant against sample size. This fact is in contrast to existing work, where optimal FNNs or CNNs are inevitably sparse and have width or channel size going to infinity as $N\to \infty$.
Further, we prove minimax optimal CNNs can have constant-depth residual blocks for the H\"older case if we introduce signal scaling mechanisms to CNNs (see Definition \ref{def:masked-cnn-def}).

In summary, the contributions of our work are as follows:

\begin{itemize}
\item We develop general approximation theories for CNNs via ResNet-type architectures. If we can approximate a function with a block-sparse FNN with $M$ dense blocks, we can also approximate the function with a ResNet-type CNN at the same rate (Theorem \ref{thm:fnn-to-cnn}). The CNN is dense in general and is not assumed to have unrealistic sparse structures.
\item We derive the upper bound of the estimation error of ResNet-type CNNs (Theorem \ref{thm:estimation}). It gives a sufficient condition to obtain the same estimation error rate as FNNs (Corollary \ref{cor:estimation-error-wrt-sample-size}).
\item We apply our theory to the Barron and H\"older classes and derive the approximation (Corollary \ref{cor:approximation-barron} and \ref{cor:approximation-holder}) and estimation (Corollary \ref{cor:estimation-barron} and \ref{cor:estimation-holder}) error rates, which are identical to those for FNNs, even if the CNNs are dense and have constant width, channel size, and filter size with respect to sample size. This rate is minimax optimal for the H\"older case.
\item For the H\"older case, the optimal CNNs can additionally have constant-depth residual blocks if we introduce a scaling mechanism to identity mappings (Theorem \ref{thm:constant-depth-resnet-approximation} and \ref{thm:constant-depth-resnet-estimation}).
\end{itemize}

\section{Related Work}

In Table \ref{tab:cnn-comparison}, we highlight differences in CNN architectures between our work and work done by Zhou ~\yrcite{zhou2018universality} and Petersen and Voigtlaender~\yrcite{petersen2018equivalence}, which established approximation theories of CNNs via FNNs.

\begin{table*}[ht]
    \centering
        \caption{Comparison of CNN architectures.
        Function type: The function type CNNs can approximate. ``(Block-sparse) FNNs" means any function (blocks-sparse) FNNs can realize.
        Channel size: the number of channels needed to approximate a $\beta$-H\"older function with accuracy $\varepsilon$ measured by the sup norm.
        Sparsity: the ratio of non-zero parameters of optimal FNNs when the true function is $\beta$-H\"older ($N$ is the sample size).}
    \vspace{0.5em}
    \begin{tabular}{ccP{0.2\linewidth}c} \\\hline
                                             & \multirow{2}{*}{Zhou \yrcite{zhou2018universality}} & Petersen \& & \multirow{2}{*}{Ours} \\
                                            &   & Voigtlaender \yrcite{petersen2018equivalence} \hfill &  \\\hline

         CNN type                            & Conventional                & Conventional              & ResNet \\
         \multirow{2}{*}{Function type}      & \multirow{2}{*}{Barron ($s=2$)}& \multirow{2}{*}{FNNs}                           &  \multirow{2}{*}{Block-sparse FNNs}    \\
                                             &                             &                          &     \\
         \multirow{2}{*}{Channel size}                        & \multirow{2}{*}{N.A.}       & \multirow{2}{*}{$\tilde{O}(\varepsilon^{-\frac{D}{\beta}})$}        & \multirow{2}{*}{$O(1)$}  \\
                              &                             &                                &   \\
         \multirow{2}{*}{Sparsity}                           & \multirow{2}{*}{N.A.}                         & \multirow{2}{*}{$\tilde{O}(N^{-\frac{D}{2\beta+D}})$}                             & \multirow{2}{*}{$O(1)$}\\
         &&&\\\hline
    \end{tabular}
    \label{tab:cnn-comparison}
\end{table*}
First and foremost, Zhou only considered a specific function class --- the Barron class --- as a target function class, although we can apply their method to any function class realizable by a 2-layered ReLU FNN (i.e., a ReLU FNN with a single hidden layer).
Regarding architectures, they considered CNNs with a single channel and whose width is ``linearly increasing" \cite{zhou2018universality} layer by layer. For regression or classification problems, it is rare to use such an architecture. Besides, since they did not give the bound for the norm of parameters in approximating CNNs, we cannot derive the estimation error from their result.

Petersen and Voigtlaender \yrcite{petersen2018equivalence} fully utilized a group invariance structure of underlying input spaces to construct CNNs.
Such a structure makes theoretical analysis easier, especially for investigating the equivariance properties of CNNs, because it enables us to incorporate mathematical tools such as group theory, Fourier analysis, and representation theory~\cite{s.2018spherical}. Although their results are quite general in that we can apply it to any function approximated by FNNs, their assumption on group structures excludes the padding convolution layer, a popular type of convolution operation. Secondly, if we simply combine their result with the approximation result of Yarotsky~\yrcite{yarotsky2017error}, the CNN which optimally approximates $\beta$-H\"older function by the accuracy $\varepsilon$ (with respect to the sup-norm) has $\tilde{O}(\varepsilon^{-\frac{D}{\beta}})$ channels, which grows as $\varepsilon \to 0$ ($D$ is the input dimension). Finally, the ratio of non-zero parameters of optimal CNNs is $\tilde{O}(N^{-\frac{D}{2\beta + D}})$. That means the optimal CNNs get incredibly sparse as the sample size $N$ increases. One of the reasons for the large channel size and sparse structure is that their construction was not aware of the sparse internal structure of approximating FNNs, which motivates us to consider special structures of FNNs, the block-sparse structure.

Unlike these two studies, we employ padding- and ResNet-type CNNs, which have multiple channels, fixed-sized filters, and constant width.
Like Petersen and Voigtlaender~\yrcite{petersen2018equivalence}, we can apply our result to any function, as long as FNNs to be approximated are block-sparse, including the Barron and H\"older cases. If we use our theorem for these classes, we can show that the optimal CNNs can achieve the same approximation and estimation rates as FNNs, while they are dense, and the number of channels is independent of the sample size.

Finite-width neural networks have been studied in earlier work \cite{lu2017expressive,perekrestenko2018universal,fan2018universal}.
However, they only derived approximation abilities.
For finite-width networks, it is far from trivial to derive optimal estimation error rates from approximation results: if a network approximates a true function more accurately while restricting its capacity per layer, the neural network inevitably gets deeper.
Then, the model complexity of networks typically explodes exponentially as their depth increases, which makes it difficult to derive optimal estimation bounds.
We overcome this problem by sophisticated evaluation of model complexity using parameter rescaling techniques (see Section \ref{sec:barron}).

Due to its practical success, theoretical analysis for ResNet has been explored recently~\cite{lin2018resnet,lu18d,nitanda18a,huang18b}.
From the viewpoint of statistical learning theory, Nitanda and Suzuki~\yrcite{nitanda18a} and Huang et al.~\yrcite{huang18b} investigated the generalization power of ResNet from the perspective of boosting interpretation.
However, they did not derive precise estimation error rates for concrete function classes.
To the best of our knowledge, our theory is the first work to provide the estimation error rate of CNN classes that can accommodate the ResNet-type ones.

We import the approximation theories for FNNs, especially ones for the Barron and H\"older classes.
Originally Barron~\yrcite{barron1993universal} considered the Barron class with a parameter $s=1$ and an activation function $\sigma$ satisfying $\sigma(z)\to 1$ as $z\to \infty$ and $\sigma(z)\to 0$ as $z\to -\infty$.
Using this result, \citet{lee17a} proved that the composition of $n$ Barron functions with $s=1$ can be approximated by an FNN with $n+1$ layers.
Klusowski and Barron~\yrcite{klusowski2018approximation} studied its approximation theory with $s=2$ and proved that 2-layered ReLU FNNs with $M$ hidden units can approximate functions of this class with the order of $\tilde{O}(M^{-\frac{D+2}{2D}})$.
Yarotsky~\yrcite{yarotsky2017error} proved FNNs with $S$ non-zero parameters can approximate $D$ variate $\beta$-H\"older continuous functions with the order of  $\tilde{O}(S^{-\frac{\beta}{D}})$.
Using this bound, Schmidt-Hieber~\yrcite{schmidt2017nonparametric} proved that the estimation error of the ERM estimator is $\tilde{O}(N^{-\frac{2\beta}{2\beta + D}})$, which is minimax optimal up to logarithmic factors (see, e.g.,~\cite{Tsybakov:2008:INE:1522486}).

\section{Problem Setting}

We denote the set of positive integers by $\bbN_+ := \{1, 2, \ldots\}$ and
the set of positive integers less than or equal to $M\in \bbN_+$ by $[M] := \{1, \ldots, M\}$.
We define $a \vee b := \max(a, b)$ and $a\wedge b := \min(a, b)$ for $a, b\in \bbR$.

\subsection{Empirical Risk Minimization}\label{sec:problem-setting}

We consider a regression task in this paper.
Let $X$ be a $[-1, 1]^D$-valued random variable with an unknown probability distribution $\mathcal{P}_X$ and $\xi$ be an independent random noise drawn from the Gaussian distribution with an unknown variance $\sigma^2$ ($\sigma >0$): $\xi \sim \mathcal{N}(0, \sigma^2)$.
Let $f^\circ$ be an unknown deterministic function $f^{\circ}:[-1, 1]^D\to \bbR$ (we will characterize $f^\circ$ rigorously later).
We define a random variable $Y$ by $Y := f^\circ(X) + \xi$.
We denote the joint distribution of $(X, Y)$ by $\mathcal{P}$.
Suppose we are given a dataset $\mathcal{D} = ((x_1, y_1), \ldots, (x_N, y_N))$ independently and identically sampled from the distribution $\mathcal{P}$,
we want to estimate the true function $f^\circ$ from $\mathcal{D}$.

We evaluate the performance of an estimator by the squared error.
For a measurable function $f:[-1, 1]^D\to \bbR$, we define the \textit{empirical error} of $f$ by $\hat{\mathcal{R}}_\mathcal{D}(f):=\frac{1}{N} \sum_{n=1}^N (y_n - f(x_n))^2$ and the \textit{estimation error} by  $\mathcal{R}(f):=\mathbb{E}_{X, Y} \left[(f(X)-Y)^2 \right]$.
Given a subset $\mathcal{F}$ of measurable functions from $[-1, 1]^D$ to $\bbR$, we consider the \textit{clipped empirical risk minimization (ERM) estimator} $\hat{f}$ of $\mathcal{F}$ that satisfies
\begin{align*}
     \hat{f} := \mathrm{clip}[f_{\min}] \quad \text{where $f_{\min} \in \argmin_{f\in \mathcal{F}} \hat{\mathcal{R}}_\mathcal{D}(\mathrm{clip}[f])$}.
\end{align*}
Here, $\mathrm{clip}$ is a clipping operator defined by $\mathrm{clip}[f]:= (f\vee -\|f^{\circ}\|_\infty) \wedge \|f^{\circ}\|_\infty$. 
For a measurable function $f:[-1, 1]^D\to \bbR$, we define the $L_2$-norm (weighted by $\mathcal{P}_X$) and the sup norm of $f$ by  $\|f\|_{\mathcal{L}^2(\mathcal{P}_X)}:= \left( \int_{[-1, 1]^D} f^2(x) \mathrm{d} \mathcal{P}_X(x) \right)^\frac{1}{2}$ and $\|f\|_\infty := \sup_{x\in [-1, 1]^D} |f(x)|$, respectively.
Let $\mathcal{L}^2(\mathcal{P}_X)$ be the set of measurable functions $f$ such that $\|f\|_{\mathcal{L}^2(\mathcal{P}_X)} < \infty$ with the norm $\|\cdot\|_{\mathcal{L}^2(\mathcal{P}_X)}$.
The task is to estimate the \textit{approximation} error $\inf_{f\in \mathcal{F}} \|f-f^{\circ}\|_\infty$ and the \textit{estimation} error of the clipped ERM estimator: $\mathcal{R}(\hat{f}) - \mathcal{R}(f^\circ)$. Note that the estimation error is a random variable with respect to the choice of the training dataset $\mathcal{D}$.
By the definition of $\mathcal{R}$ and the independence of $X$ and $\xi$, the estimation error equals to $\|\hat{f} - f^\circ\|^2_{\mathcal{L}^2(P_X)}$.

\subsection{Convolutional Neural Networks}

In this section, we define CNNs used in this paper.
Let $K, C, C' \in \bbN_{+}$ be a filter size, input channel size, and output channel size, respectively.
For a filter $w = (w_{n, j, i})_{n\in [K], j\in [C'], i\in [C]}\in \bbR^{K\times C'\times C}$, we define the \textit{one-sided padding and stride-one convolution}\footnote{we discuss the difference of one-sided padding and two-sided padding in Appendix \ref{sec:padding-style}.} by $w$ as an order-4 tensor $L_D^w = ((L_D^w)^{\beta, j}_{\alpha, i}) \in \bbR^{D\times D\times C'\times C}$ defined by
\begin{align*}
(L_D^w)^{\beta, j}_{\alpha, i} :=
\begin{cases}
  w_{(\alpha - \beta + 1), j, i} & \text{if $0\leq \alpha - \beta \leq K-1$}, \\
  0  & \text{otherwise}.
\end{cases}
\end{align*}
Here, $i$ (resp. $j$) runs through $1$ to $C$ (resp. $C'$) and $\alpha$ and $\beta$ through $1$ to $D$.
Since we fix the input dimension $D$ throughout the paper, we omit the subscript $D$ and write it as $L^w$ if it is obvious from the context.
We can interpret $L^w$ as a linear mapping from $\bbR^{D\times C}$ to $\bbR^{D\times C'}$.
Specifically, for $x=(x_{\alpha, i})_{\alpha, i}\in \bbR^{D\times C}$, we define $(y_{\beta, j})_{\beta, j} = L^w(x) \in \bbR^{D\times C'}$ by
$$
y_{\beta, j} := \sum_{i, \alpha} (L^w)^{\beta, j}_{\alpha, i}\ x_{\alpha, i}.
$$

Next, we define the building blocks of CNNs: convolutional and fully-connected layers.
Let $K, C, C'\in \bbN_{+}$. For a weight tensor $w\in \bbR^{K\times C'\times C}$, a bias vector $b\in \bbR^{C'}$, and an activation function $\sigma: \bbR \to \bbR$, we define the \textit{convolutional layer} $\mathrm{Conv}^{\sigma}_{w, b}: \bbR^{D\times C}\to \bbR^{D\times C'}$ by $\mathrm{Conv}^{\sigma}_{w, b}(x):=\sigma(L^w(x) - \bmone_D \otimes b)$, where $\bmone_D$ is a $D$ dimensional vector consisting of 1's, $\otimes$ is the outer product of vectors, and $\sigma$ is applied in element-wise manner.
Similarly, let $W\in \bbR^{C'\times DC}$, $b\in \bbR^{C'}$, and $\sigma:\bbR\to \bbR$, we define the \textit{fully-connected layer} $\mathrm{FC}^{\sigma}_{W, b}: \bbR^{D\times C}\to \bbR^{C'}$ by $\mathrm{FC}^{\sigma}_{W, b}(a)=\sigma(W\vect{a} - b)$.
Here, $\vect{\cdot}$ is the vectorization operator that flattens a matrix into a vector.

Finally, we define the ResNet-type CNN as a sequential concatenation of one convolution block, $M$ residual blocks, and one fully-connected layer.
Figure \ref{fig:cnn-def} is the schematic view of the CNN we adopt in this paper.
\begin{figure}[t]
    \begin{center}
        \includegraphics[width=.9\linewidth]{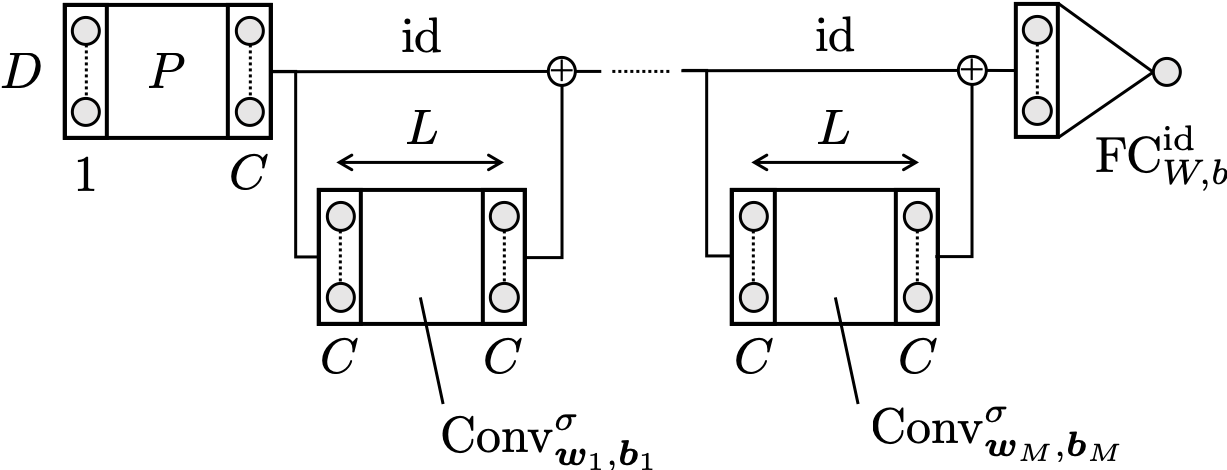}
    \end{center}
    \caption{ResNet-type CNN defined in Definition \ref{def:cnn-def}. Variables are as in Definition \ref{def:cnn-def}.}\label{fig:cnn-def}
\end{figure}

\begin{definition}[Convolutional Neural Networks (CNNs)]\label{def:cnn-def}
Let $M, L, C, K\in \bbN_{+}$, which will be the number of residual blocks and depth, channel size, and filter size of blocks, respectively.
For $m\in [M]$ and $l\in [L]$, let $w_m^{(l)}\in \bbR^{K\times C\times C}$ and $b_m^{(l)}\in \bbR^{C}$ be a weight tensor and bias of the $l$-th layer of the $m$-th block in the convolution part, respectively.
Finally, let $W\in \bbR^{DC \times 1}$ and $b\in \bbR$ be a weight matrix and a bias for the fully-connected layer part, respectively.
For $\bmtheta:=((w_m^{(l)})_{m,l}, (b_m^{(l)})_{m, l}, W, b)$ and an activation function $\sigma: \bbR \to \bbR$, we define  $\mathrm{CNN}^{\sigma}_{\bmtheta}:\bbR^{D}\to \bbR^{D}$, the CNN constructed from $\bmtheta$, by
\begin{align*}
    \mathrm{CNN}^{\sigma}_\bmtheta := \mathrm{FC}^{\id}_{W, b} & \circ 
    (\mathrm{Conv}^{\sigma}_{\bmw_M, \bmb_M}+ \id) \circ 
    \cdots \\&
    \circ (\mathrm{Conv}^{\sigma}_{\bmw_1, \bmb_1}+ \id) \circ
    P,
\end{align*}
where
$\mathrm{Conv}^{\sigma}_{\bmw_m, \bmb_m}:=
\mathrm{Conv}^{\sigma}_{w_m^{(L)}, b_m^{(L)}} \circ
\cdots \circ
\mathrm{Conv}^{\sigma}_{w_m^{(1)}, b_m^{(1)}}$,
$\id:\bbR^{D\times C} \to \bbR^{D\times C}$ is the identity function, and
$P: \bbR^{D}\to \bbR^{D\times C};
x \mapsto \begin{bmatrix}x & 0 & \cdots & 0 \end{bmatrix}$ is a padding operation that adds zeros to align the number of channels\footnote{Although $\mathrm{CNN}_\bmtheta^{\sigma}$ in this definition has a fully-connected layer, we refer to a stack of convolutional layers both with or without the final fully-connect layer as a CNN in this paper.}.
\end{definition}

We say a \textit{linear} convolutional layer or a \textit{linear} CNN when the activation function $\sigma$ is the identity function and a \textit{ReLU} convolution layer or a \textit{ReLU} CNN when $\sigma$ is ReLU, which is defined by $\relu(x) := x\vee 0$.
We borrow the term from ResNet and call $\mathrm{Conv}^{\sigma}_{\bmw_m, \bmb_m}$ ($m>0$) and $\id$ in the above definition the $m$-th \textit{residual block} and \text{identity mapping}, respectively.
We say $\bmtheta$ is \textit{compatible} with $(C, K)$ when each component of $\bmtheta$ satisfies the aforementioned dimension conditions.

For the number of blocks $M$, depth of residual blocks $L$, channel size $C$, filter size $K$, and norm parameters for convolution layers $B^{\mathrm{(conv)}}>0$ and for a fully-connected layer $B^{\mathrm{(fc)}} > 0$, we define $\mathcal{F}^{\mathrm{(CNN)}}_{M, L, C, K, B^{\mathrm{(conv)}}, B^{\mathrm{(fc)}}}$, the hypothesis class consisting of ReLU CNNs as
\begin{align*}
    \left\{\mathrm{CNN}_{\bmtheta}^{\relu}\ 
        \begin{tabular}{|l}
            $\mathrm{CNN}_{\bmtheta}^{\relu}$ has $M$ residual blocks, \\
            depth of each residual block is $L$, \\
            $\bmtheta$ is compatible with $(C, K)$, \\
            $\max_{m, l}\|w_m^{(l)}\|_{\infty} \vee \|b_m^{(l)}\|_\infty \leq B^{\mathrm{(conv)}}$,\\
            $\|W\|_{\infty} \vee \|b\|_\infty \leq B^{\mathrm{(fc)}}$
        \end{tabular}
    \right\}.
\end{align*}
Here, the domain of CNNs is restricted to $[-1, 1]^{D}$.
Note that we impose norm constraints to the convolution and fully-connected parts separately.
We emphasize that we do not impose any sparse constraints (e.g., restricting the number of non-zero parameters in a CNN to some fixed value) on CNNs, as opposed to previous literature~\cite{yarotsky2017error,schmidt2017nonparametric,imaizumi19a}.
We discuss differences between our CNN and the original ResNet~\cite{he2016deep} in Appendix \ref{sec:resnet-comparison}.

\subsection{Block-sparse Fully-connected Neural Networks}

In this section, we mathematically define FNNs we consider in this paper, in parallel with the CNN case.
Our FNN, which we coin a \textit{block-sparse} FNN, consists of $M$ possibly dense FNNs (blocks) concatenated in parallel, followed by a single fully-connected layer.
We sketch the architecture of a block-sparse FNN in Figure \ref{fig:block-sparse-fnn}.
\begin{figure}[t]
    \begin{center}
        \includegraphics[width=0.9\linewidth]{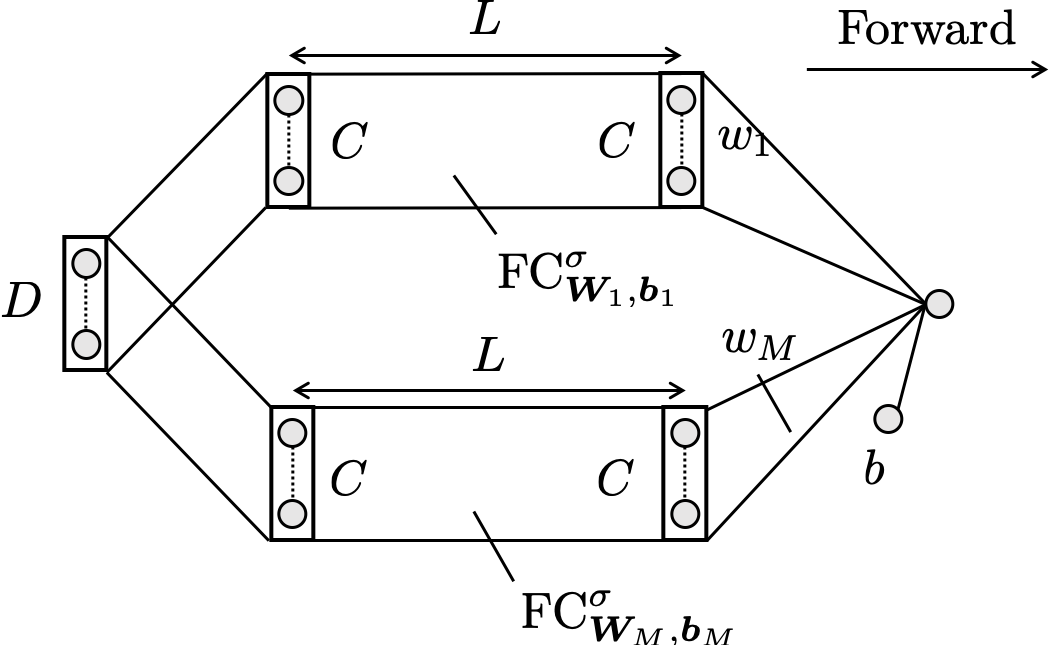}
    \end{center}
    \caption{Schematic view of a block-sparse FNN. Variables are as in Definition \ref{def:block-sparse-fnn}.}\label{fig:block-sparse-fnn}
\end{figure}

\begin{definition}[Fully-connected Neural Networks (FNNs)]\label{def:block-sparse-fnn}
Let $M, L, C\in \bbN_{+}$ be the number of blocks in an FNN, the depth and width of blocks, respectively.
Let $W_m^{(l)} \in \bbR^{C\times C}$ and $b_m^{(l)}\in \bbR^{C}$ be a weight matrix and a bias of the $l$-th layer of the $m$-th block
for $m\in [M]$ and $l\in [L]$, with the exception that $W_{m}^{(1)}\in \bbR^{C\times D}$.
Let $w_m\in \bbR^{C}$ be a weight (sub)vector of the final fully-connected layer corresponding to the $m$-th block and $b\in \bbR$ be a bias for the fully-connected layer.
For $\bmtheta = ((W_m^{(l)})_{m, l}, (b_m^{(l)})_{m, l}, (w_m)_m, b)$ and an activation function $\sigma: \bbR\to \bbR$, we define $\mathrm{FNN}_{\bmtheta}^{\sigma}: \bbR^D\to \bbR$, the block-sparse FNN constructed from $\bmtheta$, by
\begin{align*}
    \mathrm{FNN}_{\bmtheta}^{\sigma} := \sum_{m=1}^M w_m^{\top} \mathrm{FC}_{\bmW_m, \bmb_m}^\sigma(\cdot) - b,
\end{align*}
where $\mathrm{FC}_{\bmW_m, \bmb_m}^\sigma := \mathrm{FC}_{W_m^{(L)}, b_m^{(L)}}^\sigma \circ \cdots \circ \mathrm{FC}_{W_m^{(1)}, b_m^{(1)}}^\sigma$.
\end{definition}
We say $\bmtheta$ is \textit{compatible} with $C$ when each component of $\bmtheta$ matches the dimension conditions determined by the width parameter $C$, as we did in the CNN case.
When $L=1$, a block-sparse FNN is a 2-layered neural network with $C':= MC$ hidden units of the form $f(x) = \sum_{c=1}^{C'} b_c \sigma(a_c^{\top} x - t_c) - b$ where $a_c \in \bbR^D$ and $b_c, t_c, b\in \bbR$.

For the number of blocks $M$, depth $L$ and width $C$ of blocks, and norm parameters for the block part $B^{\mathrm{(bs)}}>0$ and for the final layer $B^{\mathrm{(fin)}} > 0$, we define $\mathcal{F}^{\mathrm{(FNN)}}_{M, L, C, B^{\mathrm{(bs)}}, B^{\mathrm{(fin)}}}$, the set of functions realizable by FNNs as
\begin{align*} 
    \left\{\mathrm{FNN}_{\bmtheta}^{\relu}\ 
            \begin{tabular}{|l}
                $\mathrm{FNN}_{\bmtheta}^{\relu}$ has $M$ blocks, \\
                depth of each block is $L$, \\
                $\bmtheta$ is compatible with $C$, \\
                $\max_{m, l} \|W^{(l)}_m\|_\infty \vee \|b^{(l)}_m\|_\infty \leq B^{\mathrm{(bs)}}$,\\
                $\max_{m} \|w_m\|_\infty \vee |b| \leq B^{\mathrm{(fin)}}$.
            \end{tabular}
        \right\},
\end{align*}
where the domain is again restricted to $[-1, 1]^D$.

\section{Main Theorems}

With the preparation in previous sections, we state the main results of this paper.
We only describe statements of theorems and corollaries in the main article.
All complete proofs are deferred to the supplemental material.

\subsection{Approximation}

Our first theorem claims that any block-sparse FNN with $M$ blocks is realizable by a ResNet-type CNN with fixed-sized channels and filters by adding $O(M)$ parameters.

\stepcounter{mainthms}
\begin{theorem}\label{thm:fnn-to-cnn}
Let $M, L, C\in \bbN_{+}$, $K\in \{2, \ldots D\}$ and $L_0 := \left\lceil \frac{D-1}{K-1}\right\rceil$.
Then, there exist $L' \leq L+L_0$, $C' \leq 4C$, and $K'\leq K$
such that, for any $B^{\mathrm{(bs)}}, B^{\mathrm{(fin)}} > 0$, any FNN in $\mathcal{F}^{\mathrm{(FNN)}}_{M, L, C, B^{\mathrm{(bs)}}, B^{\mathrm{(fin)}}}$ can be realized by a CNN in $\mathcal{F}^{\mathrm{(CNN)}}_{M, L', C', K', B^{\mathrm{(conv)}}, B^{\mathrm{(fc)}}}$.
Here, $B^{\mathrm{(conv)}} = \tilde{B}^{\mathrm{(bs)}}$ and $B^{\mathrm{(fc)}} = B^{\mathrm{(fin)}}(1\vee (\tilde{B}^{(\mathrm{bs})})^{-1})$, where $\tilde{B}^{\mathrm{(bs)}} = B^{\mathrm{(bs)}} \vee (B^{\mathrm{(bs)}})^{\frac{1}{L_{0}}}$.
\end{theorem}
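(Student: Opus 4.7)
The plan is to simulate each of the $M$ FNN blocks inside the corresponding residual block of the CNN, and to aggregate the scalar per-block outputs $w_m^{\top} h_m^{(L)}(x)$ across residual blocks via the identity (skip) connections into two dedicated accumulator channels. I will use $C' = 4C$ channels in the CNN state; at the boundary between residual blocks, one channel holds the input $x$ (preserved via the identity), two channels hold the positive and negative parts of the running accumulator $\sum_{m'\le m}\relu(\pm w_{m'}^{\top}h_{m'}^{(L)}(x))$ at spatial position~$1$, and the remaining channels are zero. Inside a residual block all $C'$ channels are available as working memory. The final fully-connected layer of the CNN reads only position~$1$ of the two accumulator channels, takes their difference, and subtracts $b$, producing $\sum_m w_m^{\top} h_m^{(L)}(x) - b = \mathrm{FNN}_{\bmtheta}^{\relu}(x)$; any garbage in the state at other positions or channels is therefore irrelevant to the final output.

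Inside a single residual block, the $L' = L + L_0$ convolutions split into three parts. The middle $L-1$ convolutions simulate FC layers $\ell = 2,\ldots,L$ of the $m$-th FNN block by using filters that are nonzero only at the first filter offset (i.e.\ $w_{n,j,i}=0$ for $n\ge 2$), so that at each spatial position the convolution acts as a pointwise $C\times C$ weight matrix followed by ReLU; in particular at position~$1$, where the live activation $h_m^{(\ell-1)}$ is stored, the layer correctly produces $h_m^{(\ell)} = \relu(W_m^{(\ell)} h_m^{(\ell-1)} - b_m^{(\ell)})$. The last convolution deposits $\relu(\pm w_m^{\top} h_m^{(L)}(x))$ at position~$1$ of the two accumulator channels, using filters $w_{1,2,\cdot} = w_m$ and $w_{1,3,\cdot} = -w_m$ restricted to the $C$ channels that hold $h_m^{(L)}$ (with zero bias); the remaining output channels of the last convolution are set to zero everywhere via zero filter rows and biases, so the residual identity preserves $x$ in channel~$1$ and the scratch channels return to zero. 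The first $L_0$ convolutions carry out the spatial aggregation $x\mapsto \relu(W_m^{(1)}x - b_m^{(1)})$ demanded by the FNN's first FC layer; this is where all the geometric content of $D$, $K$, and $L_0$ enters.

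For the spatial-aggregation step, the receptive field of $L_0$ one-sided, stride-one convolutions of filter size $K$ is $1 + L_0(K-1)\ge D$ at position~$1$, so $L_0 = \lceil(D-1)/(K-1)\rceil$ layers are exactly enough to let position~$1$ see all of $x_1,\ldots,x_D$. I would partition the $D$ coordinates of $x$ into $L_0$ chunks of size at most $K-1$ and, at layer~$\ell$, absorb chunk~$\ell$'s contribution into a running partial sum $s_\ell = \sum_{i\in\text{chunks }1..\ell} W^{(1)}_{m,\cdot,i}\,x_i\in\bbR^C$ stored at position~$1$ of the scratch channels. To carry $s_\ell$ linearly across the intermediate ReLUs, I keep $\relu(s_\ell)$ and $\relu(-s_\ell)$ in two disjoint groups of channels, exploiting the identity $z = \relu(z) - \relu(-z)$ so that the next convolution can linearly reassemble $s_\ell$ and add the new chunk (whose data sits at a later spatial position, within the filter's reach from position~$1$). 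At layer $L_0$ I apply ReLU together with the bias $b_m^{(1)}$ to obtain $h_m^{(1)}$ at position~$1$ of $C$ designated channels, ready for the middle convolutions.

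The main obstacle is the translation-invariance of convolution: the same filters that correctly compute $W_m^{(1)}x$ at position~$1$ inevitably produce spurious values at positions $\beta>1$ in the scratch channels, and these could in principle contaminate downstream computations. The construction neutralises them in two ways. First, the middle $L-1$ convolutions use first-offset-only filters, so at position~$1$ they depend only on position~$1$ of the previous layer, meaning spurious values at $\beta>1$ only affect positions $\beta>1$ of later layers and never feed back into position~$1$. Second, the final convolution also uses first-offset-only filters for the accumulator update, so it is insensitive to spurious values at $\beta>1$; and the final fully-connected head reads only position~$1$. Tracking filter and bias magnitudes, the convolutional weights in the spatial-aggregation and middle portions are $\pm$entries of $W_m^{(\ell)}$ and $b_m^{(\ell)}$, both bounded by $B^{(\mathrm{bs})}$, while the last-convolution weights $\pm w_m$ are rescaled by a factor $\alpha_m = \|w_m\|_\infty/B^{(\mathrm{bs})}$ when needed, with the compensating factor absorbed into the fully-connected head. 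This gives $B^{(\mathrm{conv})} = B^{(\mathrm{bs})}$; combining with $|b|\le B^{(\mathrm{fin})}$ in the head yields exactly the stated $B^{(\mathrm{fc})} = B^{(\mathrm{fin})}(1\vee(B^{(\mathrm{bs})})^{-1})$.
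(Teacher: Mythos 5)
Your overall architecture coincides with the paper's: reserved channels at block boundaries for the input and for the positive and negative accumulators, the identity $z=\relu(z)-\relu(-z)$ to carry signed values through ReLU, first-offset-only (effectively size-$1$) filters for the pointwise layers $\ell\ge 2$, a last convolution per block that deposits $\relu(\pm w_m^{\top}h_m^{(L)}(x))$ into the accumulators which the skip connections then sum, and a rescaling of $w_m$ against the head that yields $B^{\mathrm{(conv)}}=B^{\mathrm{(bs)}}$ and $B^{\mathrm{(fc)}}=B^{\mathrm{(fin)}}(1\vee(B^{\mathrm{(bs)}})^{-1})$. The gap is in the spatial-aggregation step, which is precisely the nontrivial content of the paper's Lemma \ref{lem:realize-ridge-with-cnn}. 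You keep the running partial sum $s_\ell$ at position~$1$ and assert that at layer $\ell$ the data of chunk $\ell$ ``sits at a later spatial position, within the filter's reach from position~$1$.'' With the one-sided stride-one convolution used here, the output at position $\beta$ reads only positions $\beta,\dots,\beta+K-1$ of the \emph{immediately preceding} layer, so position~$1$ can only ever see positions $1,\dots,K$. Since nothing in your construction transports the raw coordinates $x_{K+1},\dots,x_D$ toward position~$1$ (and the partial sum never leaves position~$1$), those coordinates are unreachable at every layer, however many layers are stacked. The receptive-field count $1+L_0(K-1)\ge D$ is necessary but not sufficient: it bounds what position~$1$ \emph{could} depend on after $L_0$ layers, not what your particular filters compute. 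The paper closes exactly this hole by dedicating a second channel per ridge function to a ``conveyor-belt'' filter $(0,\dots,0,1)$ that translates the input signal left by $K-1$ positions per layer, so that at layer $\ell$ the $\ell$-th chunk genuinely lies at positions $1,\dots,K$ of that channel; this shift channel, doubled by the positive/negative-part trick, is also where the factor $4$ in $C'\le 4C$ comes from. (An equivalent fix lets the partial sum itself migrate from position $D-K+1$ leftward by $K-1$ per layer while the raw input stays put; either way some signal must move across spatial positions, and your proof supplies no mechanism for that.)

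A secondary issue: you rescale the last-convolution weights by a per-block factor $\alpha_m=\|w_m\|_\infty/B^{\mathrm{(bs)}}$, ``with the compensating factor absorbed into the fully-connected head.'' The head acts on the already-summed accumulator, so it cannot undo $M$ different per-block factors; you need a single uniform factor $B^{\mathrm{(bs)}}/B^{\mathrm{(fin)}}$ as in the paper, which still respects the stated norm bounds because $\|w_m\|_\infty\le B^{\mathrm{(fin)}}$ for every $m$.
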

In particular, if we can approximate a function with a block-sparse FNN with $O(M)$ parameters, we can also approximate the function with a ResNet-type CNN at the same rate.
By the definition of $\mathcal{F}^{\mathrm{(CNN)}}_{M, L', C', K', B^{\mathrm{(conv)}}}$, the CNN emulating the block-sparse FNN is dense and does not have sparse structures in general.

\subsection{Estimation}

Our second theorem bounds the estimation error of the clipped ERM estimator.
We denote 
$\mathcal{F}^{\mathrm{(FNN)}} = \mathcal{F}^{\mathrm{(FNN)}}_{M, L, C, B^{\mathrm{(bs)}}, B^{\mathrm{(fin)}}}$ and
$\mathcal{F}^{\mathrm{(CNN)}} = \mathcal{F}^{\mathrm{(CNN)}}_{M, L', C', K', B^{\mathrm{(conv)}}, B^{\mathrm{(fc)}}}$ in short.

\begin{theorem}\label{thm:estimation}
Let $f^\circ: \bbR^{D}\to \bbR$ be a measurable function and $B^{\mathrm{(bs)}}, B^{\mathrm{(fin)}} > 0$.
Let $M$, $L$, $C$, $K$, and $L_0$ as in Theorem \ref{thm:fnn-to-cnn}.
Suppose $L', C', K', B^{\mathrm{(conv)}}$ and $B^{\mathrm{(fc)}}$ satisfy 
$\mathcal{F}^{\mathrm{(FNN)}} \subset \mathcal{F}^{\mathrm{(CNN)}}$ (their existence is ensured by Theorem \ref{thm:fnn-to-cnn}).
Suppose that the covering nubmer of $\mathcal{F}^{\mathrm{(CNN)}}$ is larger than $2$.
Then, the clipped ERM estimator $\hat{f}$ of
$\mathcal{F}:= \{ \mathrm{clip}[f] \mid f \in \mathcal{F}^{\mathrm{(CNN)}} \}$ satisfies
\begin{align}\label{eq:estimation-bound}
    &\bbE_{\mathcal{D}} \|\hat{f} - f^\circ\|_{\mathcal{L}^2(\mathcal{P}_X)}^2 \nonumber \\
    &\leq C_0 \left(\inf_{f} \|f-f^\circ\|^2_{\infty} + \frac{\tilde{F}^2}{N} \Lambda_2 \log (2\Lambda_1BN) \right).
\end{align}
Here, $f$ ranges over $\mathcal{F}^\mathrm{(FNN)}$, $C_0>0$ is a universal constant, $\tilde{F} := \frac{\|f^\circ\|_\infty}{\sigma} \vee \frac{1}{2}$, and $B := B^{\mathrm{(conv)}}\vee B^{\mathrm{(fc)}}$. $\Lambda_1 = \Lambda_1(\mathcal{F}^{\mathrm{(CNN)}})$ and $\Lambda_2 = \Lambda_2(\mathcal{F}^{\mathrm{(CNN)}})$ are defined by
\begin{align*}
    \Lambda_1 &:= (2M+3)C'D (1\vee B^{\mathrm{(fc)}}) (1\vee B^\mathrm{(conv)}) \varrho \varrho^+, \\
    \Lambda_2 &:= ML'\left({C'}^2K'+C'\right) + C'D + 1,
\end{align*}
where
$\varrho := (1+\rho)^M$,
$\varrho^+ := 1 + ML'\rho^+$,
$\rho := (C'K' B^{\mathrm{(conv)}})^{L'}$,
and $\rho^+ := (1\vee C'K' B^{\mathrm{(conv)}})^{L'}$.
\end{theorem}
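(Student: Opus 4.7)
The plan is to combine a standard oracle inequality for clipped least-squares with a sup-norm covering number bound for $\mathcal{F}^{\mathrm{(CNN)}}$. For the oracle inequality I would invoke a variant of Schmidt-Hieber (2017, Lemma~4): for any hypothesis class $\mathcal{F}$ of functions uniformly bounded by $\|f^\circ\|_\infty$, the clipped ERM estimator satisfies
\begin{align*}
\bbE_{\mathcal{D}} \|\hat f - f^\circ\|^2_{\mathcal{L}^2(\mathcal{P}_X)}
\le C \left( \inf_{f\in \mathcal{F}}\|f-f^\circ\|^2_\infty + \frac{\tilde F^2}{N}\log \mathcal{N}(1/N, \mathcal{F}, \|\cdot\|_\infty) \right),
\end{align*}
for a universal constant $C>0$ and with $\tilde F$ as in the statement. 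Applied to $\mathcal{F} = \{\mathrm{clip}[f] : f \in \mathcal{F}^{\mathrm{(CNN)}}\}$, and noting that clipping is $1$-Lipschitz and so does not increase the sup-norm covering number, this reduces the theorem to two tasks: bounding the approximation error by that of the best FNN and bounding the covering number of $\mathcal{F}^{\mathrm{(CNN)}}$.

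The approximation part is immediate from Theorem~\ref{thm:fnn-to-cnn}: the inclusion $\mathcal{F}^{\mathrm{(FNN)}} \subset \mathcal{F}^{\mathrm{(CNN)}}$ assumed in the statement gives
\begin{align*}
\inf_{f\in \mathcal{F}^{\mathrm{(CNN)}}} \|f - f^\circ\|_\infty \le \inf_{f\in \mathcal{F}^{\mathrm{(FNN)}}} \|f - f^\circ\|_\infty,
\end{align*}
which produces the approximation term on the right-hand side of \eqref{eq:estimation-bound}.

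The main obstacle is the covering number bound, and most of the work goes into controlling the Lipschitz dependence of $\mathrm{CNN}^{\relu}_{\bmtheta}(x)$ on $\bmtheta$, uniformly in $x \in [-1,1]^D$. My plan is a layer-by-layer perturbation argument. First, for a single linear convolution $L^{w}$ the $\ell_\infty\to\ell_\infty$ operator norm is bounded by $C' K' \|w\|_\infty$ because each output coordinate is a sum of at most $C' K'$ entries of the input. Composing $L'$ ReLU convolutions (ReLU is $1$-Lipschitz) gives an operator norm bound $\rho = (C'K' B^{\mathrm{(conv)}})^{L'}$ for each residual block, so each residual block $\mathrm{Conv}_{\bmw_m,\bmb_m}^{\relu} + \id$ is $(1+\rho)$-Lipschitz. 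Iterating over the $M$ residual blocks yields the forward-pass amplification $\varrho = (1+\rho)^M$ on the activation norms at any depth, starting from $\|Px\|_\infty \le 1$. Second, I would track how a perturbation $\delta$ of a single weight propagates to the output: it is amplified by at most $\varrho$ from the forward pass up to the perturbed layer, then by at most $\rho^+ = (1\vee C'K'B^{\mathrm{(conv)}})^{L'}$ through the remaining layers inside the same residual block, then by at most $\varrho$ through the remaining residual blocks, and finally by the $\ell_\infty\to\ell_\infty$ norm of the fully-connected head (bounded by $D C' B^{\mathrm{(fc)}}$). Summing the per-parameter contributions across the $L'$ layers inside a block and across the $M$ blocks, plus a handful of contributions from the padding lift $P$ and the bias/weight of the head, produces the factor $\varrho^+ = 1 + M L' \rho^+$; combining with the forward amplification and the head norm gives the overall Lipschitz constant $\Lambda_1$ up to the dimensional prefactor $(2M+3)C'D$.

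Once the sup-norm Lipschitz constant of $\bmtheta \mapsto \mathrm{CNN}^{\relu}_{\bmtheta}$ is in hand and the total parameter count is seen to equal $\Lambda_2$ (summing $(C')^2 K' + C'$ per convolutional layer over the $M L'$ layers, plus $D C' + 1$ for the fully-connected head), a standard $\ell_\infty$ discretization of each parameter on a grid of mesh $\epsilon/(\Lambda_1 B)$ over the box $[-B,B]^{\Lambda_2}$ yields
\begin{align*}
\log \mathcal{N}(\epsilon, \mathcal{F}^{\mathrm{(CNN)}}, \|\cdot\|_\infty) \le \Lambda_2 \log\!\left(\frac{2 \Lambda_1 B}{\epsilon}\right).
\end{align*}
Substituting $\epsilon = 1/N$ into the oracle inequality produces the $N^{-1} \tilde F^2 \Lambda_2 \log(2\Lambda_1 B N)$ term and completes the proof. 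The most delicate step is the perturbation analysis through the residual architecture: the interplay between the skip connections and the $L'$-deep convolutional sub-blocks must be tracked carefully to recover exactly the constants $\varrho$ and $\varrho^+$, and this is where the $(1+\rho)^M$ blow-up characteristic of ResNets naturally appears.
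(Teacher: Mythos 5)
Your proposal follows essentially the same route as the paper's proof: the oracle inequality of Schmidt-Hieber (Lemma~4) for the clipped ERM estimator, the inclusion $\mathcal{F}^{\mathrm{(FNN)}}\subset\mathcal{F}^{\mathrm{(CNN)}}$ for the approximation term, and a covering-number bound obtained by showing $\bmtheta\mapsto\mathrm{CNN}^{\relu}_{\bmtheta}$ is $\Lambda_1$-Lipschitz in the sup norm via exactly the same layer-by-layer and block-by-block telescoping (operator norm $C'K'\|w\|_\infty$ per convolution, $(1+\rho)$ per residual block, $\varrho\varrho^+$ forward amplification) followed by grid discretization of the $\Lambda_2$ parameters. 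The argument and the constants you recover match the paper's Lemmas on the covering number and parameter perturbation, so the plan is correct.
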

The first term of (\ref{eq:estimation-bound}) is the approximation error achieved by $\mathcal{F}^{\mathrm{(FNN)}}$.
On the other hand, the second term of (\ref{eq:estimation-bound}) represents the model complexity of $\mathcal{F}^{\mathrm{(CNN)}}$ since $\Lambda_1$ and $\Lambda_2$ are determined by the architectural parameters of $\mathcal{F}^{\mathrm{(CNN)}}$ --- $\Lambda_1$ corresponds to the Lipschitz constant of a function realized by a CNN and $\Lambda_2$ is the number of parameters, including zeros, of a CNN.
There is a trade-off between these two terms.
Using appropriately chosen $M$ to balance them, we can evaluate the order of estimation error with respect to the sample size $N$.

\begin{cor}\label{cor:estimation-error-wrt-sample-size}
Under the same assumptions as Theorem \ref{thm:estimation},
suppose further $\log \Lambda_1B = \tilde{O}(1)$ as a function of $M$.
If $\inf_{f\in \mathcal{F}^\mathrm{(FNN)}} \|f-f^\circ\|^2_{\infty} = \tilde{O}(M^{-\gamma_1})$ and
$\Lambda_2 = \tilde{O}(M^{\gamma_2})$ for some constants $\gamma_1, \gamma_2 >0$ independent of $M$,
 then, the clipped ERM estimator $\hat{f}$ of $\mathcal{F}$ achieves the estimation error $\|f^\circ - \hat{f}\|_{\mathcal{L}_2(\mathcal{P}_X)}^2 = \tilde{O}_{P}(N^{-\frac{2\gamma_1}{2\gamma_1 + \gamma_2}})$.
\end{cor}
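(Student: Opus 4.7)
The plan is the textbook bias/complexity trade-off argument applied directly to the estimation bound~(\ref{eq:estimation-bound}) of Theorem~\ref{thm:estimation}. First I absorb the logarithmic factor: the hypothesis $\log\Lambda_1 B = \tilde O(1)$ as a function of $M$, combined with the trivial bound $\log N = \tilde O(1)$, gives $\log(2\Lambda_1 B N) = \tilde O(1)$ once we hide polylog factors in both $M$ and $N$ inside the tilde. Substituting the two growth hypotheses of the corollary, the right-hand side of~(\ref{eq:estimation-bound}) then collapses to
\[
\E{\mathcal D}{\|\hat f - f^\circ\|^2_{\mathcal L^2(\mathcal P_X)}} = \tilde O\!\Big(M^{-\gamma_1} + \frac{M^{\gamma_2}}{N}\Big),
\]
uniformly over the admissible range of $M$, where the first term is the approximation contribution and the second comes from $\Lambda_2 = \tilde O(M^{\gamma_2})$.

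Second, I optimize in $M$. Balancing the two summands by equating $M^{-\gamma_1} \asymp M^{\gamma_2}/N$ yields the optimal block count $M^\star$ as a power of $N$, and substituting $M^\star$ back into either summand delivers the claimed in-expectation rate by a short algebraic computation; rounding $M^\star$ up to the nearest integer only affects the bound by a constant that is absorbed into $\tilde O$. The step requiring real care is to verify that, at $M = M^\star(N)$, the architectural parameters $(L',C',K',B^{(\mathrm{conv})},B^{(\mathrm{fc})})$ supplied by Theorem~\ref{thm:fnn-to-cnn} still satisfy $\log\Lambda_1 B = \tilde O(1)$: recall that $\Lambda_1$ contains the factor $\varrho = (1+\rho)^M$ which is \emph{a priori} exponential in $M$, and the whole purpose of including the $\log\Lambda_1 B = \tilde O(1)$ hypothesis in the corollary is precisely to kill this blow-up. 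I expect this to be the main place where care is needed, though in our intended applications (Barron and H\"older) it will be routine since $L',C',K'$ are absolute constants and the norm parameters enter only through a logarithm.

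Finally, the conversion from the in-expectation bound to the advertised $\tilde O_P$ statement is immediate via Markov's inequality applied to the nonnegative random variable $\|\hat f - f^\circ\|^2_{\mathcal L^2(\mathcal P_X)}$: for any sequence $a_N \to \infty$,
\[
P\!\Big(\|\hat f - f^\circ\|^2_{\mathcal L^2(\mathcal P_X)} \ge a_N \cdot \E{\mathcal D}{\|\hat f - f^\circ\|^2_{\mathcal L^2(\mathcal P_X)}}\Big) \le \tfrac{1}{a_N} \to 0,
\]
which is precisely the $\tilde O_P$ rate at the same order as the expectation bound obtained in the previous step. No further probabilistic machinery is required.
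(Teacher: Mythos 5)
Your overall route --- substitute the growth hypotheses into the bound of Theorem~\ref{thm:estimation}, absorb $\log(2\Lambda_1 B N)$ into the $\tilde{O}$ via the hypothesis $\log\Lambda_1 B=\tilde{O}(1)$, choose $M$ as a power of $N$ to balance the approximation and complexity terms, and pass from the in-expectation bound to $\tilde{O}_P$ by Markov's inequality --- is exactly the paper's proof, which sets $M=\lfloor N^\alpha\rfloor$ and optimizes over $\alpha$. Your extra remarks (re-checking $\log\Lambda_1 B=\tilde{O}(1)$ at the optimized $M^\star(N)$, and the explicit Markov step) are correct and harmless.

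The one step that does not go through as written is the ``short algebraic computation.'' Balancing the two summands of your display, $M^{-\gamma_1}\asymp M^{\gamma_2}/N$, gives $M^\star\asymp N^{1/(\gamma_1+\gamma_2)}$ and hence the rate $N^{-\gamma_1/(\gamma_1+\gamma_2)}$, which is \emph{not} the advertised $N^{-2\gamma_1/(2\gamma_1+\gamma_2)}$; the two exponents coincide only when $\gamma_1\gamma_2=0$. The discrepancy sits in the approximation term: the paper's own proof writes the first summand as $N^{-2\alpha\gamma_1}$ with $M=\lfloor N^\alpha\rfloor$, i.e., it implicitly reads the hypothesis as $\inf_f\|f-f^\circ\|_\infty=\tilde{O}(M^{-\gamma_1})$ (unsquared), so that the squared approximation error contributes $M^{-2\gamma_1}$. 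This reading is also forced by the applications (Barron with $\gamma_1=\tfrac12+\tfrac1D$, H\"older with $\gamma_1=\tfrac{\beta}{D}$, both being exponents of unsquared sup-norm approximation rates). Taking the corollary's hypothesis literally, as your display does, the claimed exponent is unreachable. So you should either replace the first summand by $M^{-2\gamma_1}$ (flagging the squaring typo in the statement), or accept that your balance point and final rate come out different from the one claimed. Everything else in the proposal is fine.
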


\section{Application}

\subsection{Barron Class}\label{sec:barron}

The Barron class is an example of the function class that can be approximated by block-sparse FNNs.
We employ the definition of Barron functions used in~\cite{klusowski2018approximation}.
\begin{definition}[Barron class]\label{def:barron-class}
We call a measurable function $f^\circ:[-1, 1]^D\to \bbR$ a Barron function of a parameter $s > 0$ if $f^\circ$ admits the Fourier representation (i.e., $f^\circ(x) = \check{\mathcal{F}} \mathcal{F}[f^\circ]$) and $\int_{\bbR^D} \|w\|_2^s \left| \mathcal{F}[f^\circ](w) \right| \mathrm{d}w < \infty$.
Here, $\mathcal{F}$ and $\check{\mathcal{F}}$ are the Fourier and inverse Fourier transformations, respectively.
\end{definition}

Klusowski and Barron \yrcite{klusowski2018approximation} studied approximation of the Barron function $f^\circ$ with the parameter $s=2$ by a linear combination of $M$ ridge functions (i.e., a $2$-layered ReLU FNN). Specifically, they showed that there exists a function $f_M$ of the form
\begin{align}
    f_M := f^{\circ}(0) + \nabla f^{\circ\top} (0) x + \frac{1}{M}\sum_{m=1}^M b_m (a_m^\top x - t_m)_+ \label{eq:resaling-barron}
\end{align}
with $|b_m| \leq 1$, $\|a_m\|_1 = 1$, and $|t_m|\leq 1$, such that $\|f^{\circ} - f_M\|_\infty = \tilde{O}(M^{-\left(\frac{1}{2} + \frac{1}{D}\right)})$.
Using this approximator $f_M$, we can derive the same approximation order using CNNs by applying Theorem \ref{thm:fnn-to-cnn} with $L = 1$ and $C = 1$.

\begin{cor}\label{cor:approximation-barron}
Let $f^\circ:[-1, 1]^{D}\to \bbR$ be a Barron function with the parameter $s=2$ such that $f^{\circ}(0) = 0$ and $\nabla f^\circ(0) = \bmzero_D$.
Then, for any $K \in \{2, \ldots, D\}$, there exists a CNN $f^{(\mathrm{CNN})}$ with $M$ residual blocks, each of which has depth $O(1)$ and at most $4$ channels, and whose filter size is at most $K$, such that $\|f^\circ - f^{(\mathrm{CNN})}\|_\infty = \tilde{O}(M^{-\left(\frac{1}{2} + \frac{1}{D}\right)})$.
\end{cor}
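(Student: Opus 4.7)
The plan is a direct application of Theorem \ref{thm:fnn-to-cnn} to the Klusowski--Barron approximator for the Barron class with $s=2$, where the extra vanishing hypotheses at the origin reduce the target to a pure 2-layer ReLU FNN that is already in block-sparse form.

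First, I would invoke the Klusowski--Barron bound recalled in (\ref{eq:resaling-barron}): there exists an approximator $f_M$ of that form with $\|f^\circ - f_M\|_\infty = \tilde{O}(M^{-(1/2 + 1/D)})$, where the coefficients satisfy $|b_m|\leq 1$, $\|a_m\|_1 = 1$, and $|t_m|\leq 1$. Under the assumptions $f^\circ(0) = 0$ and $\nabla f^\circ(0) = \bmzero_D$, the constant and linear terms in (\ref{eq:resaling-barron}) drop out, leaving
$$f_M(x) = \frac{1}{M}\sum_{m=1}^M b_m (a_m^\top x - t_m)_+ .$$

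Next, I would recognize $f_M$ as an element of $\mathcal{F}^{\mathrm{(FNN)}}_{M, 1, 1, B^{\mathrm{(bs)}}, B^{\mathrm{(fin)}}}$ in the sense of Definition \ref{def:block-sparse-fnn}, with $M$ blocks, per-block depth $L=1$, and per-block width $C=1$: the $m$-th block is the single ReLU unit $x \mapsto (a_m^\top x - t_m)_+$, so $W_m^{(1)} = a_m^\top \in \bbR^{1\times D}$ and $b_m^{(1)} = t_m$, while the final linear layer uses $w_m = b_m/M$ and $b=0$. Since $\|a_m\|_1 = 1$ gives $\|a_m\|_\infty \leq 1$ and the other coefficients are bounded by $1$, we may take $B^{\mathrm{(bs)}} = B^{\mathrm{(fin)}} = 1$.

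Finally, I would apply Theorem \ref{thm:fnn-to-cnn} with $L=C=1$ and the given $K \in \{2,\ldots,D\}$. This yields a CNN that realizes $f_M$ exactly, with $M$ residual blocks, per-block depth $L' \leq 1 + \lceil (D-1)/(K-1) \rceil$, channel size $C' \leq 4$, filter size $K' \leq K$, and norm parameters $B^{\mathrm{(conv)}}, B^{\mathrm{(fc)}} = O(1)$. Because $D$ and $K$ are fixed independently of $M$, the per-block depth is $O(1)$ in $M$, and since the CNN reproduces $f_M$ pointwise we conclude $\|f^\circ - f^{\mathrm{(CNN)}}\|_\infty = \|f^\circ - f_M\|_\infty = \tilde{O}(M^{-(1/2 + 1/D)})$, as claimed. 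The main thing to verify is the norm bookkeeping so that Theorem \ref{thm:fnn-to-cnn} applies with constants independent of $M$; this is automatic from the $\ell_1$-normalization of the ridge directions $a_m$ and the $1/M$ rescaling of the output weights, and no further approximation work is needed since the CNN reproduces the FNN exactly.
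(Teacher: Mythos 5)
Your proposal is correct and follows essentially the same route as the paper: invoke the Klusowski--Barron approximator $f_M$, observe that (after the vanishing conditions at the origin kill the affine part) it is a block-sparse FNN with $M$ blocks of depth $1$ and width $1$, and apply Theorem~\ref{thm:fnn-to-cnn} to realize it exactly by a ResNet-type CNN with $O(1)$-depth blocks and at most $4$ channels. The only difference is your normalization $B^{\mathrm{(bs)}}=B^{\mathrm{(fin)}}=1$ versus the paper's rescaled choice $B^{\mathrm{(bs)}}=M^{-1}$, $B^{\mathrm{(fin)}}=1$; this is immaterial for the approximation statement (the paper itself calls your choice the ``naive'' one) and only becomes relevant when controlling $\Lambda_1$ for the estimation bound in Corollary~\ref{cor:estimation-barron}.
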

Note that this rate is same as the one obtained for FNNs \cite{klusowski2018approximation}.

We have one design choice when we apply Corollary \ref{cor:estimation-error-wrt-sample-size} in order to derive the estimation error: how to set $B^{\mathrm{(bs)}}$ and $B^{\mathrm{(fin)}}$?
Looking at the definition of $f_M$, a naive choice would be $B^{\mathrm{(bs)}} := 1$ and $B^{\mathrm{(fin)}} := M^{-1}$.
However, this cannot satisfy the assumption on $\Lambda_1$ of Corollary \ref{cor:estimation-error-wrt-sample-size}, due to the term
$\varrho = (1+\rho)^{M}$. 
We want the logarithm of $\Lambda_1$ to be $\tilde{O}(1)$ as a function of $M$.
To do that, we change the \textit{relative scale} between parameters in the block-sparse part and the fully-connected part using the homogeneous property of the ReLU function: $\relu(ax) = a\relu(x)$ for $a > 0$.
The rescaling operation enables us to choose $B^{\mathrm{(bs)}} := M^{-1}$ and $B^{\mathrm{(fin)}} = 1$ to meet the assumption of Corollary \ref{cor:estimation-error-wrt-sample-size}.
By setting $\gamma_1= \frac{1}{2} + \frac{1}{D}$ and $\gamma_2 = 1$, we obtain the desired estimation error.

\begin{cor}\label{cor:estimation-barron}
Let $f^\circ:[-1, 1]^{D}\to \bbR$ be a Barron function with the parameter $s=2$ such that $f^{\circ}(0) = 0$ and $\nabla f^\circ(0) = \bmzero_D$.
Let $K\in \{2, \ldots, D\}$. There exist the number of residual blocks $M=O(N^{\frac{D}{2+2D}})$, depth of each residual block $L = O(1)$, channel size $C=O(1)$, and norm bounds $B^{\mathrm{(conv)}}, B^{\mathrm{(fc)}} > 0$ such that for sufficiently large $N$, the clipped ERM estimator $\hat{f}$ of $\{\mathrm{clip}[f] \mid  f \in \mathcal{F}^{\mathrm{(CNN)}}_{M, L, C, K, B^{\mathrm{(conv)}}, B^{\mathrm{(fc)}}} \}$ achieves the estimation error $\|f^\circ - \hat{f}\|_{\mathcal{L}_2(\mathcal{P}_X)}^2 = \tilde{O}_{P}(N^{-\frac{D+2}{2(D+1)}})$.
\end{cor}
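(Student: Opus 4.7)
The plan is to reduce the corollary to an application of Corollary \ref{cor:estimation-error-wrt-sample-size} with the approximation rate from Corollary \ref{cor:approximation-barron}, after a careful choice of the norm bounds $B^{\mathrm{(bs)}}$ and $B^{\mathrm{(fin)}}$. Concretely, Klusowski and Barron's approximator $f_M$ in (\ref{eq:resaling-barron}) is a block-sparse FNN in the sense of Definition \ref{def:block-sparse-fnn} with $M$ blocks of depth $L=1$ and width $C=1$: the $m$-th block is the scalar ridge function $(a_m^\top x - t_m)_+$, and the final linear combination uses the weights $b_m/M$ together with the affine term $f^\circ(0)+\nabla f^\circ(0)^\top x$, which vanishes by hypothesis. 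Thus $f_M\in \mathcal{F}^{\mathrm{(FNN)}}_{M,1,1,B^{\mathrm{(bs)}},B^{\mathrm{(fin)}}}$ provided $B^{\mathrm{(bs)}}\geq \max(\|a_m\|_\infty, |t_m|)$ and $B^{\mathrm{(fin)}}\geq \max_m |b_m|/M$, and the approximation bound of Klusowski--Barron gives $\|f^\circ - f_M\|_\infty = \tilde O(M^{-(1/2+1/D)})$.

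The central design issue is the choice of these two norm bounds. The naive choice $B^{\mathrm{(bs)}}=1$, $B^{\mathrm{(fin)}}=M^{-1}$ matches the literal form of $f_M$, but it forces $\rho=(C'K'B^{\mathrm{(conv)}})^{L'}$ in Theorem \ref{thm:estimation} to be a positive constant, so $\varrho=(1+\rho)^M$ grows exponentially in $M$ and the hypothesis $\log\Lambda_1 B=\tilde O(1)$ of Corollary \ref{cor:estimation-error-wrt-sample-size} fails. I would resolve this with the rescaling trick sketched in the preceding paragraph of the paper: using the positive homogeneity $\relu(ax)=a\relu(x)$ for $a>0$, one checks that $\frac{b_m}{M}(a_m^\top x - t_m)_+ = b_m\,\relu\!\bigl(\tfrac{1}{M}a_m^\top x - \tfrac{1}{M}t_m\bigr)$, so $f_M$ can be re-expressed as a block-sparse FNN with $B^{\mathrm{(bs)}}=M^{-1}$ and $B^{\mathrm{(fin)}}=1$. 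Invoking Theorem \ref{thm:fnn-to-cnn} with $L=1$, $C=1$, and any filter size $K\in\{2,\dots,D\}$ then realises $f_M$ as a CNN with $M$ residual blocks of depth $L'=O(1)$, channel size $C'\leq 4$, filter size $K'\leq K$, $B^{\mathrm{(conv)}}=M^{-1}$, and $B^{\mathrm{(fc)}}=M^{-1}\vee 1=1$.

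The main verification is then the magnitude of the quantities in Theorem \ref{thm:estimation} under this rescaled parameterisation. Since $C'K'B^{\mathrm{(conv)}}=O(M^{-1})$ and $L'\geq 1$, I expect $\rho=O(M^{-1})$, which makes $\varrho=(1+\rho)^M$ bounded by a universal constant (it converges to a quantity of the form $e^{O(1)}$), while $\rho^+=O(1)$ gives $\varrho^+=O(M)$. Combining these with the $(2M+3)C'D$ prefactor yields $\Lambda_1=O(M^2)$ and $B=O(1)$, hence $\log(\Lambda_1 B)=O(\log M)=\tilde O(1)$ as required. The parameter count satisfies $\Lambda_2=ML'(C'^2K'+C')+C'D+1=O(M)$, so we may take $\gamma_2=1$.

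Finally, feeding $\gamma_1=\frac{1}{2}+\frac{1}{D}=\frac{D+2}{2D}$ and $\gamma_2=1$ into Corollary \ref{cor:estimation-error-wrt-sample-size} gives the estimation rate $\tilde O_P(N^{-2\gamma_1/(2\gamma_1+\gamma_2)})=\tilde O_P(N^{-(D+2)/(2(D+1))})$, balanced by the choice $M=\Theta(N^{1/(2\gamma_1+\gamma_2)})=\Theta(N^{D/(2(D+1))})$, which matches the claimed scaling of $M$ in the statement. The hard part is not any single calculation but the conceptual step of noticing that ReLU homogeneity allows a \emph{relative} reallocation of magnitude between $B^{\mathrm{(bs)}}$ and $B^{\mathrm{(fin)}}$ that realises the same function while keeping the Lipschitz-type factor $\varrho$ bounded; without this, the generalisation bound from Theorem \ref{thm:estimation} is vacuous for this class.
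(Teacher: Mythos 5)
Your proposal follows essentially the same route as the paper's own proof: rescale $f_M$ via ReLU homogeneity so that $B^{\mathrm{(bs)}}=M^{-1}$ and $B^{\mathrm{(fin)}}=1$, realize it as a CNN via Theorem \ref{thm:fnn-to-cnn} with $L=C=1$, check $\varrho=(1+\rho)^M=O(1)$ and $\Lambda_2=O(M)$, and invoke Corollary \ref{cor:estimation-error-wrt-sample-size} with $\gamma_1=\frac{1}{2}+\frac{1}{D}$, $\gamma_2=1$. One small slip: Theorem \ref{thm:fnn-to-cnn} gives $B^{\mathrm{(fc)}}=B^{\mathrm{(fin)}}(1\vee(B^{\mathrm{(bs)}})^{-1})=M$, not $1$ (the final fully-connected layer must undo the $M^{-1}$ scaling), so $B=O(M)$ and $\Lambda_1=O(M^3)$ rather than $O(M^2)$ --- but this is harmless since the hypothesis only requires $\log(\Lambda_1 B)=\tilde{O}(1)$.
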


\subsection{H\"older Class}

We next consider the approximation and error rates of CNNs when the true function $f^\circ$ is an H\"older function.

\begin{definition}[H\"older class]\label{def:holder-class}
Let $\beta > 0$. A function $f^\circ:[-1, 1]^D \to \bbR$ is called a $\beta$-H\"older function if
\begin{align*}
    \|f^{\circ}\|_\beta &:= \sum_{0\leq |\alpha| < \floor{\beta}} \left\|\partial^\alpha f^\circ \right\|_\infty\\
        & \quad + \sum_{|\alpha| = \floor{\beta}} \sup_{x\not = y} \frac{|\partial^\alpha f^\circ(x) - \partial^\alpha f^\circ(y)|}{|x-y|^{\beta - \floor{\beta}}} < \infty.
\end{align*}
Here, $\alpha = (\alpha_1, \ldots, \alpha_D)$ is a multi-index. That is,
$\partial^\alpha f := \frac{\partial^{|\alpha|} f }{\partial x^{\alpha_1}_1\cdots \partial x^{\alpha_D}_D}$ and $|\alpha| := \sum_{d=1}^D \alpha_d$.
\end{definition}

Yarotsky \yrcite{yarotsky2017error} showed that FNNs with $S$ non-zero parameters can approximate any $D$ variate $\beta$-H\"older function with the order of $\tilde{O}(S^{-\frac{\beta}{D}})$.
Schmidt-Hieber \yrcite{schmidt2017nonparametric} also proved a similar statement using a different construction method.
They only specified the width\footnote{Yarotsky~\yrcite{yarotsky2017error} didn't specify the width of FNNs.}, depth, and non-zero parameter counts of the approximating FNN and did not write in detail how non-zero parameters are distributed in the statements explicitly (see Theorem 1 of~\cite{yarotsky2017error} and Theorem 5 of~\cite{schmidt2017nonparametric}).
However, if we carefully look at their proofs, we can transform the FNNs they constructed into block-sparse ones (see Lemma \ref{lem:schmidt-hieber-thm5} of the supplemental material).
Therefore, we can apply Theorem \ref{thm:fnn-to-cnn} to these FNNs.
To meet the assumption of Corollary \ref{cor:estimation-error-wrt-sample-size}, we again rescale the parameters of the FNNs, as we did in the Barron-class case so that $\log \Lambda_1 = \tilde{O}(1)$.
We can derive the approximation and estimation errors by setting $\gamma_1=\frac{\beta}{D}$ and $\gamma_2=1$.

\begin{cor}\label{cor:approximation-holder}
    Let $\beta > 0$ and $f^\circ: [-1, 1]^D \to \bbR$ be a $\beta$-H\"older function.
    Then, for any $K \in \{2, \ldots, D\}$, there exists a CNN $f^{\mathrm{(CNN)}}$ with $O(M)$ residual blocks, each of which has depth $O(\log M)$ and $O(1)$ channels, and whose filter size is at most K, such that $\|f^\circ - f^{(\mathrm{CNN})}\|_\infty = \tilde{O}(M^{-\frac{\beta}{D}})$.
\end{cor}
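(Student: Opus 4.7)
The plan is to combine the existing FNN approximation result for H\"older functions with our transfer theorem (Theorem~\ref{thm:fnn-to-cnn}), so the task reduces to exhibiting a \emph{block-sparse} FNN approximator with the right parameter budget per block, then invoking the CNN emulation.

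First I would invoke the FNN approximation for the H\"older class due to Schmidt-Hieber~\yrcite{schmidt2017nonparametric} (or Yarotsky~\yrcite{yarotsky2017error}); for a $\beta$-H\"older function $f^\circ$ on $[-1,1]^D$ there is an FNN with $S = \tilde{O}(M)$ non-zero parameters, width $\tilde{O}(1)$, and depth $\tilde{O}(\log M)$, achieving $\|f^\circ - f^{\mathrm{(FNN)}}\|_\infty = \tilde{O}(M^{-\beta/D})$. The trick is that these constructions are not a priori block-sparse, but their internal structure (a sum of many small approximants of local polynomials on a grid of size $\sim M$ covering $[-1,1]^D$) can be reorganized. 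This reorganization is precisely what the auxiliary Lemma~\ref{lem:schmidt-hieber-thm5} in the supplement is for: it rewrites the FNN as a sum of $M' = O(M)$ parallel dense blocks, each of depth $L = O(\log M)$ and constant width $C = O(1)$, followed by one fully-connected layer collecting the outputs.

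Next I would apply Theorem~\ref{thm:fnn-to-cnn} to this block-sparse FNN with parameters $M' = O(M)$, $L = O(\log M)$, $C = O(1)$, and filter size $K \in \{2,\ldots,D\}$. The theorem produces a ResNet-type CNN with exactly $M'$ residual blocks, depth per residual block $L' \leq L + \lceil (D-1)/(K-1)\rceil = O(\log M)$ (since the padding depth $L_0$ depends only on $D,K$ and is constant in $M$), channel count $C' \leq 4C = O(1)$, and filter size at most $K$. By the last sentence of Theorem~\ref{thm:fnn-to-cnn}, the norm bounds of the CNN are controlled by those of the original FNN, so the constructed CNN realizes exactly the same function as the FNN. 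Hence $\|f^\circ - f^{(\mathrm{CNN})}\|_\infty = \|f^\circ - f^{(\mathrm{FNN})}\|_\infty = \tilde{O}(M^{-\beta/D})$, which matches the claimed rate.

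The main obstacle is step two: ensuring that the FNN constructions in the literature really can be rewritten in block-sparse form with dense blocks of depth $O(\log M)$ and constant width, without blowing up the approximation error or the number of blocks. This is not stated explicitly in~\cite{yarotsky2017error,schmidt2017nonparametric}, and must be extracted from the proofs --- essentially by noting that the FNN is built from $O(M)$ localized basis approximants (bump/polynomial components), each implemented by a small sub-FNN with constant width and logarithmic depth, whose outputs are then linearly combined. Once this structural observation is packaged into Lemma~\ref{lem:schmidt-hieber-thm5}, the remainder of the argument is a direct application of Theorem~\ref{thm:fnn-to-cnn} and adds nothing beyond constant factors to depth, channel count, or filter size.
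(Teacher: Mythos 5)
Your proposal is correct and follows essentially the same route as the paper: construct a block-sparse FNN approximator by reorganizing the Schmidt-Hieber local-Taylor-expansion construction into $O(M)$ parallel dense blocks of constant width and depth $O(\log M)$ (this is exactly Lemma~\ref{lem:schmidt-hieber-thm5}), then apply Theorem~\ref{thm:fnn-to-cnn} to emulate it with a ResNet-type CNN. The only extra ingredient in the paper's proof is a parameter-rescaling step (Lemma~\ref{lem:scaling}) applied before the CNN conversion, but since rescaling leaves the realized function unchanged it is needed only for the estimation-error corollary, not for the approximation rate claimed here.
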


\begin{cor}\label{cor:estimation-holder}
Let $\beta > 0$ and $f^\circ: [-1, 1]^D \to \bbR$ be a $\beta$-H\"older function.
For any $K\in \{2, \ldots, D\}$, there exist the number of residual blocks $M=O(N^{\frac{D}{2\beta + D}})$, depth of each residual block $L = O(\log N)$, channel size $C=O(1)$, and norm bounds $B^{\mathrm{(conv)}}, B^{\mathrm{(fc)}} > 0$ such that for sufficiently large $N$, the clipped ERM estimator $\hat{f}$ of $\{\mathrm{clip}[f] \mid  f \in \mathcal{F}^{\mathrm{(CNN)}}_{M, L, C, K, B^{\mathrm{(conv)}}, B^{\mathrm{(fc)}}} \}$ achieves the estimation error $\|f^\circ - \hat{f}\|_{\mathcal{L}_2(\mathcal{P}_X)}^2 = \tilde{O}_{P}(N^{-\frac{2\beta}{2\beta + D}})$.
\end{cor}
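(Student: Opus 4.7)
The plan is to invoke Corollary \ref{cor:estimation-error-wrt-sample-size} with parameters dictated by the H\"older approximation bound of Corollary \ref{cor:approximation-holder}. That corollary supplies, for each $M$, a CNN in $\mathcal{F}^{\mathrm{(CNN)}}$ with $O(M)$ residual blocks, depth $L'=O(\log M)$, channel size $C'=O(1)$, and filter size $K'\le K$ that approximates $f^\circ$ in sup-norm at rate $\tilde O(M^{-\beta/D})$; the underlying block-sparse FNN comes from the Yarotsky/Schmidt-Hieber construction reformulated in Lemma~\ref{lem:schmidt-hieber-thm5} of the supplement. This identifies the approximation exponent $\gamma_1=\beta/D$. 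A direct count gives $\Lambda_2 = ML'({C'}^2K'+C')+C'D+1 = O(M\log M) = \tilde O(M)$, so $\gamma_2=1$. Corollary \ref{cor:estimation-error-wrt-sample-size} then yields the rate $\tilde O_P(N^{-2\gamma_1/(2\gamma_1+\gamma_2)}) = \tilde O_P(N^{-2\beta/(2\beta+D)})$, with the standard bias/variance balance forcing $M\asymp N^{D/(2\beta+D)}$.

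The substantive step is to verify the hypothesis $\log(\Lambda_1 B)=\tilde O(1)$ in $M$. Since $\Lambda_1$ contains $\varrho=(1+\rho)^M$ with $\rho=(C'K'B^{\mathrm{(conv)}})^{L'}$, using the weights of the Schmidt-Hieber block-sparse FNN as supplied would give $C'K'B^{\mathrm{(conv)}}=\Theta(1)$, a polynomial $\rho$, and a super-polynomial $\varrho$. I fix this by rescaling the block-sparse FNN \emph{before} invoking Theorem \ref{thm:fnn-to-cnn}, exploiting ReLU positive homogeneity $\relu(az)=a\,\relu(z)$ for $a>0$: in each block, I multiply every interior weight matrix and bias $(W_m^{(l)},b_m^{(l)})$ by a common factor $\alpha\in(0,1)$, which scales the block output by $\alpha^{L}$, and compensate by multiplying the readout $w_m$ by $\alpha^{-L}$. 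Choosing $\alpha$ a small constant so that $C'K'B^{\mathrm{(conv)}}\le 1/2$ and taking the implicit constant in $L'=\Theta(\log M)$ large enough forces $\rho\le 2^{-L'}\le M^{-2}$, hence $\log\varrho\le M\rho=o(1)$. The compensating factor $\alpha^{-L}$ is only polynomial in $M$, so after the conversion $B^{\mathrm{(fc)}}=B^{\mathrm{(fin)}}(1\vee(B^{\mathrm{(bs)}})^{-1})$ is polynomial in $M$, and $\varrho^+=1+ML'$ is $\tilde O(M)$; together they contribute only $O(\log M)$ to $\log(\Lambda_1 B)$, as required.

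The main obstacle is precisely this norm control, not the approximation: Corollary \ref{cor:approximation-holder} already delivers the approximation rate, but its proof produces only \emph{some} CNN of the stated architecture, not one whose weights have the fine norm profile Theorem \ref{thm:estimation} demands. It is crucial that each block has depth $L=\Theta(\log M)$: this provides enough interior layers to shrink $\rho$ geometrically below $1/M$ using a constant rescaling factor, something a shallow ($L=O(1)$) construction cannot achieve without letting $B^{\mathrm{(conv)}}$ decay polynomially in $M$ (which works for Barron but would be wasteful here). Finally, the qualifier ``for sufficiently large $N$'' merely absorbs the mild covering-number-$\ge 2$ precondition of Theorem \ref{thm:estimation}, which holds once $M\to\infty$.
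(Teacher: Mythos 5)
Your proposal is correct and follows essentially the same route as the paper: the block-sparse FNN from Lemma \ref{lem:schmidt-hieber-thm5}, a ReLU-homogeneity rescaling (the paper's Lemma \ref{lem:scaling}) chosen so that $\rho \le M^{-1}$ and hence $\varrho = (1+\rho)^M = O(1)$ while $B^{\mathrm{(fc)}}$ stays only polynomial in $M$, followed by the FNN-to-CNN conversion of Theorem \ref{thm:fnn-to-cnn} and an application of Corollary \ref{cor:estimation-error-wrt-sample-size} with $\gamma_1=\beta/D$, $\gamma_2=1$ and $M\asymp N^{D/(2\beta+D)}$. The only imprecision is cosmetic: scaling every interior weight \emph{and} bias by the same constant $\alpha$ does not scale the block output by exactly $\alpha^{L}$ — the bias of layer $l$ must carry the cumulative factor (as in Lemma \ref{lem:scaling}, where ${b'}_m^{(l)}$ is scaled by a power depending on $l$) — but this does not affect the argument.
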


Since the estimation error rate of the $\beta$-H\"older class is $O_P(N^{-\frac{2\beta}{2\beta + D}})$ (see, e.g.,~\cite{Tsybakov:2008:INE:1522486}),
Corollary \ref{cor:estimation-holder} implies that our CNN can achieve the minimax optimal rate up to logarithmic factors even though it can be dense and its width $D$, channel size $C$, and filter size $K$ are constant with respect to the sample size $N$.

\section{Optimal CNNs with Constant-depth Blocks}

In the previous section, we proved the optimality of dense and narrow ResNet-type CNNs for the H\"older class. However, the constructed CNN can have residual blocks whose depth is as large as $O(\log N)$. Such an architecture differs from practically successful ResNets because they usually have relatively shallow (e.g., 2- or 3-layered) networks as residual blocks. We hypothesize that the essence of the problem resides in the difference of scales between identity connections and residual blocks. Therefore, we consider another type of CNNs that admits scaling schemes of intermediate signals in order to overcome this problem. Among others, we consider the simplest scaling method, which zeros out some channels in identity mappings.
\begin{definition}[Masked CNNs]\label{def:masked-cnn-def}
Let $M, L, C, K\in \bbN_{+}$.
Let $w_m^{(l)}\in \bbR^{K\times C\times C}$, $b_m^{(l)}\in \bbR^{C}$, $W\in \bbR^{DC \times 1}$ and $b\in \bbR$ be parameters of CNNs for $m\in [M]$ and $l\in [L]$.
Let $z_m = (z_{m, 1}, \ldots, z_{m, C}) \in \{0, 1\}^{C}$ be a mask for the $m$-th identity mapping.
For $\bmtheta:=((w_m^{(l)})_{m,l}, (b_m^{(l)})_{m, l}, W, b, (z_m)_m)$ and an activation function $\sigma: \bbR \to \bbR$, we define  $\mathrm{mCNN}^{\sigma}_{\bmtheta}:\bbR^{D}\to \bbR^{D}$, the masked CNN constructed from $\bmtheta$, by
\begin{align*}
    \mathrm{mCNN}^{\sigma}_\bmtheta := \mathrm{FC}^{\id}_{W, b} & \circ 
    (\mathrm{Conv}^{\sigma}_{\bmw_M, \bmb_M}+ J_M)  \circ 
    \cdots \\&
    \circ (\mathrm{Conv}^{\sigma}_{\bmw_1, \bmb_1}+ J_1) \circ
    P,
\end{align*}
where
$J_{m}: \bbR^{D\times C}\to \bbR^{D\times C}$ is a channel wise mask operation defined by  $[x_1\ \cdots\ x_C] \mapsto [z_{m, 1} x_1 \ \cdots \ z_{m, C} x_C]$.
\end{definition}
By definition, plain ResNet-type CNNs in Definition \ref{def:cnn-def} are a special case of masked CNNs.
Note that we do not restrict the number of non-zero mask elements.
Therefore, although masks take discrete values, we can obtain approximated ERM estimators via sparse optimization techniques.
We say $\bmtheta$ is compatible with $(C, K)$ when $\bmtheta$ satisfies the dimension conditions as we did in Definition \ref{def:cnn-def}.
We define $\mathcal{G}_{M, L, C, K, B^{\mathrm{(conv)}}, B^{\mathrm{(fc)}}}$ by
\begin{align*}
    \left\{\mathrm{mCNN}_{\bmtheta}^{\relu}\ 
        \begin{tabular}{|l}
            $\mathrm{mCNN}_{\bmtheta}^{\relu}$ has $M$ residual blocks, \\
            depth of each residual block is $L$, \\
            $\bmtheta$ is compatible with $(C, K)$, \\
            $\max_{m, l}\|w_m^{(l)}\|_{\infty} \vee \|b_m^{(l)}\|_\infty \leq B^{\mathrm{(conv)}}$,\\
            $\|W\|_{\infty} \vee \|b\|_\infty \leq B^{\mathrm{(fc)}}$
        \end{tabular}
    \right\}.
\end{align*}
The above definition treats the mask pattern $z = (z_m)_m$ as learnable parameters. We can also treat $z$ as fixed during training and search for the best $z$ as an architecture search.
The following theorems show that masked CNNs can approximate and estimate any H\"older function optimally even if the depth of residual blocks is specified \textit{a priori}.
We treat $L$ as a constant against $M$ in the theorems.
\begin{theorem}\label{thm:constant-depth-resnet-approximation}
    Let $f^\circ: [-1, 1]^D \to \bbR$ be a $\beta$-H\"older function.
    For any $K \in \{2, \ldots, D\}$ and $L \in \bbN_{+}$, there exists a CNN $f^{\mathrm{(CNN)}}$ with $O(M\log M)$ residual blocks, each of which has depth $L$ and $O(1)$ channels, and whose filter size is at most $K$, such that $\|f^\circ - f^{(\mathrm{CNN})}\|_\infty = \tilde{O}(M^{-\frac{\beta}{D}})$.
\end{theorem}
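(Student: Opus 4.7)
The plan is to bootstrap from Corollary~\ref{cor:approximation-holder}, which already produces a ResNet-type CNN with $M$ residual blocks of depth $L' = O(\log M)$, $O(1)$ channels, and filter size at most $K$, achieving approximation error $\tilde{O}(M^{-\beta/D})$. I will show that the very same function on $[-1,1]^D$ can be realized by a \emph{masked} CNN whose residual blocks all have the prescribed constant depth $L$, at the cost of inflating the total block count to $Mk = O(M\log M)$ with $k := \lceil L'/L\rceil$. Because the masked CNN reproduces the original CNN pointwise, the bound $\tilde{O}(M^{-\beta/D})$ is inherited verbatim, the channel count remains $O(1)$, and the filter size remains at most $K$, giving the theorem.

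For each depth-$L'$ residual block $(\mathrm{Conv}_m + \id)$ of the CNN from Corollary~\ref{cor:approximation-holder}, split $\mathrm{Conv}_m$ into $k$ consecutive depth-$L$ pieces $g_1^{(m)}, \ldots, g_k^{(m)}$, and replace the block by $k$ masked residual sub-blocks. I set the mask $z_i = \bmzero$ for $i = 1,\ldots,k-1$; by Definition~\ref{def:masked-cnn-def}, such a sub-block reduces to the pure feedforward convolution $g_i^{(m)}$, so the composition of the first $k-1$ sub-blocks reproduces the first $(k-1)L$ layers of $\mathrm{Conv}_m$. The final sub-block ($i=k$) uses a mask $z_k$ equal to $1$ on a dedicated set of ``carry'' channels (introduced below) and $0$ elsewhere; its convolution completes the remaining $L$ layers of $\mathrm{Conv}_m$, while the masked skip supplies exactly the intermediate signal $x_{m-1}$, the original input to the emulated block. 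By construction the output of sub-block $k$ equals $x_{m-1} + \mathrm{Conv}_m(x_{m-1})$, matching the original residual block.

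The essential issue---and the main technical obstacle---is that once the skip connections of sub-blocks $1,\ldots,k-1$ are disabled, the tensor $x_{m-1}$ is no longer transported automatically to sub-block $k$. I resolve this by enlarging the channel count by a constant factor: a block of auxiliary carry channels is reserved to store $x_{m-1}$, and each $g_i^{(m)}$ is designed so that it acts as the identity on the carry channels while still advancing the working computation on the remaining channels. Since $x_{m-1}$ generally takes both signs and must survive the $L$ successive $\relu$ activations inside each sub-block without a skip to rescue it, I use the standard decomposition $u = \relu(u) - \relu(-u)$ to store its positive and negative parts in separate channels; this at most doubles the carry channels and keeps the total channel count $O(1)$. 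Careful weight choices additionally guarantee that the working and carry channels do not contaminate one another. With these pieces in place, the remaining bookkeeping---counting the $Mk = O(M\log M)$ sub-blocks, verifying that each has depth $L$, $O(1)$ channels, and filter size at most $K$, and transferring the approximation error from Corollary~\ref{cor:approximation-holder}---is immediate.
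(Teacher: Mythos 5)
Your overall strategy --- reduce to Corollary~\ref{cor:approximation-holder} and then chop each depth-$O(\log M)$ residual block into $\lceil L'/L\rceil$ depth-$L$ masked sub-blocks while ferrying the block input across them in $O(1)$ auxiliary channels --- is the same as the paper's (this is exactly Lemma~\ref{lem:resblock-division}), but the mechanism you choose for the ferrying has a concrete defect. You store the block input $x_{m-1}$ as the pair $\bigl((x_{m-1})_+,(x_{m-1})_-\bigr)$ in carry channels and then claim that the masked skip of the last sub-block ``supplies exactly $x_{m-1}$,'' so that its output equals $x_{m-1}+\mathrm{Conv}_m(x_{m-1})$. A skip connection can only add channel-wise: with $z_k$ equal to $1$ on the carry channels it adds $(x_{m-1})_+$ and $(x_{m-1})_-$ back into those same carry channels, and never forms the difference $(x_{m-1})_+-(x_{m-1})_-$ nor deposits anything into the working channels where $\mathrm{Conv}_m(x_{m-1})$ lives. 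So the claimed output is not produced as written; you would need the final (linear) layer of $g_k^{(m)}$ to read the carry channels with weights $\pm1$ and perform the recombination itself, and you would also need to clear the carry channels before the next emulated block starts, since stale copies of $x_{m-1}$ would otherwise contaminate block $m+1$. Both repairs are routine, so this is a fixable bookkeeping gap rather than a wrong approach, but the central equality of your second paragraph does not hold for the construction you describe.

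It is worth contrasting this with how the paper sidesteps the issue. Lemma~\ref{lem:resblock-division} keeps the \emph{full} identity skip on every computation sub-block, so the original input rides along in a dedicated channel group in signed form, untouched by the ReLUs, with no positive/negative splitting and no weight-$1$ identity filters; the price is $S_0-1$ interleaved ``zero'' sub-blocks whose masked skips merely erase the stale working buffer, giving $2S_0-1$ sub-blocks instead of your $S_0$ (same $O(M\log M)$ order). Your in-convolution carry instead forces weight-$1$ entries into the filters. That is harmless for Theorem~\ref{thm:constant-depth-resnet-approximation} itself, which imposes no norm constraints, but it would break the downstream estimation result (Theorem~\ref{thm:constant-depth-resnet-estimation}): the H\"older construction needs $B^{\mathrm{(conv)}}=k^{-1}<1$ so that $\varrho=\prod_m(1+\rho_m)=O(1)$, and raising $B^{\mathrm{(conv)}}$ to $1$ makes $\log\Lambda_1$ grow linearly in the number of residual blocks, destroying the rate. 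If you intend your splitting lemma to feed into the estimation theorem as the paper's does, the skip-based carry is the right choice.
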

\begin{theorem}\label{thm:constant-depth-resnet-estimation}
Let $f^\circ: [-1, 1]^D \to \bbR$ be a $\beta$-H\"older function.
For any $K\in \{2, \ldots, D\}$ and $L\in \bbN_+$,
there exist the number of residual blocks $\tilde{M}=O(N^{\frac{D}{2\beta + D}}\log N)$, channel size $C=O(1)$, and norm bounds $B^{\mathrm{(conv)}}, B^{\mathrm{(fc)}} > 0$ such that for sufficiently large $N$, the clipped ERM estimator $\hat{f}$ of $\{\mathrm{clip}[f] \mid  f \in \mathcal{G}_{\tilde{M}, L, C, K, B^{\mathrm{(conv)}}, B^{\mathrm{(fc)}}} \}$ achieves the estimation error $\|f^\circ - \hat{f}\|_{\mathcal{L}_2(\mathcal{P}_X)}^2 = \tilde{O}_{P}(N^{-\frac{2\beta}{2\beta + D}})$.
\end{theorem}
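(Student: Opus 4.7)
The plan is to mirror the derivation of Corollary \ref{cor:estimation-holder}, but replace the roles of Theorems \ref{thm:fnn-to-cnn} and \ref{thm:estimation} by their masked-CNN analogues. Concretely, I would (i) establish an estimation-error bound of the same form as (\ref{eq:estimation-bound}) for the masked class $\mathcal{G}_{\tilde{M},L,C,K,B^{\mathrm{(conv)}},B^{\mathrm{(fc)}}}$, (ii) instantiate the approximation term using Theorem \ref{thm:constant-depth-resnet-approximation}, and (iii) trade off approximation and complexity against $N$ as in Corollary \ref{cor:estimation-error-wrt-sample-size}, absorbing a single extra $\log N$ factor coming from the $\log M$ overhead in the approximation.

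For step (i), I would redo the covering-number argument that underlies Theorem \ref{thm:estimation}. For each fixed mask pattern $z=(z_m)_{m\in[M]}\in\{0,1\}^{MC}$, the map $\mathrm{mCNN}^{\relu}_{\bmtheta}$ is a composition of residual blocks $(\mathrm{Conv}^{\relu}_{\bmw_m,\bmb_m}+J_m)$ where each $J_m$ is a diagonal $\{0,1\}$-projection with operator norm at most $1$, exactly like $\id$. The Lipschitz-in-parameters and Lipschitz-in-input propagation that yields the factors $\varrho,\varrho^+$ in $\Lambda_1$ therefore still applies verbatim, so for each fixed $z$ the log covering number of the clipped class is $O(\Lambda_2\log(\Lambda_1 B N))$ with the same $\Lambda_1,\Lambda_2$ as in Theorem \ref{thm:estimation}. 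A union bound over the $2^{MC}$ discrete mask patterns contributes an additional $MC\log 2=O(\tilde{M})$ term to the log covering number, which has the same order as $\Lambda_2=O(\tilde{M}L(C^{2}K+C)+CD+1)=O(\tilde{M})$ for constant $L,C,K$. Feeding this modified covering number into the standard truncated least-squares analysis gives a bound of the form (\ref{eq:estimation-bound}) over $\mathcal{G}$ with $\Lambda_{2}$ replaced by $\Lambda_{2}+MC=O(\tilde{M})$.

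For step (ii), Theorem \ref{thm:constant-depth-resnet-approximation} produces, for any prescribed block depth $L$, a masked CNN with $\tilde{M}=O(M\log M)$ residual blocks, $C=O(1)$ channels, filter size $\leq K$, and sup-norm approximation error $\tilde{O}(M^{-\beta/D})=\tilde{O}(\tilde{M}^{-\beta/D})$. As in Corollary \ref{cor:estimation-barron} and Corollary \ref{cor:estimation-holder}, I would rescale parameters using the positive-homogeneity of $\relu$ to transfer magnitude from the block part to the fully-connected head, choosing $B^{\mathrm{(conv)}},B^{\mathrm{(fc)}}$ so that $\log\Lambda_{1}B=\tilde{O}(1)$ in $\tilde{M}$. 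Then the analogue of Corollary \ref{cor:estimation-error-wrt-sample-size} with $\gamma_{1}=\beta/D$ and $\gamma_{2}=1$ balances at $\tilde{M}\asymp N^{D/(2\beta+D)}\log N$, the extra $\log N$ accounting for the $\log M$ factor in the approximation theorem, and yields $\|\hat{f}-f^{\circ}\|_{\mathcal{L}_{2}(\mathcal{P}_X)}^{2}=\tilde{O}_{P}(N^{-2\beta/(2\beta+D)})$. The main obstacle is step (i): one must verify that inserting the masks $J_m$ in place of $\id$ does not inflate the Lipschitz constants used in the covering-number bound (true because $\|J_m\|_{\op}\leq 1$) and that the combinatorial $2^{MC}$ entropy contribution is dominated by the continuous one (true because $MC=O(\tilde{M})=O(\Lambda_{2})$); the remainder is a routine repetition of the argument proving Corollary \ref{cor:estimation-holder}.
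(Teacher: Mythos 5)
Your proposal follows essentially the same route as the paper: the paper likewise bounds the covering number of $\mathcal{G}$ by the per-mask bound $(2B\Lambda_1\varepsilon^{-1})^{\Lambda_2}$ times the number $2^{O(C\tilde{M})}$ of mask patterns (valid because the masks have operator norm at most one, so the Lipschitz propagation is unchanged), notes that the resulting additive $O(\tilde{M})$ term in the log covering number is of the same order as $\Lambda_2$, and then combines Theorem \ref{thm:constant-depth-resnet-approximation} with the rescaling trick and the $\gamma_1=\beta/D$, $\gamma_2=1$ balancing exactly as you describe. No gaps.
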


\section{Conclusion}

In this paper, we established new approximation and statistical learning theories for CNNs by utilizing the ResNet-type architecture of CNNs and the block-sparse structure of FNNs.
We proved that any block-sparse FNN with $M$ blocks is realizable by a CNN with $O(M)$ additional parameters.
Then, we derived the approximation and estimation error rates for CNNs from those for block-sparse FNNs.
Our theory is general in that it does not depend on a specific function class as long as we can approximate it with block-sparse FNNs.
Using this theory, we derived approximation and error rates for the Barron and H\"older classes in almost the same manner and showed that the estimation error of CNNs is the same as that of FNNs, even if CNNs are dense and have constant channel size, filter size, and width with respect to the sample size.
We can additionally make the depth of residual blocks constant if we allow identity mappings to have scaling schemes.
The key techniques were careful evaluations of the Lipschitz constant and non-trivial weight parameter rescaling of NNs.

One of the interesting open questions is the role of weight rescaling.
We critically use the homogeneous property of the ReLU to change the relative scale between the block-sparse and fully-connected parts. If it were not for this property, the estimation error rate would be worse.
The general theory for rescaling, not restricted to the Barron nor H\"older classes, would be beneficial for a deeper understanding of the relationship between the approximation and estimation capabilities of FNNs and CNNs.

Another question is when the approximation and estimation error rates of CNNs can \textit{exceed} that of FNNs.
We can derive the same rates as FNNs essentially because we can realize block-sparse FNNs using CNNs with the same order of parameters (see Theorem \ref{thm:fnn-to-cnn}). If we can find some special structures of FNNs -- like repetition, the CNNs might need fewer parameters and can achieve a better estimation error rate. 
Note that there is no hope for enhancement for the H\"older case since the estimation rate using FNNs is already minimax optimal (up to logarithmic factors).
It is left for future research which functions classes and constraints of FNNs, like block-sparseness, we should choose.

\section*{Acknowledgements}

We thank Kohei Hayashi for improving on the draft, Wei Lu for commenting on the preprint version of the paper and pointing out its errata, 
Yunfei Yang for pointing out the technical issues of Lemma~\ref{lem:realize-ridge-with-cnn}, and anonymous reviewers for fruitful discussion and positive feedback and comments.
TS was partially supported by MEXT Kakenhi (26280009, 15H05707,
18K19793 and 18H03201), Japan Digital Design and JSTCREST.


\bibliography{main_icml2019}

\begin{thebibliography}{42}
\providecommand{\natexlab}[1]{#1}
\providecommand{\url}[1]{\texttt{#1}}
\expandafter\ifx\csname urlstyle\endcsname\relax
  \providecommand{\doi}[1]{doi: #1}\else
  \providecommand{\doi}{doi: \begingroup \urlstyle{rm}\Url}\fi

\bibitem[Alipanahi et~al.(2015)Alipanahi, Delong, Weirauch, and
  Frey]{alipanahi2015predicting}
Alipanahi, B., Delong, A., Weirauch, M.~T., and Frey, B.~J.
\newblock Predicting the sequence specificities of {DNA}-and {RNA}-binding
  proteins by deep learning.
\newblock \emph{Nature biotechnology}, 33\penalty0 (8):\penalty0 831--838,
  2015.

\bibitem[Arora et~al.(2018)Arora, Ge, Neyshabur, and Zhang]{arora2018stronger}
Arora, S., Ge, R., Neyshabur, B., and Zhang, Y.
\newblock Stronger generalization bounds for deep nets via a compression
  approach.
\newblock In \emph{Proceedings of the 35th International Conference on Machine
  Learning}, volume~80 of \emph{Proceedings of Machine Learning Research}, pp.\
   254--263. PMLR, 2018.

\bibitem[Barron(1993)]{barron1993universal}
Barron, A.~R.
\newblock Universal approximation bounds for superpositions of a sigmoidal
  function.
\newblock \emph{{IEEE} Transactions on Information theory}, 39\penalty0
  (3):\penalty0 930--945, 1993.

\bibitem[Barron(1994)]{barron1994approximation}
Barron, A.~R.
\newblock Approximation and estimation bounds for artificial neural networks.
\newblock \emph{Machine learning}, 14\penalty0 (1):\penalty0 115--133, 1994.

\bibitem[B{\"o}lcskei et~al.(2019)B{\"o}lcskei, Grohs, Kutyniok, and
  Petersen]{bolcskei2019optimal}
B{\"o}lcskei, H., Grohs, P., Kutyniok, G., and Petersen, P.
\newblock Optimal approximation with sparsely connected deep neural networks.
\newblock \emph{SIAM Journal on Mathematics of Data Science}, 1\penalty0
  (1):\penalty0 8--45, 2019.

\bibitem[Chen et~al.(2018)Chen, Pennington, and Schoenholz]{chen18i}
Chen, M., Pennington, J., and Schoenholz, S.
\newblock Dynamical isometry and a mean field theory of {RNN}s: Gating enables
  signal propagation in recurrent neural networks.
\newblock In \emph{Proceedings of the 35th International Conference on Machine
  Learning}, volume~80 of \emph{Proceedings of Machine Learning Research}, pp.\
   873--882. PMLR, 2018.

\bibitem[Cohen et~al.(2018)Cohen, Geiger, Köhler, and
  Welling]{s.2018spherical}
Cohen, T.~S., Geiger, M., Köhler, J., and Welling, M.
\newblock Spherical {CNN}s.
\newblock In \emph{International Conference on Learning Representations}, 2018.

\bibitem[Cybenko(1989)]{cybenko1989approximation}
Cybenko, G.
\newblock Approximation by superpositions of a sigmoidal function.
\newblock \emph{Mathematics of control, signals and systems}, 2\penalty0
  (4):\penalty0 303--314, 1989.

\bibitem[Fan et~al.(2018)Fan, Wang, and Wang]{fan2018universal}
Fan, F., Wang, D., and Wang, G.
\newblock Universal approximation by a slim network with sparse shortcut
  connections.
\newblock \emph{arXiv preprint arXiv:1811.09003}, 2018.

\bibitem[Gin{\'e} \& Nickl(2015)Gin{\'e} and Nickl]{gine2015mathematical}
Gin{\'e}, E. and Nickl, R.
\newblock \emph{Mathematical foundations of infinite-dimensional statistical
  models}, volume~40.
\newblock Cambridge University Press, 2015.

\bibitem[He et~al.(2015)He, Zhang, Ren, and Sun]{he2015delving}
He, K., Zhang, X., Ren, S., and Sun, J.
\newblock Delving deep into rectifiers: Surpassing human-level performance on
  {I}mage{N}et classification.
\newblock In \emph{Proceedings of the IEEE international conference on computer
  vision}, pp.\  1026--1034, 2015.

\bibitem[He et~al.(2016)He, Zhang, Ren, and Sun]{he2016deep}
He, K., Zhang, X., Ren, S., and Sun, J.
\newblock Deep residual learning for image recognition.
\newblock In \emph{Proceedings of the IEEE Conference on Computer Vision and
  Pattern Recognition}, pp.\  770--778, 2016.

\bibitem[Huang et~al.(2018)Huang, Ash, Langford, and Schapire]{huang18b}
Huang, F., Ash, J., Langford, J., and Schapire, R.
\newblock Learning deep {R}es{N}et blocks sequentially using boosting theory.
\newblock In \emph{Proceedings of the 35th International Conference on Machine
  Learning}, volume~80 of \emph{Proceedings of Machine Learning Research}, pp.\
   2058--2067. PMLR, 2018.

\bibitem[Huang et~al.(2017)Huang, Liu, Van Der~Maaten, and
  Weinberger]{huang2017densely}
Huang, G., Liu, Z., Van Der~Maaten, L., and Weinberger, K.~Q.
\newblock Densely connected convolutional networks.
\newblock In \emph{Proceedings of the IEEE Conference on Computer Vision and
  Pattern Recognition}, pp.\  2261--2269. IEEE, 2017.

\bibitem[Imaizumi \& Fukumizu(2019)Imaizumi and Fukumizu]{imaizumi19a}
Imaizumi, M. and Fukumizu, K.
\newblock Deep neural networks learn non-smooth functions effectively.
\newblock In \emph{Proceedings of Machine Learning Research}, volume~89 of
  \emph{Proceedings of Machine Learning Research}, pp.\  869--878. PMLR, 2019.

\bibitem[Ioffe \& Szegedy(2015)Ioffe and Szegedy]{ioffe15}
Ioffe, S. and Szegedy, C.
\newblock Batch normalization: Accelerating deep network training by reducing
  internal covariate shift.
\newblock In \emph{Proceedings of the 32nd International Conference on Machine
  Learning}, volume~37 of \emph{Proceedings of Machine Learning Research}, pp.\
   448--456. PMLR, 2015.

\bibitem[Kainen et~al.(2013)Kainen, K{\r{u}}rkov{\'a}, and
  Sanguineti]{kainen2013}
Kainen, P.~C., K{\r{u}}rkov{\'a}, V., and Sanguineti, M.
\newblock Approximating multivariable functions by feedforward neural nets.
\newblock In \emph{Handbook on Neural Information Processing}, pp.\  143--181.
  Springer, 2013.

\bibitem[Klusowski \& Barron(2018)Klusowski and
  Barron]{klusowski2018approximation}
Klusowski, J.~M. and Barron, A.~R.
\newblock Approximation by combinations of {R}e{LU} and squared {R}e{LU} ridge
  functions with {$\ell_1$} and {$\ell_0$} controls.
\newblock \emph{{IEEE} Transactions on Information Theory}, 64\penalty0
  (12):\penalty0 7649--7656, 2018.

\bibitem[Krizhevsky et~al.(2012)Krizhevsky, Sutskever, and
  Hinton]{NIPS2012_4824}
Krizhevsky, A., Sutskever, I., and Hinton, G.~E.
\newblock {I}mage{N}et classification with deep convolutional neural networks.
\newblock In \emph{Advances in Neural Information Processing Systems 25}, pp.\
  1097--1105. Curran Associates, Inc., 2012.

\bibitem[Lee et~al.(2017)Lee, Ge, Ma, Risteski, and Arora]{lee17a}
Lee, H., Ge, R., Ma, T., Risteski, A., and Arora, S.
\newblock On the ability of neural nets to express distributions.
\newblock In \emph{Proceedings of the 2017 Conference on Learning Theory},
  volume~65 of \emph{Proceedings of Machine Learning Research}, pp.\
  1271--1296. PMLR, 2017.

\bibitem[Lin \& Jegelka(2018)Lin and Jegelka]{lin2018resnet}
Lin, H. and Jegelka, S.
\newblock Res{N}et with one-neuron hidden layers is a universal approximator.
\newblock In \emph{Advances in Neural Information Processing Systems 31}, pp.\
  6169--6178. Curran Associates, Inc., 2018.

\bibitem[Lu et~al.(2018)Lu, Zhong, Li, and Dong]{lu18d}
Lu, Y., Zhong, A., Li, Q., and Dong, B.
\newblock Beyond finite layer neural networks: Bridging deep architectures and
  numerical differential equations.
\newblock In \emph{Proceedings of the 35th International Conference on Machine
  Learning}, volume~80 of \emph{Proceedings of Machine Learning Research}, pp.\
   3276--3285. PMLR, 2018.

\bibitem[Lu et~al.(2017)Lu, Pu, Wang, Hu, and Wang]{lu2017expressive}
Lu, Z., Pu, H., Wang, F., Hu, Z., and Wang, L.
\newblock The expressive power of neural networks: a view from the width.
\newblock In \emph{Advances in Neural Information Processing Systems 30}, pp.\
  6231--6239. Curran Associates, Inc., 2017.

\bibitem[Miyato et~al.(2018)Miyato, Kataoka, Koyama, and
  Yoshida]{miyato2018spectral}
Miyato, T., Kataoka, T., Koyama, M., and Yoshida, Y.
\newblock Spectral normalization for generative adversarial networks.
\newblock In \emph{International Conference on Learning Representations}, 2018.

\bibitem[Nitanda \& Suzuki(2018)Nitanda and Suzuki]{nitanda18a}
Nitanda, A. and Suzuki, T.
\newblock Functional gradient boosting based on residual network perception.
\newblock In \emph{Proceedings of the 35th International Conference on Machine
  Learning}, volume~80 of \emph{Proceedings of Machine Learning Research}, pp.\
   3819--3828. PMLR, 2018.

\bibitem[Perekrestenko et~al.(2018)Perekrestenko, Grohs, Elbr{\"a}chter, and
  B{\"o}lcskei]{perekrestenko2018universal}
Perekrestenko, D., Grohs, P., Elbr{\"a}chter, D., and B{\"o}lcskei, H.
\newblock The universal approximation power of finite-width deep {R}e{LU}
  networks.
\newblock \emph{arXiv preprint arXiv:1806.01528}, 2018.

\bibitem[Petersen \& Voigtlaender(2018{\natexlab{a}})Petersen and
  Voigtlaender]{petersen2018equivalence}
Petersen, P. and Voigtlaender, F.
\newblock Equivalence of approximation by convolutional neural networks and
  fully-connected networks.
\newblock \emph{arXiv preprint arXiv:1809.00973}, 2018{\natexlab{a}}.

\bibitem[Petersen \& Voigtlaender(2018{\natexlab{b}})Petersen and
  Voigtlaender]{petersen2018optimal}
Petersen, P. and Voigtlaender, F.
\newblock Optimal approximation of piecewise smooth functions using deep
  {R}e{LU} neural networks.
\newblock \emph{Neural Networks}, 108:\penalty0 296--330, 2018{\natexlab{b}}.

\bibitem[Pinkus(2005)]{Pinkus2005}
Pinkus, A.
\newblock Density in approximation theory.
\newblock \emph{Surveys in Approximation Theory}, 1:\penalty0 1--45, 2005.

\bibitem[Ronneberger et~al.(2015)Ronneberger, Fischer, and
  Brox]{ronneberger2015u}
Ronneberger, O., Fischer, P., and Brox, T.
\newblock {U}-net: Convolutional networks for biomedical image segmentation.
\newblock In \emph{International Conference on Medical image computing and
  computer-assisted intervention}, pp.\  234--241. Springer, 2015.

\bibitem[Schmidt-Hieber(2017)]{schmidt2017nonparametric}
Schmidt-Hieber, J.
\newblock Nonparametric regression using deep neural networks with {R}e{LU}
  activation function.
\newblock \emph{arXiv preprint arXiv:1708.06633}, 2017.

\bibitem[Shang et~al.(2016)Shang, Sohn, Almeida, and
  Lee]{shang2016understanding}
Shang, W., Sohn, K., Almeida, D., and Lee, H.
\newblock Understanding and improving convolutional neural networks via
  concatenated rectified linear units.
\newblock In \emph{Proceedings of The 33rd International Conference on Machine
  Learning}, volume~48 of \emph{Proceedings of Machine Learning Research}, pp.\
   2217--2225. PMLR, 2016.

\bibitem[Suzuki(2018)]{suzuki18a}
Suzuki, T.
\newblock Fast generalization error bound of deep learning from a kernel
  perspective.
\newblock In \emph{Proceedings of the Twenty-First International Conference on
  Artificial Intelligence and Statistics}, volume~84 of \emph{Proceedings of
  Machine Learning Research}, pp.\  1397--1406. PMLR, 2018.

\bibitem[Suzuki(2019)]{suzuki2018adaptivity}
Suzuki, T.
\newblock Adaptivity of deep {R}e{LU} network for learning in {B}esov and mixed
  smooth {B}esov spaces: optimal rate and curse of dimensionality.
\newblock In \emph{International Conference on Learning Representations}, 2019.

\bibitem[Szegedy et~al.(2015)Szegedy, Liu, Jia, Sermanet, Reed, Anguelov,
  Erhan, Vanhoucke, and Rabinovich]{szegedy2015going}
Szegedy, C., Liu, W., Jia, Y., Sermanet, P., Reed, S., Anguelov, D., Erhan, D.,
  Vanhoucke, V., and Rabinovich, A.
\newblock Going deeper with convolutions.
\newblock In \emph{Proceedings of the IEEE Conference on Computer Vision and
  Pattern Recognition}, pp.\  1--9, 2015.

\bibitem[Tsybakov(2008)]{Tsybakov:2008:INE:1522486}
Tsybakov, A.~B.
\newblock \emph{Introduction to Nonparametric Estimation}.
\newblock Springer Publishing Company, Incorporated, 1st edition, 2008.
\newblock ISBN 0387790519, 9780387790510.

\bibitem[Wu et~al.(2016)Wu, Schuster, Chen, Le, Norouzi, Macherey, Krikun, Cao,
  Gao, Macherey, et~al.]{wu2016google}
Wu, Y., Schuster, M., Chen, Z., Le, Q.~V., Norouzi, M., Macherey, W., Krikun,
  M., Cao, Y., Gao, Q., Macherey, K., et~al.
\newblock Google's neural machine translation system: {B}ridging the gap
  between human and machine translation.
\newblock \emph{arXiv preprint arXiv:1609.08144}, 2016.

\bibitem[Yarotsky(2017)]{yarotsky2017error}
Yarotsky, D.
\newblock Error bounds for approximations with deep {R}e{LU} networks.
\newblock \emph{Neural Networks}, 94:\penalty0 103--114, 2017.

\bibitem[Yarotsky(2018)]{yarotsky2018universal}
Yarotsky, D.
\newblock Universal approximations of invariant maps by neural networks.
\newblock \emph{arXiv preprint arXiv:1804.10306}, 2018.

\bibitem[Zhou(2018)]{zhou2018universality}
Zhou, D.-X.
\newblock Universality of deep convolutional neural networks.
\newblock \emph{arXiv preprint arXiv:1805.10769}, 2018.

\bibitem[Zhou \& Troyanskaya(2015)Zhou and Troyanskaya]{zhou2015predicting}
Zhou, J. and Troyanskaya, O.~G.
\newblock Predicting effects of noncoding variants with deep learning-based
  sequence model.
\newblock \emph{Nature methods}, 12\penalty0 (10):\penalty0 931, 2015.

\bibitem[Zhou \& Feng(2018)Zhou and Feng]{zhou18a}
Zhou, P. and Feng, J.
\newblock Understanding generalization and optimization performance of deep
  {CNN}s.
\newblock In \emph{Proceedings of the 35th International Conference on Machine
  Learning}, volume~80 of \emph{Proceedings of Machine Learning Research}, pp.\
   5960--5969. PMLR, 2018.

\end{thebibliography}
\bibliographystyle{icml2019}

\onecolumn
\clearpage
\normalsize
\section*{Appendix}
\appendix
In this supplemental material, we give the proofs of theorems and corollaries in the main article. We prove them in a more general form.
Specifically, we allow CNNs to have residual blocks with different depths and each residual block to have varying numbers of channels and filter sizes.
Similarly, FNNs can have blocks with different depths, and the width of a block can be non-constant.

\section{Notation}

For tensor $a$, we define the positive part of $a$ by $a_+:=a\vee 0$ where the maximum operation is performed element-wise. Similarly, the negative part of $a$ is defined as $a_-:=-a\vee 0$.
Note that $a = a_+ - a_-$ holds for any tensor $a$.
For normed spaces $(V, \|\cdot\|_V), (W, \|\cdot\|_W)$ and a linear operator $T:V\to W$ we denote the operator norm of $T$ by $\|T\|_\mathrm{op}:=\sup_{\|v\|_V = 1} \|Tv\|_W$.
For a sequence $\bmw = (w^{(1)}, \ldots, w^{(L)})$ and $l\leq l'$, we denote its subsequence from the $l$-th to $l'$-th elements by $\bmw[l:l'] := (w^{(l)}, \ldots, w^{(l')})$.

\section{Definitions}

We define general types of ResNet-type CNNs and block-sparse FNNs.

\begin{definition}[Convolutional Neural Networks (CNNs)]\label{def:cnn-def-extend}
Let $M\in \bbN_{+}$ and $L_m\in \bbN_{+}$, which will be the number of residual blocks and the depth of $m$-th block, respectively.
Let $C_m^{(l)}, K_{m}^{(l)}$ be the channel size and filter size of the $l$-th layer of the $m$-th block for $m\in[M]$ and $l\in [L_m]$.
We assume $C_1^{(L_1)}=\cdots=C_{M}^{(L_M)}$ and denote it by $C^{(0)}$.
Let $w_m^{(l)}\in \bbR^{K^{(l)}_m\times C^{(l)}_m \times C^{(l-1)}_m}$ and $b_m^{(l)}\in \bbR$ be the weight tensors and biases of $l$-th layer of the $m$-th block in the convolution part, respectively.
Here $C_m^{(0)}$ is defined as $C^{(0)}$.
Finally, let $W\in \bbR^{D \times C^{(L_0)}_0}$ and $b\in \bbR$ be the weight matrix and the bias for the fully-connected layer part, respectively.
For $\bmtheta:=((w_m^{(l)})_{m,l}, (b_m^{(l)})_{m, l}, W, b)$ and an activation function $\sigma: \bbR \to \bbR$, we define  $\mathrm{CNN}^{\sigma}_{\bmtheta}:\bbR^{D}\to \bbR^{D}$, the CNN constructed from $\bmtheta$, by
\begin{align*}
    \mathrm{CNN}^{\sigma}_\bmtheta := \mathrm{FC}^{\id}_{W, b}\circ 
    (\mathrm{Conv}^{\sigma}_{\bmw_M, \bmb_M}+ \id) \circ 
    \cdots
    \circ (\mathrm{Conv}^{\sigma}_{\bmw_1, \bmb_1}+ \id) \circ
    P,
\end{align*}
where
$\mathrm{Conv}^{\sigma}_{\bmw_m, \bmb_m}:=
\mathrm{Conv}^{\sigma}_{w_m^{(L_m)}, b_m^{(L_m)}} \circ
\cdots \circ
\mathrm{Conv}^{\sigma}_{w_m^{(1)}, b_m^{(1)}}$,
$\id:\bbR^{D\times C^{(0)}} \to \bbR^{D\times C^{(0)}}$ is the identity function, and
$P: \bbR^{D}\to \bbR^{D\times C^{(0)}};
x \mapsto \begin{bmatrix}x & 0 & \cdots & 0 \end{bmatrix}$ is a padding operation that adds zeros to align the number of channels.
\end{definition}

\begin{definition}[Fully-connected Neural Networks (FNNs)]\label{def:block-sparse-fnn-extend}
Let $M\in \bbN_{+}$ be the number of blocks in an FNN.
Let $\bmD_m = (D^{(1)}_m, \ldots, D^{(L_m)}_m) \in \bbN^{L_m}_{+}$ be the sequence of intermediate dimensions of the $m$-th block, where $L_m \in \bbN_{+}$ is the depth of the $m$-th block for $m\in [M]$.
Let $W_m^{(l)} \in \bbR^{D^{(l)}_m\times D^{(l-1)}_m}$ and $b_m^{(l)}\in \bbR^{D^{(l)}_m}$ be the weight matrix and the bias of the $l$-th layer of $m$-th block (with the convention $D^{(0)}_m = D$).
Let $w_m\in \bbR^{D^{(L_m)}_m}$ be the weight (sub)vector of the final fully-connected layer corresponding to the $m$-th block and $b\in \bbR$ be the bias for the last layer.
For $\bmtheta = ((W_m^{(l)})_{m, l}, (b_m^{(l)})_{m, l}, (w_m)_m, b)$ and an activation function $\sigma: \bbR\to \bbR$, we define $\mathrm{FNN}_{\bmtheta}^{\sigma}: \bbR^D\to \bbR$, the block-sparse FNN constructed from $\bmtheta$, by
\begin{align*}
    \mathrm{FNN}_{\bmtheta}^{\sigma} := \sum_{m=1}^M w_m^{\top} \mathrm{FC}_{\bmW_m, \bmb_m}^\sigma(\cdot) - b,
\end{align*}
where $\mathrm{FC}_{\bmW_m, \bmb_m}^\sigma := \mathrm{FC}_{W_m^{(L_m)}, b_m^{(L_m)}}^\sigma \circ \cdots \mathrm{FC}_{W_m^{(1)}, b_m^{(1)}}^\sigma$.
\end{definition}

Figure \ref{fig:cnn-def-extend} shows the schematic view of a ResNet-type CNNs defined in Definition \ref{def:cnn-def-extend} and Figure \ref{fig:block-sparse-fnn-extend} shows that of Definition \ref{def:block-sparse-fnn-extend}.
Definition \ref{def:cnn-def-extend} is reduced to Definition \ref{def:cnn-def} by setting $L_m = L$, $\bmC = (C)_{m, l}$ and $\bmK = (K)_{m, l}$.
Similarly, Definition \ref{def:block-sparse-fnn} is a special case of  Definition \ref{def:block-sparse-fnn-extend} where $L_m = L$ and $\bmD=(C)_{m, l}$.
Correspondingly, we denote the set of functions realizable by CNNs and FNNs by $\mathcal{F}^{\mathrm{(CNN)}}_{\bmC, \bmK, B^{\mathrm{(conv)}}, B^{\mathrm{(fc)}}}$ and $\mathcal{F}^{\mathrm{(FNN)}}_{\bmD, B^{\mathrm{(bs)}}, B^{\mathrm{(fin)}}}$, respectively \footnote{Note that information of $M$ and $L_m$ are included in $\bmC$, $\bmK$, and $\bmD$. Therefore, we do not have to put them as subscripts}.
\begin{figure}[t]
    \begin{center}
        \includegraphics[width=.8\linewidth]{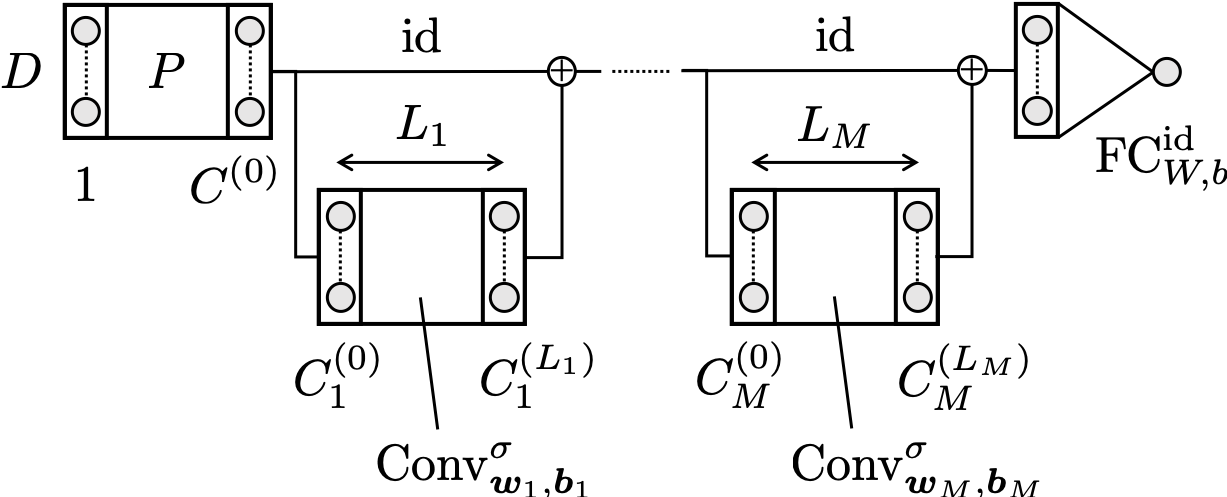}
    \end{center}
    \caption{ResNet-type CNN defined in Definition \ref{def:cnn-def-extend}. Variables are as in Definition \ref{def:cnn-def-extend}.}\label{fig:cnn-def-extend}
\end{figure}
\begin{figure}[t]
    \begin{center}
        \includegraphics[width=0.8\linewidth]{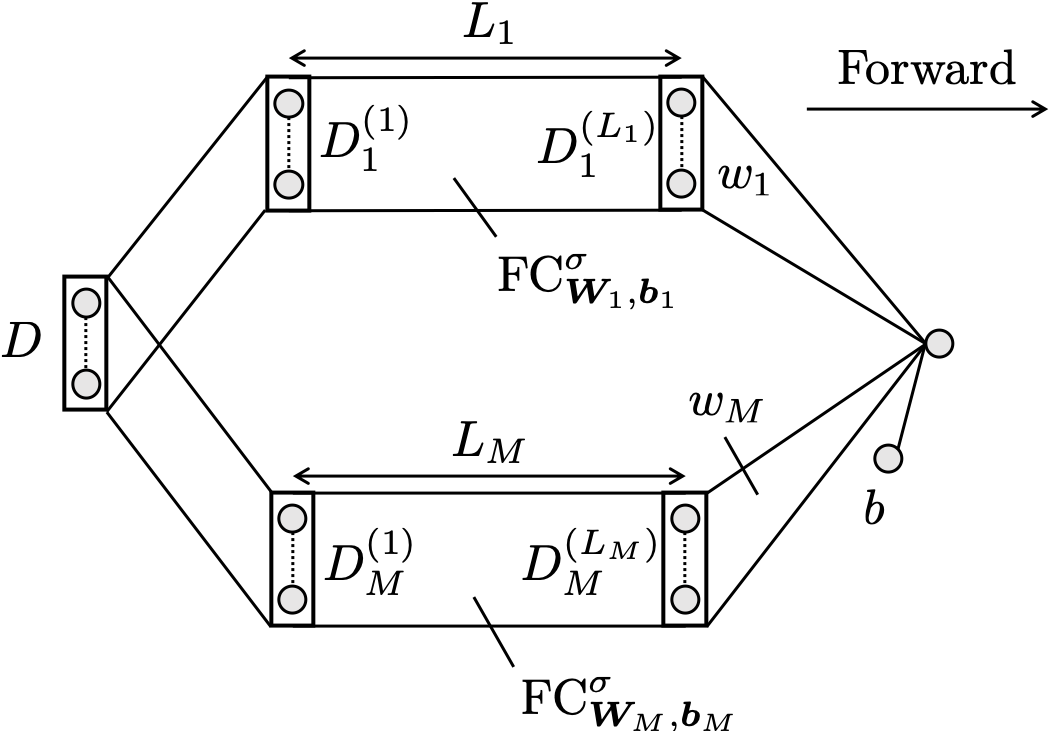}
    \end{center}
    \caption{Schematic view of a block-sparse FNN. Variables are as in Definition \ref{def:block-sparse-fnn-extend}.}\label{fig:block-sparse-fnn-extend}
\end{figure}

\section{Proof of Theorem \ref{thm:fnn-to-cnn}}

We restate Theorem \ref{thm:fnn-to-cnn} in a more general form.
Note that Theorem \ref{thm:fnn-to-cnn} is a special case of Theorem \ref{thm:fnn-to-cnn-extend} where width, depth, channel sizes, and filter sizes are the same among blocks.

\begin{theorem}\label{thm:fnn-to-cnn-extend}
Let $M \in \bbN_{+}$, $K\in \{2, \ldots D\}$, and $L_0 := \left\lceil \frac{D-1}{K-1}\right\rceil$.
Let $L_m, D_m^{(l)}\in \bbN_{+}$ and $\bmD = (D_m^{(l)})_{m, l}$ for $m\in [M]$ and $l\in [L_m]$.
Then, there exist $L'_m\in \bbN_{+}$, $\bmC=(C_m^{(l)})_{m, l}$, and $\bmK=(K_m^{(l)})_{m, l}$ $(m\in [M], l \in [L'_m])$ satisfying the following properties:
\begin{enumerate}
\item $L'_m \leq L_m + L_0$ ($\forall m \in [M]$),
\item $\displaystyle{\max_{l\in[L'_m]} C_m^{(l)} \leq 4\max_{l\in [L_m]} D_m^{(l)}}$ ($\forall m\in [M]$), and
\item $\displaystyle{\max_{l\in [L'_m]} K_m^{(l)} \leq K}$ ($\forall m \in [M]$, $\forall l\in [L'_m]$)
\end{enumerate}
such that for any $B^{\mathrm{(bs)}}, B^{\mathrm{(fin)}} > 0$, any FNN in $\mathcal{F}^{\mathrm{(FNN)}}_{\bmD, B^{\mathrm{(bs)}}, B^{\mathrm{(fin)}}}$ can be realized by a CNN in $\mathcal{F}^{\mathrm{(CNN)}}_{\bmC, \bmK, B^{\mathrm{(conv)}}, B^{\mathrm{(fc)}}}$.
Here, $B^{\mathrm{(conv)}} = \tilde{B}^{\mathrm{(bs)}}$ and $B^{\mathrm{(fc)}} = B^{\mathrm{(fin)}}(1 \vee (\tilde{B}^{\mathrm{(bs)}})^{-1})$, where $\tilde{B}^{\mathrm{(bs)}} = B^{\mathrm{(bs)}} \vee (B^{\mathrm{(bs)}})^{\frac{1}{L_{0}}}$.
Further, if $L_1 = \cdots = L_M$, we can choose $L'_m$ as the same value.
\end{theorem}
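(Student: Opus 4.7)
The proof plan is to emulate the block-sparse FNN one block at a time: the $m$-th residual block of the CNN will simulate the $m$-th FNN block and deposit its scalar output $w_m^\top\sigma(\cdots)$ at a dedicated ``accumulator'' location (say channel~$2$, spatial position~$1$). The initial padding $P$ places the input $x$ in channel~$1$ with zeros elsewhere, and the invariant I would maintain after every residual block is that channel~$1$ still holds $x$, the accumulator channel holds the running sum $\sum_{m'\leq m} w_{m'}^\top\sigma(\cdots)(x)$ at position~$1$, and every other channel is zero. The CNN's final fully-connected layer then reads this accumulator entry and subtracts the global bias $b$ to reproduce the FNN output. Choosing $L'_m = L_m + L_0$ uniformly also yields the ``$L'_m$ can be a common value when the $L_m$ are'' conclusion for free.

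To simulate one FNN block by one residual block I would proceed in two phases. \emph{Phase~1} uses the first $L_0$ conv layers to realize the first FC map $x\mapsto\sigma(W_m^{(1)} x - b_m^{(1)})$, which requires a receptive field of width $D$ at position~$1$; this is achievable precisely because $L_0$ one-sided, filter-size-$K$ convolutions widen the receptive field from $1$ to $1+L_0(K-1)\geq D$. To push linear partial sums through the intermediate ReLUs, I would use the positive/negative trick: each partial linear value $z$ is represented by two nonnegative channels holding $z_+$ and $z_-$, so ReLU acts as the identity on each, and $z=z_+-z_-$ is recovered by the next layer's linear combination. \emph{Phase~2} applies the remaining FC transformations $W_m^{(l)}$ ($l\geq 2$) by filter-size-$1$ convolutions on the channel vector at position~$1$, which are exactly generic channel-wise linear maps. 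The final conv of the block then collapses the terminal $D_m^{(L_m)}$ channels into the scalar $w_m^\top\sigma(\cdots)$ at position~$1$ of the accumulator channel, producing zeros in channel~$1$ and in all other channels so that the residual addition preserves the invariant. Counting channels, one slot for preserving $x$, one for the accumulator, and at most $2\max_l D_m^{(l)}$ positive/negative working channels fit within the $4\max_l D_m^{(l)}$ budget.

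For the weight norms, the conv filters that house $W_m^{(l)}, b_m^{(l)}$ directly inherit the bound $B^{(\mathrm{bs})}$, so $B^{(\mathrm{conv})}=B^{(\mathrm{bs})}$ suffices. The only conv whose natural entries are $\pm w_m$ is the scalar-collapse layer, with $|w_m|\leq B^{(\mathrm{fin})}$; when $B^{(\mathrm{bs})}\geq 1$ these fit under $B^{(\mathrm{conv})}$ as is, and otherwise I would rescale them by $B^{(\mathrm{bs})}/B^{(\mathrm{fin})}$ and compensate by the reciprocal in the CNN's final FC layer, yielding $B^{(\mathrm{fc})}=B^{(\mathrm{fin})}(1\vee (B^{(\mathrm{bs})})^{-1})$. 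The main obstacle is the bookkeeping in Phase~$1$: one must specify an inductive invariant over $\ell=1,\ldots,L_0$ describing exactly which positive/negative partial sums of rows of $W_m^{(1)}$ are stored in which channels at which positions, and verify that a single filter-size-$K$ conv advances the invariant. A clean realization pads each row of $W_m^{(1)}$ to length $L_0(K-1)+1$, partitions it into $L_0$ chunks of length $K-1$, and has layer $\ell$ store at every position $\alpha$ the positive and negative parts of the partial sum over $x_\alpha,\ldots,x_{\alpha+\ell(K-1)}$, folding the bias $b_m^{(1)}$ into the very last step; checking that this invariant is advanced by one filter-size-$K$ conv and that the pos/neg decomposition never exceeds the claimed channel count is the technical core of the proof.
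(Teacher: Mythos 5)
Your proposal takes essentially the same route as the paper's proof: carry the raw input and a running accumulator through the identity stream, realize the first layer of each FNN block by $L_0$ size-$K$ convolutions that widen the receptive field to $D$ (splitting signed values into positive/negative channel pairs so they survive ReLU), realize the remaining layers by size-$1$ convolutions, collapse to $w_m^\top(\cdot)$ and deposit into the accumulator, and trade scale between the collapse weights and the final fully-connected layer.

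Three pieces of bookkeeping as written would fail, though all are repairable. (i) Your norm case split is wrong: for $B^{\mathrm{(bs)}}\ge 1$ you keep $\pm w_m$ unscaled in the collapse convolution, but these entries are only bounded by $B^{\mathrm{(fin)}}$, which need not be $\le B^{\mathrm{(conv)}}=B^{\mathrm{(bs)}}$ (take $B^{\mathrm{(bs)}}=1$, $B^{\mathrm{(fin)}}=100$). The paper rescales by $B^{\mathrm{(bs)}}/B^{\mathrm{(fin)}}$ unconditionally and compensates with $B^{\mathrm{(fin)}}/B^{\mathrm{(bs)}}$ in the last layer, which is exactly what produces $B^{\mathrm{(fc)}}=B^{\mathrm{(fin)}}(1\vee (B^{\mathrm{(bs)}})^{-1})$. (ii) You deposit the signed scalar $w_m^\top\sigma(\cdots)$ into a single accumulator channel; since the depositing convolution carries a ReLU in the class $\mathcal{F}^{\mathrm{(CNN)}}$, negative block outputs would be clipped before reaching the identity stream. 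The paper uses two accumulator channels holding the positive and negative parts separately (hence its choice $C^{(0)}=3$) and subtracts them in the final fully-connected layer; you need the same device or an explicitly linear last layer in each block. (iii) Your Phase-1 invariant (partial sums of $x_\alpha,\dots,x_{\alpha+\ell(K-1)}$ stored at each position $\alpha$) cannot be advanced by a size-$K$ filter from the partial-sum channels alone: the new summands involve positions at offset up to $(\ell+1)(K-1)$ from $\alpha$, beyond the filter's reach, so shifted (and pos/neg-split) copies of the raw signal must be carried as additional channels. This is precisely the two-channel ``accumulate and shift'' construction of Lemma \ref{lem:realize-ridge-with-cnn} combined with Lemma \ref{lem:doubling}, and it is where the factor $4$ in property 2 actually comes from ($4D_m^{(1)}$ channels for the first layer); your count of $2\max_l D_m^{(l)}$ working channels omits these shift channels, although the stated budget can still be made to close.
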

\begin{remark}\label{rem:filter-expansion}
For $K \leq K'$, we can embed $\bbR^{K}$ into $\bbR^{K'}$ by inserting zeros: $w=(w_1, \ldots, w_K) \mapsto w'=(w_1, \ldots, w_K, 0, \ldots, 0)$. It is easy to show $L^w = L^{w'}$.
Using this equality, we can expand a size-$K$ filter to size-$K'$.
Furthermore, we can arbitrarily increase the number of output channels of a convolution layer by adding filters consisting of zeros.
Therefore, although properties 2 and 3 allow $C_m^{(l)}$ and $K_m^{(l)}$ to be different values, we can choose $C_m^{(l)}$ and $K_m^{(l)}$ so that inequalities in property 2. and 3. are actually equal by adding filters consisting of zeros.
In particular, when $D^{(l)}_m$'s are same value, we can choose $C_m^{(l)}$ to be same.
\end{remark}

\subsection{Proof Overview}

For $f^{\mathrm{(FNN)}}\in \mathcal{F}^{\mathrm{(FNN)}}$, we realize a CNN $f^{\mathrm{(CNN)}}$ using $M$ residual blocks by ``serializing" blocks in the FNN and converting them into convolution layers.

First, we multiply the channel size by three using the first padding operation.
We will use the first channel for storing the original input signal for feeding to downstream blocks and 
the second and third ones for accumulating properly scaled outputs of each block, that is, $\sum_{m=1}^{m'} w_m^{\top} \mathrm{FC}_{\bmW_m, \bmb_m}^\relu(x)$ where $w_m$ is the weight of the final fully-connected layer corresponding to the $m$-th block.

For $m=1, \ldots, M$, we create the $m$-th residual block from the $m$-th block of $f^{\mathrm{(FNN)}}$.
First, we show that for any $a\in \bbR^D$ and $t\in \bbR$, there exists $L_0$-layered $4$-channel ReLU CNN with $O(D)$ parameters whose first output coordinate equals to a ridge function $x\mapsto (a^{\top} x-t)_+$ (Lemma \ref{lem:realize-ridge-with-cnn} and Lemma \ref{lem:doubling}).
Since the first layer of $m$-th block is the concatenation of $C$ hinge functions, it is realizable by a $4C$-channel ReLU CNN with $L_0$-layers.

For the $l$-th layer of the $m$-th block $(m\in [M], l = 2, \ldots, L'_m)$, we prepare $C$ size-$1$ filters made from the weight parameters of the corresponding layer of the FNN.
Observing that the convolution operation with size-$1$ filter is equivalent to a dimension-wise affine transformation, the first coordinate of the output of $l$-th layer of the CNN is inductively the same as that of the $m$-th block of the FNN.
After computing the $m$-th block FNN using convolutions, we add its output to the accumulating channel in the identity mapping.

Finally, we pick the first coordinate of the accumulating channel and subtract the bias term using the final affine transformation.

\subsection{Decomposition of Affine Transformation}

The following lemma shows that any affine transformation is realizable with a $\left\lceil \frac{D-1}{K-1}\right\rceil$-layered linear conventional CNN (without the final fully-connect layer).

\begin{lemma} \label{lem:realize-ridge-with-cnn}
Let $a\in \bbR^D$, $t\in \bbR$, $K\in \{2, \ldots, D\}$, and $L_0:=\left\lceil \frac{D-1}{K-1}\right\rceil$.
Then, there exists 
\begin{align*}
    w^{(l)}\in
    \begin{cases}
        \bbR^{K \times 2 \times 1} & \text{(for $l=1$)} \\
        \bbR^{K \times 2 \times 2} & \text{(for $l=2, \ldots, L_0-1$)} \\
        \bbR^{K \times 1 \times 2}  & \text{(for $l=L_0$)}
    \end{cases}
\end{align*}
and $b^{(\ell)}\in \bbR$ such that
\begin{enumerate}
    \item
        $\displaystyle{\max_{l\in [L_o]}\|w^{(l)}_m\|_\infty \leq \|a\|_\infty} \vee \|a\|_\infty^{\frac{1}{L_0}}$,
        $\displaystyle{\max_{l\in [L_0]} \|b^{(l)}\|_\infty \leq |t|}$, and
    \item $\displaystyle{\mathrm{Conv}_{\bmw, \bmb}^{\id}: \bbR^{D}\to \bbR^D}$  satisfies $\mathrm{Conv}_{\bmw, \bmb}^{\id}(x) = a^{\top} x - t$ for any $x \in [-1, 1]^D$,
\end{enumerate}
where $\bmw=(w^{(\ell)})_\ell$ and $\bmb=(b^{(\ell)})_\ell$.
\end{lemma}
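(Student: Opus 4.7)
The plan is to build the $L_0$-layer linear CNN with two intermediate channels: one channel progressively shifts a copy of $x$ to the right, while the other accumulates the partial inner product $\sum_i a_i x_i$ one filter-window at a time. Since the receptive field at position $1$ after $L_0$ layers has width exactly $1 + L_0(K-1) \geq D$, every $x_i$ is reachable; the task is then simply to arrange the filters so that the first output coordinate equals $a^\top x - t$.

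Concretely, I would partition the coefficients $a = (a_1, \ldots, a_D)$ into one initial chunk of size $K$ followed by $L_0 - 1$ chunks of size $K-1$ (padding with zeros when indices exceed $D$). For layer $1$ I would set $w^{(1)}_{K,1,1}=1$ (a pure right-shift by $K-1$) and $w^{(1)}_{n,2,1}=a_n$ for $n = 1, \ldots, K$, so that at position $\beta$ channel $1$ holds $x_{\beta + K - 1}$ and channel $2$ holds $\sum_{n=1}^{K}a_n x_{\beta+n-1}$. For each intermediate layer $l \in \{2, \ldots, L_0 - 1\}$ I would set $w^{(l)}_{K,1,1} = 1$ (shift channel $1$ by another $K-1$), $w^{(l)}_{1,2,2}=1$ (carry the running sum through), and $w^{(l)}_{n,2,1}=a_{(l-1)(K-1)+n}$ for $n = 2, \ldots, K$ (inject the $l$-th chunk of $a$'s into the running sum via the shifted $x$'s in channel $1$), with all other entries $0$. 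An easy induction on $l$ then shows that at layer $l$, channel $1$ at position $\beta$ equals $x_{\beta + l(K-1)}$ and channel $2$ equals $\sum_{n=1}^{1+l(K-1)} a_n x_{\beta+n-1}$ (with $x_m = 0$ for $m > D$ by the one-sided padding convention). The final layer collapses to a single output channel via $w^{(L_0)}_{1,1,2} = 1$ and $w^{(L_0)}_{n,1,1} = a_{(L_0-1)(K-1)+n}$ for $n = 2, \ldots, K$, and I would place the bias $b^{(L_0)} = t$ on that single output coordinate (with $b^{(l)} = 0$ for $l < L_0$). Reading off position $1$ then gives $\sum_{n=1}^{D} a_n x_n - t = a^\top x - t$.

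The main obstacle will be the norm bookkeeping. The raw construction above has filter entries in $\{0,1\} \cup \{a_n\}$, which only yields $\max_l \|w^{(l)}\|_\infty \leq \max(1, \|a\|_\infty)$ --- matching the advertised bound $\|a\|_\infty$ only when $\|a\|_\infty \geq 1$. To tighten it in the remaining regime I would absorb a factor of $\|a\|_\infty$ into each shift-$1$ and pass-through-$1$ entry and correspondingly divide the coefficient-bearing entries of the successor layer by $\|a\|_\infty$, iterating this rescaling layer by layer; the degenerate case $\|a\|_\infty = 0$ is handled separately by taking all filters to be $0$ and placing $-t$ in the last bias alone. The biases are $0$ at every layer except the last, where $b^{(L_0)} = t$ sits at the single output coordinate, so $\max_l \|b^{(l)}\|_\infty = |t|$ is automatic.
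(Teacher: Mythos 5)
Your filter construction is essentially the paper's: a two-channel linear CNN in which one channel translates the input by $K-1$ positions per layer while the other accumulates the inner product chunk by chunk, with the bias applied only at the final layer. You swap the roles of the two channels, use chunks of size $K-1$ after the first layer, and include an explicit pass-through entry for the running-sum channel; your induction showing that the first coordinate of $\mathrm{Conv}^{\id}_{\bmw,\bmb}(x)$ equals $a^{\top}x-t$ is correct, and your index bookkeeping is in fact more careful than the matrices displayed in the paper's proof (which index the chunks by multiples of $K$ and do not exhibit a carry entry for the accumulated sum).

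The gap is in the norm claim (property 1). As you observe, the raw construction only gives $\max_l\|w^{(l)}\|_\infty=\|a\|_\infty\vee 1$ --- which is also all that the paper's own construction achieves, since its shift filters contain unit entries. Your proposed rescaling does not close this gap: if every structural (shift and pass-through) entry is set to $s:=\|a\|_\infty$, then a coefficient injected at layer $j$ reaches the output multiplied by $s^{j-1}$ (accumulated on the shifting channel before injection) times $s^{L_0-j}$ (from the subsequent pass-throughs), i.e.\ by $s^{L_0-1}$ regardless of $j$; to recover $a^{\top}x$ the coefficient-bearing entries must therefore be $a_m/s^{L_0-1}$, whose magnitude can reach $\|a\|_\infty^{2-L_0}\ge 1>\|a\|_\infty$ when $\|a\|_\infty<1$ and $L_0\ge 2$, so the violation merely migrates to the other entries. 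Indeed no construction can satisfy property 1 as literally stated in this regime: a composition of $L_0$ linear convolutions with at most two channels, filter size $K$, and all weights bounded by $\|a\|_\infty$ has sup-norm Lipschitz constant at most $(2K\|a\|_\infty)^{L_0}$ (by the bound $\|L^{w}\|_\op\le IK\|w\|_\infty$), whereas $x\mapsto a^{\top}x-t$ has Lipschitz constant $\|a\|_1\ge\|a\|_\infty$ on $[-1,1]^D$, and $(2K\|a\|_\infty)^{L_0}<\|a\|_\infty$ once $\|a\|_\infty<(2K)^{-L_0/(L_0-1)}$. The correct course is to drop the rescaling, handle the $\|a\|_\infty=0$ case separately as you do, and state the achievable bound $\max_l\|w^{(l)}\|_\infty=\|a\|_\infty\vee 1$, which is what the paper's construction actually delivers.
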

\begin{proof}
First, we observe that the convolutional layer constructed from $u = \begin{bmatrix} u_1 &  \ldots &  u_K\end{bmatrix}^\top \in \bbR^{K \times 1\times 1}$ takes the inner product with the first $K$ elements of the input signal:
$L^{u}(x) = \sum_{k=1}^{K} u_k x_k$.
In particular, $u = \begin{bmatrix} 0 &  \ldots & 0 &  1\end{bmatrix}^\top \in \bbR^{K \times 1\times 1}$ works as the ``left-translation" by $K-1$.

Let $c = \|a\|_\infty$
We first consider the case $c \geq 1$. We construct $\bmw$ to take the inner product with the $(K-1)$ left-most elements in the first channel and shift the input signal by $(K-1)$ with the second channel. Specifically, we define $\bmw = (w^{(1)}, \ldots, w^{(L_0)})$ by
\begingroup
\allowdisplaybreaks
\begin{alignat*}{23}
    & (w^{(1)})_{:, 1, :} = 
    \begin{bmatrix}
        a_1 \\
        \vdots \\
        a_{K-1} \\
        0
    \end{bmatrix},
    && (w^{(1)})_{:, 2, :} = 
        \begin{bmatrix}
        0      \\
        \vdots \\
        0      \\
        1      
    \end{bmatrix},\\
    & (w^{(l)})_{:, 1, :} = 
    \begin{bmatrix}
        0      & a_{(l-1)(K-1)+1} \\
        \vdots & \vdots \\
        0      & a_{l(K-1)} \\
        0      & 0
    \end{bmatrix},
    && (w^{(l)})_{:, 2, :} = 
    \begin{bmatrix}
        0      & 0 \\
        \vdots & \vdots \\
        0      & 0 \\
        1      & 0
    \end{bmatrix},\\
    & (w^{(L_0)})_{:, 1, :} = 
    \begin{bmatrix}
        0      & a_{(L_0-1)(K-1)+1} \\
        \vdots & \vdots \\
        0      & a_{D} \\
        0      & 0 \\
        \vdots & \vdots \\
        0      & 0 \\        
    \end{bmatrix}. &&
\end{alignat*}
\endgroup
Here, $(w^{(L_0)})_{:, 1, :}$ may not have all-zero rows (this happens when $D = (L_0 - 1)(K-1) + K$, that is, $L_0 = \frac{D-1}{K-1}$.)
We see that
\begin{equation*}
    \max_{l\in [L_o]}\|w_m\|_\infty  =  \|a\|_\infty \vee 1 = \|a\|_\infty.
\end{equation*}
We set $\bmb := (\underbrace{0, \ldots, 0}_{\text{$(L_0-1)$ times}}, t)$.
Then, $\bmw$ and $\bmb$ satisfy conditions 1 and 2.

When $0 < c < 1$, we rescale the elements in $w^{(l)}$'s in the $c \geq 1$ case so that their scales are approximately the same. More specifically, we replace $a_i$ with $a_i c^{-\frac{L_0-1}{L_0}}$ and $1$ with $c^{\frac{1}{L_0}}$. We use the same $\bmb$ as the $c \geq 1$ case.
This change does not change the output of the CNN, thereby satisfying the condition 1.
Since $a_i \leq c$, we have
\begin{equation*}
    \max_{l\in [L_o]}\|w_m\|_\infty  =  c^{\frac{1}{L_0}}.
\end{equation*}
Therefore, the condition 2 is satisfied.
When $c = 0$, we set $w^{(l)} = 0$ and $\bmb$ as in the other cases.
\end{proof}

\subsection{Transformation of a Linear CNN into a ReLU CNN}

The following lemma shows that we can convert any linear CNN to a ReLU CNN with approximately four times larger parameters.
This type of lemma is also found in~\citet{petersen2018optimal} (Lemma 2.3).
The constructed network resembles a CNN with CReLU activation~\citep{shang2016understanding}.

\begin{lemma}\label{lem:doubling}
Let ${\bm C} = (C^{(1)}, \ldots, C^{(L)})\in \bbN^{L}_{+}$ be channel sizes ${\bm K} = (K^{(1)}, \ldots, K^{(L)})\in \bbN^{L}_{+}$ be filter sizes.
Let $w^{(l)}\in \bbR^{K^{(l)}\times C^{l} \times C^{(l)}}$ and $b^{(l)}\in \bbR^{(l)}$.
Consider the linear convolution layers constructed from $\bmw$ and $\bmb$: $f_\id := \mathrm{Conv}^{\id}_{\bmw, \bmb}: \bbR^{D}\to \bbR^{D\times C^{(L)}} \bbN^{L}_{+}$ where $\bmw = (w^{(l)})_l$ and $\bmb = (b^{(l)})_l$ .
Then, there exists a pair $\tilde\bmw = (\tilde{w}^{(l)})_{l\in [L]}, \tilde\bmb = (\tilde{b}^{(l)})_{l\in [L]}$ where $\tilde{w}^{(l)}\in \bbR^{K^{(l)}\times 2C^{(l)}\times 2C^{(l-1)}}$ and $\tilde{b}^{(l)} \in \bbR^{2C^{(l)}}$ such that
\begin{enumerate}
    \item $\displaystyle{\max_{l\in [L]}\|\tilde{w}^{(l)}\|_\infty = \max_{l\in [L]} \|w^{(l)}\|_\infty}$, $\displaystyle{\max_{l\in [L]}\|\tilde{b}^{(l)}\|_\infty = \max_{l\in [L]} \|b^{(l)}\|_\infty}$, and 
    \item $f_\relu := \mathrm{Conv}^{\relu}_{\tilde\bmw, \tilde\bmb}: \bbR^D\to \bbR^{D\times 2C^{(L)}}$, satisfies $f_\relu(\cdot) = (f_{\id}(\cdot)_+, f_{\id}(\cdot)_-)$.
\end{enumerate}
\end{lemma}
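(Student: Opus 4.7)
The plan is to prove the lemma by an explicit layerwise construction that uses the ReLU identity $z = z_+ - z_-$ to transport the signal of the linear network as an ordered pair of its positive and negative parts. Writing $y^{(l)}$ for the depth-$l$ pre-activation of $f_\id$ (so $y^{(0)} = x$ and $y^{(l)} = L^{w^{(l)}}(y^{(l-1)}) - \bmone_D \otimes b^{(l)}$), I would maintain the invariant that the depth-$l$ output of the ReLU network equals $(y^{(l)}_+, y^{(l)}_-) \in \bbR^{D \times 2C^{(l)}}$, with the first $C^{(l)}$ channels holding $y^{(l)}_+$ and the last $C^{(l)}$ holding $y^{(l)}_-$. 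Taking $l = L$ then yields item~2.

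The filters will be chosen with a $2\times2$ block structure on channel groups. For $l \geq 2$, I would define $\tilde w^{(l)} \in \bbR^{K^{(l)} \times 2C^{(l)} \times 2C^{(l-1)}}$ so that its four $K^{(l)} \times C^{(l)} \times C^{(l-1)}$ sub-blocks, indexed by (output-group, input-group) $\in \{1,2\}^2$, equal $+w^{(l)}, -w^{(l)}, -w^{(l)}, +w^{(l)}$ in the order $(1,1), (1,2), (2,1), (2,2)$, and set $\tilde b^{(l)} = (b^{(l)}, -b^{(l)})$. For $l = 1$, where the input channel count has not yet been doubled, the same prescription collapses to stacking $w^{(1)}$ over $-w^{(1)}$ in the output-channel dimension, with $\tilde b^{(1)} = (b^{(1)}, -b^{(1)})$. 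Because every entry of $\tilde w^{(l)}$ and $\tilde b^{(l)}$ is either zero or $\pm$ an entry of the original parameter, item~1 follows at once.

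The verification of the invariant is a short induction on $l$. At the inductive step, feeding $(y^{(l-1)}_+, y^{(l-1)}_-)$ into the $2\times2$-block convolution yields a pre-activation whose top block equals $L^{w^{(l)}}(y^{(l-1)}_+) - L^{w^{(l)}}(y^{(l-1)}_-) - \bmone_D \otimes b^{(l)} = L^{w^{(l)}}(y^{(l-1)}) - \bmone_D \otimes b^{(l)} = y^{(l)}$ (using linearity of $L^{w^{(l)}}$ together with $y^{(l-1)} = y^{(l-1)}_+ - y^{(l-1)}_-$), and whose bottom block is symmetrically $-y^{(l)}$; applying ReLU channelwise then returns exactly $(y^{(l)}_+, y^{(l)}_-)$. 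The base case $l = 1$ is the same computation with the single-channel input~$x$.

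I do not expect any substantial obstacle; the statement is essentially a CReLU-style observation. The only point needing attention is to set up the block-indexing of $\tilde w^{(l)}$ so that the multi-channel convolution convention $y_{\beta,j} = \sum_{i,\alpha}(L^w)^{\beta,j}_{\alpha,i}\, x_{\alpha,i}$ genuinely realizes the intended channel-group action (top-output channels equal to $L^{w^{(l)}}(z_+) - L^{w^{(l)}}(z_-)$, bottom-output channels equal to its negation). Once that bookkeeping is written down correctly, the rest is routine algebra, and the first-layer edge case handles itself by the obvious single-channel collapse of the block pattern.
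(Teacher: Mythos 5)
Your construction is exactly the one the paper uses: the same $2\times 2$ block pattern $(+w,-w;-w,+w)$ with biases $(b,-b)$, the same collapsed first layer, and the same induction on the invariant that the depth-$l$ ReLU output equals $(y^{(l)}_+, y^{(l)}_-)$ via linearity and $z = z_+ - z_-$. The proposal is correct and matches the paper's proof in all essentials.
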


\begin{proof}
We define $\tilde{\bmw}$ and $\tilde{\bmb}$ as follows:
\begin{align*}
\dotdot{(\tilde{w}^{(1)})}{k} &=
  \begin{bmatrix} 
    \dotdot{(w^{(1)})}{k}\\
    -\dotdot{(w^{(1)})}{k}
  \end{bmatrix} \ \text{for $k=1, \ldots, K^{(1)}$},\\
\dotdot{(\tilde{w}^{(l)})}{k} &=
\begin{bmatrix} 
    \dotdot{(w^{(l)})}{k} & -\dotdot{(w^{(l)})}{k} \\
    -\dotdot{(w^{(l)})}{k} & \dotdot{(w^{(l)})}{k}
  \end{bmatrix} \ \text{for $k=1, \cdots K^{(l)}$},\\
\tilde{b}^{(l)} &=
  \begin{bmatrix}
    b^{(l)}\\
    -b^{(l)}
  \end{bmatrix}
\end{align*}
By definition, a pair $(\tilde{\bmw}, \tilde{\bmb})$ satisfies the conditions (1) and (2).
For any $x\in \bbR^{D}$, we set $y^{(l)} := \mathrm{Conv}^{\id}_{\bmw[1:l], \bmb[1:l]}(x) \in \bbR^{C^{(l)}\times D}$.
We will prove
\begin{align}
    \mathrm{Conv}^{\relu}_{\tilde{\bmw}[1:l], \tilde{\bmb}[1:l]}(x) = \begin{bmatrix}y^{(l)}_+ & y^{(l)}_-\end{bmatrix}^{\top} \label{eq:relu-conv-induction}
\end{align}
for $l=1, \ldots, L$ by induction.
Note that we obtain $f_\relu (\cdot) = (f_{\id+}(\cdot), f_{\id-}(\cdot))$ by setting $l=L$.
For $l=1$, by definition of $\tilde{w}^{(1)}$ we have,
\begin{align*}
    \dotdot{(\tilde{w}^{(1)})}{\alpha}x^{\beta, :} =
    \begin{bmatrix}
        \dotdot{(w^{(1)})}{\alpha}x^{\beta, :} \\
        - \dotdot{(w^{(1)})}{\alpha}x^{\beta, :}
    \end{bmatrix}
\end{align*}
for any $\alpha, \beta \in [D]$.
Summing them up and using the definition of $\tilde{b}^{(1)}$ yield
\begin{align*}
    & [L^{\tilde{w}^{(1)}}(x) - \bmone_D\otimes\tilde{b}^{(1)}]^{\top}
    = \begin{bmatrix}
        L^{w^{(1)}}(x) - \bmone_D\otimes b^{(1)} \\
        - \left(L^{w^{(1)}}(x) - \bmone_D\otimes b^{(1)}\right)
    \end{bmatrix}^{\top}
\end{align*}
Suppose (\ref{eq:relu-conv-induction}) holds up to $l$ $(l < L)$, by the definition of $\tilde{w}^{(l+1)}$, 
\begin{align*}
    \dotdot{(\tilde{w}^{(l+1)})}{\alpha}
        \begin{bmatrix}
            (y^{(l)}_+)^{\beta, :}\\
            (y^{(l)}_-)^{\beta, :}
        \end{bmatrix}
    &= \begin{bmatrix}
        \dotdot{(w^{(l+1)})}{\alpha} & -\dotdot{(w^{(l+1)})}{\alpha}\\
        -\dotdot{(w^{(l+1)})}{\alpha} & \dotdot{(w^{(l+1)})}{\alpha}
        \end{bmatrix}
    \begin{bmatrix}
        (y^{(l)}_+)^{\beta, :}\\
        (y^{(l)}_-)^{\beta, :}
    \end{bmatrix}\\
    &= \begin{bmatrix}
        \dotdot{(w^{(l+1)})}{\alpha}\left( (y^{(l)}_+)^{\beta, :} - (y^{(l)}_-)^{\beta, :}\right) \\
        -\dotdot{(w^{(l+1)})}{\alpha}\left( (y^{(l)}_+)^{\beta, :} - (y^{(l)}_-)^{\beta, :}\right)
    \end{bmatrix}\\
    &= \begin{bmatrix}
        \dotdot{(w^{(l+1)})}{\alpha} (y^{(l)})^{\beta, :} \\
        -\dotdot{(w^{(l+1)})}{\alpha} (y^{(l)})^{\beta, :}
    \end{bmatrix}
\end{align*}
for any $\alpha, \beta \in [D]$.
Again, by taking the summation and using the definition of $\tilde{b}^{(l+1)}$, we get
\begin{align*}
    [L^{\tilde{w}^{(l+1)}}([y^{(l)}_+, y^{(l)}_-]) - \bmone_D\otimes\tilde{b}^{(1)}]^{\top}
    = \begin{bmatrix}
        L^{w^{(l+1)}}(y^{(l)}) - \bmone_D\otimes b^{(l+1)} \\
        - \left(L^{w^{(l+1)}}(y^{(l)}) - \bmone_D\otimes b^{(l+1)}\right)
    \end{bmatrix}^{\top}.
\end{align*}
By applying ReLU, we get
\begin{align}\label{eq:relu-conv-induction-2}
    \mathrm{Conv}_{\tilde{w}^{(l+1)}, \tilde{b}^{(l+1)}}^{\relu} \left([y^{(l)}_+, y^{(l)}_-]\right)
    = \relu\left([y^{(l+1)}, -y^{(l+1)}]\right).
\end{align}
By using the induction hypothesis, we get
\begin{align*}
    \mathrm{Conv}_{\tilde{\bmw}[1:(l+1)], \tilde{\bmb}[1:(l+1)]}^{\relu}(x)
    &=\mathrm{Conv}_{\tilde{w}^{(l+1)}, \tilde{b}^{(l+1)}}^{\relu} \left([y^{(l)}_+, y^{(l)}_-]\right)\\
    &=\relu\left([y^{(l+1)}, -y^{(l+1)}]\right)\\
    &= [y^{(l+1)}_+, -y^{(l+1)}_-]
\end{align*}
Therefore, the claim holds for $l+1$.
By induction, the claim holds for $L$, which is what we want to prove.
\end{proof}

\subsection{Concatenation of CNNs}

We can concatenate two CNNs with the same depths and filter sizes in parallel.
Although it is almost trivial, we state it formally as a proposition.
In the following proposition, $C^{(0)}$ and ${C'}^{(0)}$ is not necessarily $1$.

\begin{prop}\label{concat2}
Let ${\bm C} = (C^{(l)})_{l\in [L]}$, ${\bm C}' = ({C'}^{(l)})_{l\in [L]}$, and ${\bm K} = (K^{(l)})_{l\in [L]}\in \bbN^{L}_{+}$.
Let $w^{(l)}\in \bbR^{K^{(l)}\times C^{(l)}\times C^{(l-1)}}$, $b\in \bbR^{C^{(l)}}$ and denote $\bmw = (w^{(l)})_l$ and $\bmb = (b^{(l)})_l$.
We define $\bmw'$ and $\bmb'$ in the same way, with the exception that $C^{(l)}$ is replaced with ${C'}^{(l)}$.
We define $\tilde{\bmw} = (\tilde{w}^{(1)}, \ldots, \tilde{w}^{(L)})$ and $\tilde{\bmb} = (\tilde{b}^{(1)}, \ldots, \tilde{b}^{(L)})$ by
\begin{align*}
    \dotdot{(\tilde{w}^{(l)})}{k} &:= 
    \begin{bmatrix}
        w^{(l)} & 0 \\
        0       & {w'}^{(l)}
    \end{bmatrix} \in \bbR^{(C^{(l)} + {C'}^{(l)}) \times (C^{(l-1)} + {C'}^{(l-1)})}\\
    \tilde{b}^{(l)} &:=
    \begin{bmatrix}
        b^{(l)} \\
        {b'}^{(l)}
    \end{bmatrix} \in \bbR^{(C^{(l)} + {C'}^{(l)})}
\end{align*}
for $l\in[L]$ and $k\in[K^{(l)}]$.
Then, we have,
\begin{align*}
    \mathrm{Conv}_{\tilde{\bmw}, \tilde{\bmb}}^{\sigma}(
    \begin{bmatrix}
        x & x'
    \end{bmatrix}) = 
    \begin{bmatrix}
        \mathrm{Conv}_{\bmw, \bmb}^{\sigma}(x) & \mathrm{Conv}_{\bmw', \bmb'}^{\sigma}(x')
    \end{bmatrix}
\end{align*}
for any $x, x'\in \bbR^{D\times C^{(0)}}$ and any $\sigma:\bbR\to \bbR$.
\qed
\end{prop}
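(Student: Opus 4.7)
The plan is to prove Proposition \ref{concat2} by induction on the depth $L$, with the base case reducing to a direct computation showing that the block-diagonal weight tensor acts channel-block-wise on the concatenated input. The central observation is that the convolution operator $L^{\tilde{w}^{(l)}}$ defined from a block-diagonal filter $\tilde{w}^{(l)}$ has its nonzero entries $(L^{\tilde{w}^{(l)}})^{\beta, j}_{\alpha, i}$ confined to pairs $(j, i)$ where both indices lie in the same block (top-left $w^{(l)}$ block or bottom-right ${w'}^{(l)}$ block), since for mixed pairs the underlying filter entry is zero by construction. This structural fact is exactly what decouples the forward pass into two independent streams.

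For the base case $L = 1$, I unwind the definition of the convolutional layer: for the top block of output channels, the sum $\sum_{i, \alpha} (L^{\tilde{w}^{(1)}})^{\beta, j}_{\alpha, i}\, [x\ x']_{\alpha, i}$ collapses to $\sum_{i \leq C^{(0)}, \alpha}(L^{w^{(1)}})^{\beta, j}_{\alpha, i}\, x_{\alpha, i}$ by the block-diagonal structure, which is exactly $L^{w^{(1)}}(x)^{\beta, j}$; an analogous identity holds for the bottom block of output channels giving $L^{{w'}^{(1)}}(x')$. Stacking these with the concatenated bias $\tilde{b}^{(1)} = [b^{(1)};\, {b'}^{(1)}]$ and applying $\sigma$ element-wise preserves the block partition, since $\sigma$ acts independently on each coordinate. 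This yields $\mathrm{Conv}_{\tilde{w}^{(1)}, \tilde{b}^{(1)}}^{\sigma}([x\ x']) = [\mathrm{Conv}_{w^{(1)}, b^{(1)}}^{\sigma}(x)\ \mathrm{Conv}_{{w'}^{(1)}, {b'}^{(1)}}^{\sigma}(x')]$.

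For the inductive step, assume the claim holds up through depth $L-1$. Applying the inductive hypothesis to $\tilde{\bmw}[1:L-1]$ and $\tilde{\bmb}[1:L-1]$ gives an intermediate output of the form $[y\ y']$ with $y = \mathrm{Conv}_{\bmw[1:L-1], \bmb[1:L-1]}^{\sigma}(x)$ and $y' = \mathrm{Conv}_{\bmw'[1:L-1], \bmb'[1:L-1]}^{\sigma}(x')$; these have channel counts $C^{(L-1)}$ and ${C'}^{(L-1)}$, matching the input channel dimensions of $\tilde{w}^{(L)}$. Then the base-case argument applied to the final layer $(\tilde{w}^{(L)}, \tilde{b}^{(L)})$ on the input $[y\ y']$ produces $[\mathrm{Conv}_{w^{(L)}, b^{(L)}}^{\sigma}(y)\ \mathrm{Conv}_{{w'}^{(L)}, {b'}^{(L)}}^{\sigma}(y')]$, which by construction of $y, y'$ equals $[\mathrm{Conv}_{\bmw, \bmb}^{\sigma}(x)\ \mathrm{Conv}_{\bmw', \bmb'}^{\sigma}(x')]$, completing the induction.

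There is no genuine obstacle; the only care needed is bookkeeping of indices to confirm that the block-diagonal pattern at the filter level survives the tensorization into $L^{\tilde{w}^{(l)}}$. This is immediate from the formula for $L^w$ in the paper, since $(L^{\tilde{w}^{(l)}})^{\beta, j}_{\alpha, i} = \tilde{w}^{(l)}_{(\alpha-\beta+1), j, i}$ when $0 \leq \alpha - \beta \leq K^{(l)}-1$, and this entry inherits the block-diagonal structure of $\tilde{w}^{(l)}$ in the $(j, i)$ indices while being agnostic to $(\alpha, \beta)$. No dependence on $\sigma$ beyond being applied element-wise is used, so the identity holds uniformly for all activations.
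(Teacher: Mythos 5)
Your proof is correct and is exactly the argument the paper has in mind: the paper states this proposition with no written proof (it is declared ``almost trivial''), and your induction on $L$ with the block-diagonal decoupling of $L^{\tilde{w}^{(l)}}$ in the channel indices $(j,i)$ is the routine verification being omitted. No gaps; the only cosmetic point is that $x'$ should live in $\bbR^{D\times {C'}^{(0)}}$, an imprecision already present in the paper's statement rather than in your argument.
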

Note that by the definition of $\|\cdot\|_0$ and $\|\cdot\|_\infty$, we have
\begin{align*}
    \max_{l\in [L]} \|\tilde{w}^{(l)}\|_{\infty} &= \max_{l\in [L]} \|w^{(l)}\|_{\infty} \vee \|{w'}^{(l)}\|_{\infty}, \quad \text{and} \\
    \max_{l\in [L]} \|\tilde{b}^{(l)}\|_{\infty} &= \max_{l\in [L]} \|b^{(l)}\|_{\infty} \vee \|{b'}^{(l)}\|_{\infty}.
\end{align*}

\subsection{Proof of Theorem \ref{thm:fnn-to-cnn-extend}}

By the definition of $\mathcal{F}^{\mathrm{(FNN)}}_{\bmD, B^{\mathrm{(bs)}}, B^{\mathrm{(fin)}}}$, there exists a 4-tuple $\bmtheta = ((W_m^{(l)})_{m, l}, (b_m^{(l)})_{m, l}, (w_m)_m, b)$ compatible with $(D_m^{(l)})_{m, l}$ ($m\in [M]$ and $l\in [L_m]$) such that
\begin{align*}
    & \max_{m\in[M], l\in [L_m]} (\|W_m^{(l)}\|_{\infty}\vee \|b_m^{(l)}\|_\infty) \leq B^{\mathrm{(bs)}},\\
    & \max_{m\in[M]} \|w_m\|_{\infty}\vee |b| \leq B^{\mathrm{(fin)}},
\end{align*}
and 
$f^{(\mathrm{FNN})} = \mathrm{FNN}_{\bmtheta}^{\relu}$.
We will construct the desired CNN consisting of $M$ residual blocks, whose $m$-th residual block is made from the ingredients of the corresponding $m$-th block in $f^{\mathrm{(FNN)}}$ (specifically, $\bmW_m:=(W_m^{(l)})_{l\in [L_m]}$, $\bmb_m:=(b_m^{(l)})_{l\in [L_m]}$, and $w_m$).

\textbf{[Padding Block]}:
We prepare the padding operation $P$ that multiplies the channel size by 3 (i.e., we set $C^{(0)} = 3$).

\textbf{[$\bm{m=1, \ldots, M}$ Blocks]}: For fixed $m\in [M]$, we first create a CNN realizing $\mathrm{FC}_{\bmW_m, \bmb_m}^{\relu}$.
We treat the first layer (i.e. $l=1$) of $\mathrm{FC}_{\bmW_m, \bmb_m}^{\relu}$ as concatenation of $D^{(1)}_m$ hinge functions $\bbR^D \ni x \mapsto f_d(x) := ((W^{(1)}_{m})_{d} x - b^{(1)}_m)_+$ for $d\in [D^{(1)}_m]$.
Here, $(W^{(1)}_{m})_{d}\in \bbR^{1\times D}$ is the $d$-th row of the matrix $W^{(1)}_{m}\in \bbR^{D^{(1)}_m\times D}$.
We apply Lemma \ref{lem:realize-ridge-with-cnn} and Lemma \ref{lem:doubling} and obtain ReLU CNNs realizing the hinge functions.
By combining them in parallel using Proposition \ref{concat2},
we have a learnable parameter $\bmtheta^{(1)}_m$ such that the ReLU CNN $\mathrm{Conv}^{\relu}_{\bmtheta^{(1)}_m}:\bbR^{D\times 2}\to \bbR^{D\times 2D^{(1)}_m}$ constructed from $\bmtheta^{(1)}_m$ satisfies
\begin{align*}
    \mathrm{Conv}^{\relu}_{\bmtheta^{(1)}_m}(\begin{bmatrix}x & x'\end{bmatrix}^\top)_1 = \begin{bmatrix}f_{1}(x) & \ast & \cdots     & f_{D^{(1)}_m}(x) & \ast \end{bmatrix}^\top.
\end{align*}
Since we double the channel size in the $m=0$ part, the identity mapping has two channels.
Therefore, we made $\mathrm{Conv}^{\relu}_{\bmtheta^{(1)}_m}$ so that it has two input channels and neglects the input signals coming from the second one.
This is possible by adding filters consisting of zeros appropriately.

Next, for $l$-th layer ($l = 2, \ldots, L_m$), we prepare size-$1$ filters
$w_m^{(2)} \in \bbR^{1\times D^{(2)}_m \times 2D^{(1)}}_m$ for $l=2$ and
$w_m^{(l)} \in \bbR^{1\times D^{(l)}_m \times 2D^{(l-1)}_m}$ for $l = 3, \ldots, D^{(L_m)}_m$ defined by
\begin{align*}
    \dotdot{(w^{(l)}_m)}{1}:= \begin{cases}
        W^{(2)}_m \otimes \begin{bmatrix}1 & 0\end{bmatrix} & \text{if $l=2$} \\
        W^{(l)}_m & \text{if $l=3, \ldots, D^{(L_m)}_m$},
    \end{cases}
\end{align*}
where $\otimes$ is the Kronecker product of matrices.
Intuitively, the $l=2$ layer will pick all odd indices of the output of $\mathrm{Conv}^{\relu}_{\bmtheta_{m}^{(1)}}$ and apply the fully-connected layer.
We construct CNNs from $\theta^{(l)}_m := (w^{(l)}_m, b^{(l)}_m)$ ($l\geq 2$) and concatenate them along with $\mathrm{Conv}_{\bmtheta^{(1)}_m}^{\relu}$:
\begin{align*}
    \mathrm{Conv}_{m}:=
        \mathrm{Conv}_{\theta^{(L_m)}_m}^{\relu} \circ
        \cdots \circ
        \mathrm{Conv}_{\theta^{(2)}_m}^{\relu} \circ
        \mathrm{Conv}_{\bmtheta^{(1)}_m}^{\relu}.
\end{align*}
Note that $\mathrm{Conv}^{\relu}_{\theta_m^{(l)}}$ ($l\geq 2$) just rearranges parameters of $\mathrm{FC}^{\relu}_{\bmW_m, \bmb_m}$.
The output dimension of $\mathrm{Conv}_m$ is either $\bbR^{D\times 2D^{(L_m)}_m}$ (if $L_m = 1$) or $\bbR^{D\times D^{(L_m)}_m}$ (if $L_m \geq 2$).,
We denote the output channel size (either $2D^{(L_m)}_m$ or $D^{(L_m)}_m$) by $D_m^{\mathrm{(out)}}$.
By the inductive calculation, we have
\begin{align*}
    \mathrm{Conv}_m (x)_1 =
    \begin{cases}
        \mathrm{FC}_{\bmW_m, \bmb_m}^{\relu} (x) \otimes \begin{bmatrix} 1 & 0\end{bmatrix} & \text{if $L_m = 1$}\\
        \mathrm{FC}_{\bmW_m, \bmb_m}^{\relu} (x) & \text{if $L_m \geq 2$}
    \end{cases}.
\end{align*}
By definition, $\mathrm{Conv}_{m}$ has $L_0 + L_m - 1$ layers and at most
$4D_m^{(1)} \vee \max_{l=2, \ldots L_m} D_m^{(l)} \leq 4 \max_{l\in [L_m]} D_m^{(l)}$ channels.
The $\infty$-norm of its parameters does not exceed that of parameters in $\mathrm{FC}^{\relu}_{\bmW_m, \bmb_m}$.

Next, we consider the filter $\tilde{w}_m\in \bbR^{1\times 3 \times D_m^{\mathrm{(out)}}}$ defined by
\begin{align*}
    (\tilde{w}_m)_{1, :, :} =
    \frac{\tilde{B}^{\mathrm{(bs)}}}{B^{\mathrm{(fin)}}}\begin{cases}
        \begin{bmatrix}
            0& \cdots & 0 \\ 
            \multicolumn{3}{c}{w_m \otimes \begin{bmatrix}0 & 1\end{bmatrix}} \\
            \multicolumn{3}{c}{-w_m \otimes \begin{bmatrix}0 & 1\end{bmatrix}}
        \end{bmatrix} &  \text{if $L_m = 1$}\\
        \begin{bmatrix}
            0&\cdots & 0 \\
            \multicolumn{3}{c}{w_m} \\
            \multicolumn{3}{c}{-w_m} \\        
        \end{bmatrix}& \text{if $L_m \geq 2$},
    \end{cases}
\end{align*}
where, $\tilde{B}^{\mathrm{(bs)}} = B^\mathrm{(bs)} \vee (B^\mathrm{(bs)})^{\frac{1}{L_0}}$.
Then, $\mathrm{Conv}'_m:=\mathrm{Conv}_{\tilde{w}_m, 0}^{\relu}$ adds the output of $m$-th residual block, weighted by $w_m$, to the second channel in the identity connections, while keeping the first channel intact.
Note that the final layer of each residual block does not have the ReLU activation.
By definition, $\mathrm{Conv}'_m$ has $D^{(L_m)}_m$ parameters.

Given $\mathrm{Conv}_{m}$ and $\mathrm{Conv}'_m$ for each $m\in [M]$, we construct a CNN realizing $\mathrm{FNN}^{\relu}_{\bmtheta}$.
Let $f^{\mathrm{(conv)}}: \bbR^D\to \bbR^{D\times 3}$ be the sequential interleaving concatenation of $\mathrm{Conv}_{m}$ and $\mathrm{Conv}'_m$, that is,
\begin{align*}
    f^{\mathrm{(conv)}}
    := (\mathrm{Conv}'_M \circ \mathrm{Conv}_{M} + I) \circ \cdots
    \quad \circ (\mathrm{Conv}'_1 \circ \mathrm{Conv}_{1} + I) 
    \circ P.
\end{align*}
Then, we have
\begin{align*}
    f^{\mathrm{(conv)}}_{1, :} = \begin{bmatrix} 0 & z_1 & z_2 \end{bmatrix}\in \bbR^{3}
\end{align*}
where $z_1 = \frac{\tilde{B}^{\mathrm{(bs)}}}{B^{\mathrm{(fin)}}} \sum_{m=1}^{M} \left(w_m^\top \mathrm{FC}^{\relu}_{\bmW_m, \bmb_m}\right)_+$ and
$z_2 = \frac{\tilde{B}^{\mathrm{(bs)}}}{B^{\mathrm{(fin)}}} \sum_{m=1}^{M} \left(w_m^\top \mathrm{FC}^{\relu}_{\bmW_m, \bmb_m}\right)_-$.

\textbf{[Final Fully-connected Layer]} 
Finally, we set
\begin{align*}
    w:=\frac{B^{\mathrm{(fin)}}}{\tilde{B}^{\mathrm{(bs)}}} \begin{bmatrix}
        0 & 0 & \cdots & 0 \\
        1 & 0 & \cdots & 0 \\
        -1 & 0 & \cdots & 0 \\
    \end{bmatrix}\in \bbR^{D \times 3}
\end{align*}
and put $\mathrm{FC}^{\id}_{w, b}$ on top of $f^{\mathrm{(conv)}}$ to pick the first coordinate of $f^{\mathrm{(conv)}}$ and subtract the bias term.
By definition, $f^{\mathrm{(CNN)}} := \mathrm{FC}^{\id}_{w, b} \circ f^{\mathrm{(conv)}}$ satisfies $f^{\mathrm{(CNN)}} = f^{\mathrm{(FNN)}}$.

\textbf{[Property Check]} We check $f^{\mathrm{(FNN)}}$ satisfies the desired properties:

\textbf{Property 1}: Since $\mathrm{Conv}_m$ and $\mathrm{Conv}'_m$ has $L_0 + L_m - 1$ and $1$ layers, respectively, the $m(\geq 1)$-th residual block of $f^{\mathrm{(CNN)}}$ has $L'_m = L_0 + L_m$ layers.
In particular, if $L_m$'s are the same, we can choose $L'_m$ as the same value $L_0 + L_m$.

\textbf{Property 2}: $\mathrm{Conv}_m$ has at most $4 \max_{l\in [L_m]} D_m^{(l)}$ channels and $\mathrm{Conv}'_m$ has at most $2$ channels, respectively.
Therefore, the channel size of the $m$-th block is at most $4 \max_{l\in [L_m]} D_m^{(l)}$.

\textbf{Property 3}: Since each filter of $\mathrm{Conv}_{m}$ and $\mathrm{Conv}'_m$ is at most $K$, the filter size of $\mathrm{CNN}$ is also at most $K$.

\textbf{Properties on $B^{\mathrm{(conv)}}$ and $B^{\mathrm{(fin)}}$}:
Parameters of $f^{\mathrm{(conv)}}$ are either $0$, or parameters of $\mathrm{FC}^{\relu}_{\bmW_m, \bmb_m}$, whose absolute value is bounded by $B^{\mathrm{(bs)}}$ or $\frac{\tilde{B}^{\mathrm{(bs)}}}{B^{\mathrm{(fin)}}}w_m$.
Since we have $\|w_m\|_\infty \leq B^{\mathrm{(fin)}}$, the $\infty$-norm of parameters in $f^\mathrm{(CNN)}$ is bounded by $\tilde{B}^{\mathrm{(bs)}}$.
The parameters of the final fully-connected layer $\mathrm{FC}_{w, b}$ is either $\frac{B^{\mathrm{(fin)}}}{\tilde{B}^{\mathrm{(bs)}}}$, $0$, or $b$, therefore their norm is bounded by $\frac{B^{\mathrm{(fin)}}}{\tilde{B}^{\mathrm{(bs)}}}\vee B^{\mathrm{(fin)}}$.
\qed

As discussed at the beginning of this section, Theorem \ref{thm:fnn-to-cnn} is the special case of Theorem \ref{thm:fnn-to-cnn-extend}.

\begin{remark}
Another way to construct a CNN identical (as a function) to a given FNN is as follows.
First, we use a ``rotation" convolution with $D$ filters, each of which has a size $D$, to serialize all input signals to channels of a single input dimension.
Then, apply size-1 convolution layers, whose $l$-th layer consists of appropriately arranged weight parameters of the $l$-th layer of the FNN.
This is essentially what~\citet{petersen2018equivalence} did to prove the existence of a CNN equivalent to a given FNN.
To restrict the size of filters to $K$, we should further replace the first convolution layer with $O(D/K)$ convolution layers with size-$K$ filters.
We can show essentially the same statement using this construction method.
\end{remark}

\section{Proof of Theorem \ref{thm:estimation}}

Same as Theorem \ref{thm:fnn-to-cnn}, we restate Theorem \ref{thm:estimation} in a more general form.
We denote
$\mathcal{F}^{\mathrm{(CNN)}} := \mathcal{F}^{\mathrm{(CNN)}}_{\bmC, \bmK, B^{\mathrm{(conv)}}, B^{\mathrm{(fc)}}}$
and
$\mathcal{F}^{\mathrm{(FNN)}}:=\mathcal{F}^{\mathrm{(FNN)}}_{\bmD, B^{\mathrm{(bs)}}, B^{\mathrm{(fin)}}}$ in shorthand.

\begin{theorem}\label{thm:estimation-extend}
Let $f^\circ: \bbR^{D}\to \bbR$ be a measurable function and $B^{\mathrm{(bs)}}, B^{\mathrm{(fin)}} > 0$.
Let $M$, $K$, $L_0$, $L_m$, and $\bmD$ as in Theorem \ref{thm:fnn-to-cnn-extend}.
Suppose $L'_m, \bmC, \bmK, B^{\mathrm{(conv)}}$ and $B^{\mathrm{(fc)}}$ satisfy  
$\mathcal{F}^{\mathrm{(FNN)}} \subset \mathcal{F}^{\mathrm{(CNN)}}$
for $B^{\mathrm{(bs)}}$ and $B^{\mathrm{(fin)}}$ (their existence is ensured for any $B^{\mathrm{(bs)}}$ and $B^{\mathrm{(fin)}}$ by Theorem \ref{thm:fnn-to-cnn-extend}).
Suppose that the covering number of $\mathcal{F}^{\mathrm{(CNN)}}$ is larger than $3$.
Then, the clipped ERM estimator $\hat{f}$ in
$\mathcal{F}:= \{ \mathrm{clip}[f] \mid f \in \mathcal{F}^{\mathrm{(CNN)}} \}$ satisfies
\begin{align}\label{eq:estimation-bound-2}
    \bbE_{\mathcal{D}} \|\hat{f} - f^\circ\|_{\mathcal{L}^2(\mathcal{P}_X)}^2
    \leq C \left(\inf_{f} \|f-f^\circ\|^2_{\infty} + \frac{\tilde{F}^2}{N} \Lambda_2\log (2\Lambda_1BN) \right).
\end{align}
Here, $f$ ranges over $\mathcal{F}^{\mathrm{(FNN)}}$, $C_0>0$ is a universal constant, $\tilde{F} := \frac{\|f^\circ\|_\infty}{\sigma} \vee \frac{1}{2}$, and $B = B^{\mathrm{(conv)}}\vee B^{\mathrm{(fc)}}$. $\Lambda_1 = \Lambda_1(\mathcal{F}^{\mathrm{(CNN)}})$ and $\Lambda_2 = \Lambda_2(\mathcal{F}^{\mathrm{(CNN)}})$ are defined by
\begin{align*}
    \Lambda_1 &:= (2M+3)C^{(0)}D (1\vee B^{\mathrm{(fc)}}) (1\vee B^\mathrm{(conv)}) \varrho \varrho^+ \\
    \Lambda_2 &:= \sum_{m=1}^{M} \sum_{l=1}^{L'_m} \left(C^{(l-1)}_m C^{(l)}_m K^{(l)}_m+C^{(l)}_m\right) + C^{(0)} D + 1,
\end{align*}
where $\varrho=\prod_{m=1}^M (1+\rho_m)$,
$\varrho^+=1 + \sum_{m=1}^{M}L'_m\rho_{m}^+$,
$\rho_m := \prod_{l=1}^{L'_m} C^{(l-1)}_mK^{(l)}_m B^{\mathrm{(conv)}}$ and $\rho_{m}^+ := \prod_{l=1}^{L'_m} (1\vee C^{(l-1)}_mK^{(l)}_m B^{\mathrm{(conv)}})$.
\end{theorem}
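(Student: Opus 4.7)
The plan is to apply a standard oracle inequality for clipped ERM in nonparametric regression under sub-Gaussian noise (in the spirit of Schmidt-Hieber~\yrcite{schmidt2017nonparametric}), which for a class $\mathcal{F}$ of uniformly bounded functions yields a bound of the form
\begin{align*}
    \bbE_{\mathcal{D}}\|\hat{f}-f^\circ\|^2_{\mathcal{L}^2(\mathcal{P}_X)} \leq C\left(\inf_{f\in\mathcal{F}} \|f-f^\circ\|^2_{\infty} + \frac{\tilde{F}^2}{N}\left(\log N_\infty(1/N, \mathcal{F}) + 1\right)\right),
\end{align*}
where $N_\infty(\varepsilon,\mathcal{F})$ is the sup-norm $\varepsilon$-covering number of $\mathcal{F}$. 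The theorem then reduces to (i) replacing the CNN infimum by the FNN infimum and (ii) bounding the log covering number by $\Lambda_2\log(2\Lambda_1 B N)$. Step (i) is immediate from Theorem~\ref{thm:fnn-to-cnn-extend}: since $\mathcal{F}^{\mathrm{(FNN)}}\subset\mathcal{F}^{\mathrm{(CNN)}}$ and clipping does not increase the sup-distance to $f^\circ$ (using $\|f^\circ\|_\infty<\infty$), the infimum over the larger CNN class is at most the infimum over the FNN class.

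For step (ii) I would bound the sup-norm covering number through a parameter-space covering. If the evaluation map $\bm\theta\mapsto\mathrm{CNN}^{\mathrm{ReLU}}_{\bm\theta}$ from the parameter hypercube $[-B,B]^{\Lambda_2}$ (where $\Lambda_2$ is the total parameter count, including zeros, matching the formula in the statement) into $L^\infty([-1,1]^D)$ is $\Lambda_1$-Lipschitz with the $\ell_\infty$ norm on parameters, then an elementary volume argument gives
\begin{align*}
    \log N_\infty(\varepsilon,\mathcal{F}^{\mathrm{(CNN)}}) \leq \Lambda_2\log(2\Lambda_1 B/\varepsilon),
\end{align*}
and setting $\varepsilon=1/N$ produces the $\log(2\Lambda_1 BN)$ factor in the complexity term.

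The core work is establishing the Lipschitz constant $\Lambda_1$, which requires two coupled forward-pass estimates on the ResNet architecture of Definition~\ref{def:cnn-def-extend}. First, I would bound the magnitude of the activations at every layer: using the elementary inequality $\|L^w x\|_\infty \leq C^{(l-1)}K^{(l)}\|w\|_\infty\|x\|_\infty$ for a single convolution, together with the nonexpansiveness of ReLU and the additive identity branch, the signal after $m$ residual blocks is bounded in sup-norm by a multiplicative expression which, telescoped across all blocks and layers, gives the factor $\varrho^+=1+\sum_m L'_m\rho_m^+$ (the ``$1\vee$'' inside $\rho_m^+$ absorbs blocks whose operator norm is below $1$, where the identity path prevents further shrinkage). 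Second, I would bound parameter sensitivity: perturbing a single scalar in a filter $w_m^{(l)}$ (resp.\ bias) changes the output of that layer by at most the ambient signal magnitude times a geometric factor determined by the convolution unfolding, and this perturbation is then pushed forward through the remaining residual blocks, each of which inflates the Lipschitz factor by at most $(1+\rho_{m'})$ because of the identity connection, yielding the cumulative factor $\varrho=\prod_{m'}(1+\rho_{m'})$. Summing over the $\Lambda_2$ parameter coordinates and combining with the analogous calculation for the final fully-connected layer (with weight norm $B^{\mathrm{(fc)}}$, accounting for the $C^{(0)}D$ factors from vectorization) produces the claimed $\Lambda_1=(2M+3)C^{(0)}D(1\vee B^{\mathrm{(fc)}})(1\vee B^{\mathrm{(conv)}})\varrho\varrho^+$.

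The main obstacle is the careful bookkeeping of these two estimates. In particular, one must verify that the Lipschitz constant of each residual block with respect to its input is $1+\rho_m$ rather than $\rho_m$ (the $+1$ coming from the identity path), so that the cascade across $m\in[M]$ produces $\varrho$ and not a much larger quantity; simultaneously, when a parameter is perturbed at some internal layer, the effect must be amplified only by the \emph{subsequent} blocks while the \emph{prior} blocks contribute only through the bounded signal magnitude $\varrho^+$. A secondary wrinkle is handling biases and the final fully-connected layer uniformly inside the same Lipschitz estimate; the factor $2M+3$ essentially counts the weight/bias parameter groups. Once these two estimates are in place, plugging the covering-number bound into the oracle inequality and choosing $\varepsilon=1/N$ yields (\ref{eq:estimation-bound-2}).
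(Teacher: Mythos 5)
Your proposal follows essentially the same route as the paper: the oracle inequality of Schmidt-Hieber for the clipped ERM estimator, a parameter-space discretization giving $\log\mathcal{N}(1/N,\mathcal{F}^{\mathrm{(CNN)}},\|\cdot\|_\infty)\leq\Lambda_2\log(2\Lambda_1 BN)$, and a Lipschitz bound on the parameter-to-function map obtained by telescoping over residual blocks, with the forward signal bound contributing $\varrho^+$ and the downstream amplification through identity connections contributing $\varrho=\prod_m(1+\rho_m)$. This matches the paper's Lemmas on the covering number and the $\varepsilon$-perturbation bound, so the approach is correct and not materially different.
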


Again, Theorem \ref{thm:estimation} is a special case of Theorem \ref{thm:estimation-extend} where width, depth, channel sizes, and filter sizes are the same among blocks.
Note that the definitions of $\Lambda_1$, $\Lambda_2$, $\rho$, $\rho^+$, $\varrho$, and $\varrho^+$ in Theorem \ref{thm:estimation} and Theorem \ref{thm:estimation-extend}  are consistent by this specialization.

\subsection{Proof Overview}

We relate the approximation error of Theorem \ref{cor:approximation-barron} with the estimation error using the covering number of the hypothesis class $\mathcal{F}^{\mathrm{(CNN)}}$.
Although there are several theorems of this type, we employ the one in~\citet{schmidt2017nonparametric} due to its convenient form (Lemma \ref{lem:schmidt-hieber-lem10}).
We can prove that the logarithm of the covering number is upper bounded by $\Lambda_2\log ((B^{\mathrm{(conv)}}\vee B^{\mathrm{(fc)}})\Lambda_1/\varepsilon)$ (Lemma \ref{lem:covering-number}) using the similar techniques to the one in~\citet{schmidt2017nonparametric}.
Theorem \ref{thm:estimation} is the immediate consequence of these two lemmas.

To prove Corollary \ref{cor:estimation-error-wrt-sample-size}, we set $M = O(N^{\alpha})$ for some $\alpha > 0$.
Then, under the assumption of the corollary, we have
$\|f^\circ - \hat{f}\|_{\mathcal{L}^2(\mathcal{P}_x)}^2 = \tilde{O}\left(\max \left( N^{-2\alpha\gamma_1}, N^{\alpha\gamma_2-1} \right)\right)$ from Theorem \ref{thm:estimation}.
The order of the right-hand side with respect to $N$ is minimized when $\alpha = \frac{1}{2\gamma_1+\gamma_2}$.
By substituting $\alpha$, we can prove Corollary \ref{cor:estimation-error-wrt-sample-size}.

\subsection{Covering Number of CNNs}\label{sec:covering-number-cnns}

This section aims to prove Lemma \ref{lem:covering-number}, stated in Section \ref{sec:covering-number-bound}, that evaluates the covering number of the set of functions realized by CNNs.

\subsubsection{Bounds for convolutional layers}\label{sec:bounds-for-conv-layers}

We assume $w, w'\in \bbR^{K\times J\times I}$, $b, b'\in \bbR$, and $x\in \bbR^{D\times I}$ unless specified.
We have in mind that the activation function $\sigma$ is either the ReLU function or the identity function $\id$.
But the following proposition holds for any $1$-Lipschitz function such that $\sigma(0) = 0$.
Remember that we can treat $L^{w}$ as a linear operator from $\bbR^{D\times I}$ to $\bbR^{D\times J}$.
We endow $\bbR^{D\times I}$ and $\bbR^{D\times J}$ with the sup norm and denote the operator norm $L^{w}$ by $\|L^{w}\|_\op$.

\begin{prop}\label{prop:conv-op-bound}
It holds that $\|L^{w}\|_\mathrm{op} \leq IK\|w\|_\infty$.
\end{prop}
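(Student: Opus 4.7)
The plan is to prove the bound by a direct entry-wise estimate, exploiting the explicit description of the coefficients $(L^{w})^{\beta,j}_{\alpha,i}$ and the fact that fixing the output coordinate $(\beta,j)$ leaves only $IK$ potentially nonzero summands.

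First, I would fix an arbitrary $x\in\bbR^{D\times I}$ with $\|x\|_\infty\leq 1$ and an output coordinate $(\beta,j)\in[D]\times[J]$, and expand
\[
 (L^{w}(x))_{\beta,j}=\sum_{i=1}^{I}\sum_{\alpha=1}^{D}(L^{w})^{\beta,j}_{\alpha,i}\,x_{\alpha,i}.
\]
Using the definition of $L^{w}$, the coefficient $(L^{w})^{\beta,j}_{\alpha,i}$ equals $w_{(\alpha-\beta+1),j,i}$ when $0\leq\alpha-\beta\leq K-1$ and vanishes otherwise. Hence, for this fixed $(\beta,j)$, the inner sum runs over at most $K$ values of $\alpha$ (namely $\alpha\in\{\beta,\beta+1,\dots,\beta+K-1\}\cap[D]$) and all $I$ values of $i$.

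Next, I would apply the triangle inequality and the trivial bounds $|w_{k,j,i}|\leq\|w\|_\infty$ and $|x_{\alpha,i}|\leq\|x\|_\infty$, which gives
\[
 |(L^{w}(x))_{\beta,j}|\leq\sum_{i,\alpha}|(L^{w})^{\beta,j}_{\alpha,i}|\,|x_{\alpha,i}|\leq IK\,\|w\|_\infty\,\|x\|_\infty.
\]
Taking the maximum over $(\beta,j)$ yields $\|L^{w}(x)\|_\infty\leq IK\|w\|_\infty\|x\|_\infty$, and then the definition of the operator norm (supremum over $\|x\|_\infty=1$) gives the claim.

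There is no real obstacle: the estimate is essentially a sparsity count on each output coordinate, and the one-sided padding convention only restricts which window of $\alpha$-indices is hit, not the worst-case count. I would just be careful to note that the bound uses the sup norm on both the input and output space (as stated in the paragraph preceding the proposition), so no additional normalization constants intrude; and to observe that the estimate is tight when $w$ and $x$ are constant sign-aligned arrays, which incidentally shows that the $IK$ factor cannot be improved in general under these norms.
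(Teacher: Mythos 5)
Your proof is correct and follows essentially the same route as the paper's: expand the output coordinate, apply the triangle inequality, bound $|x_{\alpha,i}|$ by $\|x\|_\infty$ and each coefficient by $\|w\|_\infty$, and count that the padding condition $0\leq\alpha-\beta\leq K-1$ leaves at most $IK$ nonzero summands per output entry. The extra remark on tightness is not in the paper but is a harmless (and correct, for $D\geq K$) observation.
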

\begin{proof}
Write $w = (w_{kji})_{k\in [K], j\in[J], i\in[I]}$,
$L^{w} = ((L^{w})^{\beta, j}_{\alpha, i})_{\alpha, \beta\in[D] , j\in[J], i\in[I]}$. 
For any $x=(x_{\alpha, i})_{\alpha\in [D], i\in[I]}\in \bbR^{D\times I}$, the sup norm of $y:=(y_{\beta j})_{\beta\in [D]j\in[J]} = L^{w}(x)$ is evaluated as follows:
\begingroup
\allowdisplaybreaks
\begin{align*}
\|y\|_\infty
= \max_{\beta, j} |y_{\beta, j}|
& \leq \max_{\beta, j} \sum_{\alpha, i} |(L^{w})^{\beta, j}_{\alpha, i}| |x_{\alpha, i}|\\
& \leq \max_{\beta, j} \sum_{\alpha, i} |(L^{w})^{\beta, j}_{\alpha, i}| \|x\|_\infty\\
&= \max_{\beta, j} \sum_{\alpha, i} |{w}_{(\alpha - \beta + 1), j, i}| \|x\|_\infty\\
&\leq IK \|w\|_\infty \|x\|_\infty
\end{align*}
\endgroup
\end{proof}

\begin{prop}\label{prop:conv-sup-bound}
It holds that $\|\mathrm{Conv}^{\sigma}_{w, b}(x)\|_\infty \leq \|L^w\|_\mathrm{op} \|x\|_\infty + |b|$.
\end{prop}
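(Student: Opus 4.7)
The plan is to chain together three elementary bounds: the $1$-Lipschitz property of $\sigma$, the triangle inequality for the sup norm, and the definition of the operator norm of $L^w$.

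First I would unfold the definition $\mathrm{Conv}^\sigma_{w,b}(x) = \sigma(L^w(x) - \bmone_D \otimes b)$ and use the pointwise bound $|\sigma(z)| \leq |z|$, which holds for any $1$-Lipschitz function with $\sigma(0)=0$ (in particular for $\id$ and $\relu$). Applying this coordinatewise yields
\[
    \|\mathrm{Conv}^\sigma_{w,b}(x)\|_\infty \leq \|L^w(x) - \bmone_D \otimes b\|_\infty.
\]
Second, I would apply the triangle inequality to split the right-hand side into $\|L^w(x)\|_\infty + \|\bmone_D \otimes b\|_\infty$. The first term is bounded by $\|L^w\|_\op \|x\|_\infty$ directly from the definition of the operator norm (recall $\bbR^{D\times I}$ and $\bbR^{D\times J}$ are equipped with the sup norm in Section \ref{sec:bounds-for-conv-layers}). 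The second term equals $|b|$ since the outer product $\bmone_D \otimes b$ simply broadcasts the scalar $b$ across all $D$ spatial positions, so every entry has absolute value $|b|$.

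Combining these two estimates gives the desired inequality. There is essentially no obstacle here; the proposition is a routine bookkeeping step that will be chained with Proposition \ref{prop:conv-op-bound} in subsequent lemmas (presumably to inductively bound the sup norm of the output of a stack of convolutional layers and eventually bound the Lipschitz constant of the full CNN, feeding into the covering number estimate $\Lambda_1$). The only minor care needed is to confirm the conventions around $b$: as stated in Section \ref{sec:bounds-for-conv-layers} we take $b \in \bbR$, so $\bmone_D \otimes b$ is a $D \times J$ array whose entries are all equal to $b$ (or, under the more general bias-vector convention in Definition \ref{def:cnn-def}, the entries in channel $j$ are all $b_j$, in which case $\|\bmone_D \otimes b\|_\infty = \|b\|_\infty$ and the same proof works verbatim with $|b|$ replaced by $\|b\|_\infty$).
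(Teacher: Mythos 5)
Your proposal is correct and follows exactly the same chain as the paper's proof: unfold the definition, use $|\sigma(z)|\leq |z|$ (from $1$-Lipschitzness and $\sigma(0)=0$), apply the triangle inequality, and bound $\|L^w(x)\|_\infty$ by $\|L^w\|_\op\|x\|_\infty$ and $\|\bmone_D\otimes b\|_\infty$ by $|b|$. Your remark about the bias convention is a sensible clarification but does not change the argument.
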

\begin{proof}
\begin{align*}
\|\mathrm{Conv}^{\sigma}_{w, b}(x)\|_\infty
&\leq \|\sigma(L^{w}(x)-\bmone_D \otimes b)\|_\infty\\
&\leq \|L^{w}(x)-\bmone_D \otimes b\|_\infty\\
&\leq \|L^{w}(x)\|_\infty + \|\bmone_D \otimes b\|_\infty\\
&\leq \|L^{w}\|_\op \|x\|_\infty + |b|.
\end{align*}
\end{proof}

\begin{prop}\label{prop:conv-lipchitz-bound}
The Lipschitz constant of $\mathrm{Conv}^{\sigma}_{w, b}$ is bounded by $\|L^w\|_\mathrm{op}$.
\end{prop}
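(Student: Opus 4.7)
The plan is to chain together the $1$-Lipschitzness of $\sigma$ (which was assumed at the start of Section on bounds for convolutional layers), the linearity of $L^w$, and the definition of the operator norm.

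First I would fix two arbitrary inputs $x, x' \in \bbR^{D\times I}$ and write out the difference
$\mathrm{Conv}^{\sigma}_{w,b}(x) - \mathrm{Conv}^{\sigma}_{w,b}(x') = \sigma(L^w(x) - \bmone_D\otimes b) - \sigma(L^w(x') - \bmone_D\otimes b)$.
Taking sup norms and using that $\sigma$ is $1$-Lipschitz applied element-wise, this is bounded by $\|(L^w(x) - \bmone_D\otimes b) - (L^w(x') - \bmone_D\otimes b)\|_\infty$. The bias contributions cancel by linearity, leaving $\|L^w(x - x')\|_\infty$, which is at most $\|L^w\|_{\op}\|x-x'\|_\infty$ by the definition of the operator norm. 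Dividing through by $\|x-x'\|_\infty$ (for $x\ne x'$) and taking the supremum gives the claim.

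The main ``obstacle,'' if any, is simply being explicit about where the $1$-Lipschitz assumption on $\sigma$ is used: both $\sigma=\id$ and $\sigma=\relu$ satisfy $|\sigma(a)-\sigma(b)| \le |a-b|$ in each coordinate, and this is what lets us pass from the pre-activation difference to the post-activation difference while preserving the $\|\cdot\|_\infty$ bound. No further machinery is needed, and no constants from $b$ appear because only differences of activations matter for a Lipschitz estimate. Combined with Proposition~\ref{prop:conv-op-bound}, this also yields the natural corollary that the Lipschitz constant is at most $IK\|w\|_\infty$, which is what downstream arguments (e.g., the covering-number bound in Lemma~\ref{lem:covering-number}) will actually invoke.
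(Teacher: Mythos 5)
Your proposal is correct and follows essentially the same argument as the paper's proof: write the difference of the two outputs, use the elementwise $1$-Lipschitzness of $\sigma$ to drop the activation, cancel the bias terms, and invoke linearity of $L^w$ together with the definition of $\|L^w\|_\op$. No gaps.
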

\begin{proof}
For any $x, x' \in \bbR^{D\times I}$,
\begin{align*}
    \|\mathrm{Conv}_{w, b}^{\sigma}(x) - \mathrm{Conv}_{w, b}^{\sigma}(x')\|_\infty
    &= \|\sigma\left(L^w(x) - \bmone_D\otimes b\right) - \sigma\left(L^w(x') - \bmone_D\otimes b\right) \|_\infty\\
    &\leq \|\left(L^{w}(x)-\bmone_D\otimes b\right) - \left(L^{w}(x')-\bmone_D\otimes b\right)\|_\infty\\
    &\leq \|L^{w}(x-x')\|_\infty\\
    &\leq \|L^{w}\|_\op \|x-x'\|_\infty.
\end{align*}
Note that the first inequality holds because the ReLU function is $1$-Lipschitz.
\end{proof}

\begin{prop}\label{prop:conv-diff-bound}
It holds that $\|\mathrm{Conv}_{w, b}^{\sigma}(x) - \mathrm{Conv}_{w', b'}^{\sigma}(x)\|\leq \|L^{w-w'}\|_\mathrm{op}\|x\|_\infty + |b-b'|$.
\end{prop}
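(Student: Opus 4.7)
The plan is to prove Proposition \ref{prop:conv-diff-bound} by directly unfolding the definition of $\mathrm{Conv}^{\sigma}_{w,b}$ and exploiting the linearity of the convolution map $w \mapsto L^w$ together with the $1$-Lipschitz property of $\sigma$. The proof will be essentially parallel in style to the preceding Propositions \ref{prop:conv-sup-bound} and \ref{prop:conv-lipchitz-bound}, but this time we vary both the filter and the bias while keeping $x$ fixed, rather than the other way around.

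First I would write
\begin{align*}
\mathrm{Conv}^{\sigma}_{w,b}(x) - \mathrm{Conv}^{\sigma}_{w',b'}(x)
&= \sigma\bigl(L^{w}(x) - \bmone_D \otimes b\bigr) - \sigma\bigl(L^{w'}(x) - \bmone_D \otimes b'\bigr).
\end{align*}
Since $\sigma$ (either $\relu$ or $\id$) is $1$-Lipschitz and applied elementwise, taking the sup norm and using the elementwise Lipschitz bound yields
\begin{align*}
\|\mathrm{Conv}^{\sigma}_{w,b}(x) - \mathrm{Conv}^{\sigma}_{w',b'}(x)\|_\infty
\leq \|\bigl(L^{w}(x) - L^{w'}(x)\bigr) - \bmone_D \otimes (b - b')\|_\infty.
\end{align*}

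Next I would invoke the linearity of $w \mapsto L^w$, noting $L^{w}(x) - L^{w'}(x) = L^{w-w'}(x)$, and then apply the triangle inequality to split the bias term off:
\begin{align*}
\|L^{w-w'}(x) - \bmone_D \otimes (b-b')\|_\infty
\leq \|L^{w-w'}(x)\|_\infty + \|\bmone_D \otimes (b-b')\|_\infty
\leq \|L^{w-w'}\|_{\op}\|x\|_\infty + |b-b'|,
\end{align*}
where the last inequality is the definition of the operator norm for the first term and the fact that $\bmone_D \otimes (b-b')$ has all entries equal to $b-b'$ for the second. Combining the two displays yields the claim.

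There is no real obstacle here: the argument is purely routine linear-algebraic bookkeeping, and the only property of $\sigma$ used is $1$-Lipschitzness (which holds for both $\relu$ and $\id$). The point to be a little careful about is that the bias is added in the form $\bmone_D \otimes b$ so that its sup norm equals $|b|$ rather than $D|b|$, which is already consistent with Proposition \ref{prop:conv-sup-bound}. This same estimate will later feed into the covering-number bound (Lemma \ref{lem:covering-number}) by quantifying how sensitively a convolutional layer depends on its parameters.
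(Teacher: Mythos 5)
Your proof is correct and follows essentially the same route as the paper's: unfold the definition, use the $1$-Lipschitzness of $\sigma$, the linearity of $w \mapsto L^w$, and the triangle inequality to separate the filter and bias contributions. No gaps; if anything, your write-up is slightly cleaner since the paper's middle step displays a triangle inequality as an equality.
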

\begin{proof}
\begin{align*}
\|\mathrm{Conv}_{w, b}^{\sigma}(x) - \mathrm{Conv}_{w', b'}^{\sigma}(x)\|_\infty
&= \|\sigma(L^w(x) - \bmone_D\otimes b) - \sigma(L^{w'}(x) - \bmone_D\otimes b') \|_\infty\\
&\leq \|(L^w(x) - \bmone_D\otimes b) - (L^{w'}(x) - \bmone_D\otimes b')\|\\
&= \|L^w(x)-L^{w'}(x)\| + \|\bmone_D\otimes (b-b')\|_\infty\\
&\leq \|L^{w-w'}\|_\mathrm{op}\|x\|_\infty + |b-b'|
\end{align*}
\end{proof}

\subsubsection{Bounds for fully-connected layers}

In the following propositions in this subsection, we assume $W, W'\in \bbR^{DC\times C'}$, $b, b'\in \bbR^{C'}$, and $x\in \bbR^{D\times C}$.
Again, these propositions hold for any $1$-Lipschitz function $\sigma:\bbR\to \bbR$ such that $\sigma(0)=0$.
But $\sigma = \relu$ or $\id$ is enough for us.

\begin{prop}\label{prop:fc-sup-bound}
It holds that $\|\mathrm{FC}^{\sigma}_{W, b}(x)\|_\infty \leq \|W\|_0 \|W\|_\infty\|x\|_\infty + \|b\|_\infty$.
\end{prop}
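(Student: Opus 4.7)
The plan is to bound the output entry-wise by unwinding the definition of $\mathrm{FC}^{\sigma}_{W,b}$ and applying $1$-Lipschitzness of $\sigma$ together with a coordinate-wise analysis of $W\vect{x}$. The argument mirrors the structural pattern used for the convolutional layers in Propositions \ref{prop:conv-sup-bound}--\ref{prop:conv-diff-bound}, and there is no serious obstacle -- just a bookkeeping step about sparsity.

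First, from the definition $\mathrm{FC}^{\sigma}_{W,b}(x) = \sigma(W\vect{x} - b)$ together with the fact that both $\relu$ and $\id$ are $1$-Lipschitz with $\sigma(0)=0$ (hence $|\sigma(z)| \leq |z|$ pointwise), I would apply the triangle inequality to get
\[
\|\mathrm{FC}^{\sigma}_{W,b}(x)\|_\infty \leq \|W\vect{x}-b\|_\infty \leq \|W\vect{x}\|_\infty + \|b\|_\infty.
\]

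The remaining step is to estimate $\|W\vect{x}\|_\infty$ coordinate-wise. For any row index $i$, I would write $(W\vect{x})_i = \sum_j W_{ij}(\vect{x})_j$ and restrict the sum to the nonzero entries $W_{ij}$. Since the number of nonzero entries in row $i$ is at most the total nonzero count $\|W\|_0$, and each summand is bounded in absolute value by $\|W\|_\infty \|\vect{x}\|_\infty = \|W\|_\infty \|x\|_\infty$, I obtain $\|W\vect{x}\|_\infty \leq \|W\|_0 \|W\|_\infty \|x\|_\infty$. Substituting back yields the advertised bound.

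The only subtle point worth flagging is that using the total sparsity count $\|W\|_0$ (rather than a sharper per-row count) in the coordinate estimate is valid because the number of nonzero entries in any single row cannot exceed the total number of nonzeros in $W$; this may be loose in extremely sparse regimes but is sufficient for the later covering-number computation where this proposition will be invoked.
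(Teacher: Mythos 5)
Your proof is correct and follows essentially the same route as the paper's: both first peel off $\sigma$ via $|\sigma(z)|\leq|z|$ and the triangle inequality, then bound each coordinate of $W\vect{x}$ by counting at most $\|W\|_0$ nonzero summands, each of magnitude at most $\|W\|_\infty\|x\|_\infty$. The looseness you flag (using the total nonzero count rather than a per-row count) is present in the paper's argument as well and is harmless for the subsequent covering-number bounds.
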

\begin{proof}
\begin{align*}
\|\mathrm{FC}^{\sigma}_{W, b}(x)\|_\infty
\leq \|W\vect{x} - b\|_\infty
\leq \|W\vect{x}\|_\infty + \|b\|_\infty
\leq \max_{j} \sum_{\alpha, i} \left|W_{\alpha, i, j} x^{\alpha, i}\right| + \|b\|_\infty.
\end{align*}
The number of non-zero summands in the summation is at most $\|W\|_0$
and each summand is bounded by $\|W\|_\infty \|x\|_\infty$
Therefore, we have
$\|\mathrm{FC}^{\sigma}_{W, b}(x)\|_\infty \leq \|W\|_0 \|W\|_\infty\|x\|_\infty + \|b\|_\infty$.
\end{proof}

\begin{prop}\label{prop:fc-lipschitz-bound}
The Lipschitz constant of $\mathrm{FC}^{\sigma}_{W, b}$ is bounded by $\|W\|_0\|W\|_\infty$.
\end{prop}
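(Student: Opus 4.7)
The plan is to mimic the structure of Proposition \ref{prop:conv-lipchitz-bound}: first peel off the activation $\sigma$ using its $1$-Lipschitz property (which is assumed implicitly, just as in the convolutional case, since $\sigma$ is either $\relu$ or $\id$, both of which are $1$-Lipschitz with $\sigma(0)=0$), and then reduce the question to bounding the sup-to-sup operator norm of the linear map $z \mapsto W z$ on $\bbR^{DC}$.

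Concretely, for any $x, x' \in \bbR^{D\times C}$ I would write
\begin{align*}
\|\mathrm{FC}^{\sigma}_{W,b}(x) - \mathrm{FC}^{\sigma}_{W,b}(x')\|_\infty
&= \|\sigma(W\vect{x} - b) - \sigma(W\vect{x'} - b)\|_\infty \\
&\leq \|W(\vect{x} - \vect{x'})\|_\infty,
\end{align*}
using $1$-Lipschitzness of $\sigma$ in the last step and the fact that the bias terms cancel.

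Next, I would bound $\|Wz\|_\infty$ for $z = \vect{x} - \vect{x'} \in \bbR^{DC}$. Writing out coordinates,
\begin{align*}
\|Wz\|_\infty = \max_{j} \Bigl| \sum_{i} W_{j,i}\, z_i \Bigr|
\leq \max_j \sum_{i : W_{j,i} \neq 0} |W_{j,i}| \cdot |z_i|
\leq \bigl(\max_j \#\{i : W_{j,i}\neq 0\}\bigr)\, \|W\|_\infty\, \|z\|_\infty.
\end{align*}
Since the maximum number of nonzero entries in any single row is certainly at most the total number of nonzero entries $\|W\|_0$, this yields $\|Wz\|_\infty \leq \|W\|_0 \|W\|_\infty \|z\|_\infty$. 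Combining with $\|\vect{x}-\vect{x'}\|_\infty = \|x-x'\|_\infty$ gives the claimed Lipschitz bound.

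There is no real obstacle here; the argument is entirely parallel to Proposition \ref{prop:fc-sup-bound} (which uses the same row-counting idea to bound $\|W\vect{x}\|_\infty$) and to Proposition \ref{prop:conv-lipchitz-bound} (which peels off $\sigma$ in exactly the same way). The only minor looseness is that replacing the per-row nonzero count by the global $\|W\|_0$ is not tight, but this matches the form of the stated bound and is consistent with how $\|W\|_0$ is used throughout the paper.
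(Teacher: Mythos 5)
Your proof is correct and follows essentially the same route as the paper's: peel off the $1$-Lipschitz activation, cancel the biases, and bound $\|W(\vect{x}-\vect{x'})\|_\infty$ by $\|W\|_0\|W\|_\infty\|\vect{x}-\vect{x'}\|_\infty$ via the row-counting argument already used in Proposition \ref{prop:fc-sup-bound}. The only difference is that you spell out the final operator-norm step explicitly, which the paper leaves implicit.
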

\begin{proof}
For any $x, x'\in \bbR^{D\times C}$,
\begin{align*}
\|\mathrm{FC}^{\sigma}_{W, b}(x) - \mathrm{FC}^{\sigma}_{W, b}(x')\|_\infty
& \leq \|(W\vect{x} - b) - (W\vect{x'} - b)\|_\infty\\
& \leq \|W (\vect{x}-\vect{x'})\|\infty\\
& \leq \|W\|_0 \|W\|_\infty \|\vect{x}-\vect{x'}\|_\infty.
\end{align*}
\end{proof}

\begin{prop}\label{prop:FC-diff-bound}
It holds that $\|\mathrm{FC}^{\sigma}_{W, b}(x) - \mathrm{FC}^{\sigma}_{W', b'}(x)\|_\infty \leq (\|W\|_0 + \|W'\|_0)\|W-W'\|_\infty\|x\|_\infty + \|b-b'\|_\infty$.
\end{prop}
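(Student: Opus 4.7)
The plan is to mimic the template already used for Propositions \ref{prop:conv-diff-bound} and \ref{prop:FC-diff-bound}'s siblings: strip off $\sigma$ using its $1$-Lipschitz property, split the remaining affine difference into a weight part and a bias part by the triangle inequality, and then bound the weight part by exploiting the sparsity pattern.

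Concretely, first I would write
\begin{align*}
\|\mathrm{FC}^{\sigma}_{W,b}(x) - \mathrm{FC}^{\sigma}_{W',b'}(x)\|_\infty
&= \|\sigma(W\vect{x}-b) - \sigma(W'\vect{x}-b')\|_\infty \\
&\le \|(W-W')\vect{x}\|_\infty + \|b-b'\|_\infty,
\end{align*}
using that $\sigma$ is $1$-Lipschitz with $\sigma(0)=0$ and the triangle inequality on $\|\cdot\|_\infty$. It remains to estimate $\|(W-W')\vect{x}\|_\infty$.

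For each output coordinate $j$, the entry $((W-W')\vect{x})_j = \sum_{(\alpha,i)}(W-W')_{j,(\alpha,i)}\, x_{\alpha,i}$ is a sum of at most $\|W_{j,:}\|_0 + \|W'_{j,:}\|_0$ nonzero terms, each of magnitude at most $\|W-W'\|_\infty\|x\|_\infty$. Taking the maximum over $j$ and using $\max_j (\|W_{j,:}\|_0 + \|W'_{j,:}\|_0)\le \|W\|_0 + \|W'\|_0$ gives
\[
\|(W-W')\vect{x}\|_\infty \le (\|W\|_0+\|W'\|_0)\,\|W-W'\|_\infty\,\|x\|_\infty,
\]
and combining with the previous display yields the claimed bound.

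There is no real obstacle here; the only subtlety is being careful that the support of each row of $W-W'$ is contained in the union of the supports of the rows of $W$ and $W'$, which is what converts the na\"ive bound $2\|W\|_0\vee\|W'\|_0$-type estimate into the tight $\|W\|_0+\|W'\|_0$ factor that the paper uses in the covering-number computation.
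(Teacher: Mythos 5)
Your proposal is correct and follows essentially the same route as the paper: strip off $\sigma$ by $1$-Lipschitzness, split via the triangle inequality, and bound the weight term by counting nonzero entries of $W-W'$ against the union of the supports of $W$ and $W'$. The paper phrases the last step as $\|(W-W')\vect{x}\|_\infty \le \|W-W'\|_0\|W-W'\|_\infty\|x\|_\infty$ followed by $\|W-W'\|_0 \le \|W\|_0+\|W'\|_0$, while you count per output row, but this is an immaterial difference.
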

\begin{proof}
\begin{align*}
\|\mathrm{FC}^{\sigma}_{W, b}(x) - \mathrm{FC}^{\sigma}_{W', b'}(x)\|_\infty
&\leq \|(W\vect{x} - b) - (W'\vect{x} - b')\|_\infty\\
&= \|((W-W')\vect{x} - (b - b')\|_\infty\\
&\leq \|(W-W')\vect{x}| + \|b - b'\|_\infty\\
&\leq \|W-W'\|_0 \|W-W'\|_\infty \|x\|_\infty + \|b - b'\|_\infty\\
&\leq (\|W\|_0 + \|W'\|_0) \|W-W'\|_\infty\|x\|_\infty + \|b-b'\|_\infty
\end{align*}
\end{proof}

\subsubsection{Bounds for residual blocks} \label{sec:bounds-for-residual-block}

In this section, we denote the architecture of CNNs by $\bmC=(C^{(l)})_{l\in [L]} \in \bbN^{L}_{+}$ and $\bmK=(K^{(l)})_{l\in [L]}\in \bbN^{L}_{+}$ and the norm constraint on the convolution part by $B^{\mathrm{(conv)}}$ ($C^{(0)}$ need not equal to $1$ in this section). 
Let $w^{(l)}, {w'}^{(l)} \in \bbR^{K^{(l)}\times C^{(l)}\times C^{(l-1)}}$
and $b^{(l)}, {b'}^{(l)} \in \bbR$.
We denote $\bmw := (w^{(l)})_{l\in [L]}$, $\bmb := (b^{(l)})_{l\in [L]}$, $\bmw' := ({w'}^{(l)})_{l\in [L]}$, and $\bmb := (b^{(l)})_{l\in [L]}$.

For $1\leq l \leq l'\leq L$, we denote $\rho(l, l'):= \prod_{i=l}^{l'} (C^{(i-1)}K^{(i)}B^{\mathrm{(conv)}})$
and $\rho^+(l, l'):= \prod_{i=l}^{l'} 1\vee (C^{(i-1)}K^{(i)}B^{\mathrm{(conv)}})$.

\begin{prop}\label{prop:conv-sup-bound-multiple-layer}
Let $l \in [L]$.
We assume $\max_{l\in [L]} \|w^{(l)}\|_\infty \vee \|b^{(l)}\|_\infty \leq B^{\mathrm{(conv)}}$.
Then, for any $x\in [-1, 1]^{D\times C^{(0)}}$,
we have $\|\mathrm{Conv}_{\bmw[1:l], \bmb[1:l]}^{\sigma} (x)\|_\infty
\leq \rho(1, l)\|x\|_\infty + B^{\mathrm{(conv)}}l\rho^+(1, l)$.
\end{prop}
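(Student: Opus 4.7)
The plan is a straightforward induction on $l$, built on the one-layer estimates from Section~\ref{sec:bounds-for-conv-layers}. The two ingredients are Proposition~\ref{prop:conv-sup-bound}, which bounds a single convolutional layer by $\|L^w\|_{\mathrm{op}}\|x\|_\infty + |b|$, and Proposition~\ref{prop:conv-op-bound}, which bounds $\|L^w\|_{\mathrm{op}} \leq IK\|w\|_\infty$. Combining these and using the hypothesis $\max_l\|w^{(l)}\|_\infty \vee \|b^{(l)}\|_\infty \leq B^{\mathrm{(conv)}}$ gives, for a single convolutional layer with input in $\bbR^{D\times C^{(i-1)}}$ and filter size $K^{(i)}$, the estimate
\begin{equation*}
\|\mathrm{Conv}_{w^{(i)},b^{(i)}}^\sigma(y)\|_\infty \leq C^{(i-1)}K^{(i)}B^{\mathrm{(conv)}}\|y\|_\infty + B^{\mathrm{(conv)}}.
\end{equation*}

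For the base case $l=1$, this one-layer bound directly yields $\|\mathrm{Conv}_{\bmw[1:1],\bmb[1:1]}^\sigma(x)\|_\infty \leq \rho(1,1)\|x\|_\infty + B^{\mathrm{(conv)}}$, and since $\rho^+(1,1)\geq 1$ the desired inequality holds. For the inductive step, assume the bound holds at layer $l-1$. Applying the one-layer bound above with $y = \mathrm{Conv}^\sigma_{\bmw[1:l-1],\bmb[1:l-1]}(x)$ and then substituting the inductive hypothesis gives
\begin{equation*}
\|\mathrm{Conv}_{\bmw[1:l],\bmb[1:l]}^\sigma(x)\|_\infty \leq C^{(l-1)}K^{(l)}B^{\mathrm{(conv)}}\bigl(\rho(1,l-1)\|x\|_\infty + B^{\mathrm{(conv)}}(l-1)\rho^+(1,l-1)\bigr) + B^{\mathrm{(conv)}}.
\end{equation*}
The leading term telescopes exactly into $\rho(1,l)\|x\|_\infty$ by definition of $\rho$.

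The only slightly non-trivial bookkeeping is the constant term: we must show
\begin{equation*}
C^{(l-1)}K^{(l)}B^{\mathrm{(conv)}}\cdot B^{\mathrm{(conv)}}(l-1)\rho^+(1,l-1) + B^{\mathrm{(conv)}} \leq B^{\mathrm{(conv)}} l\, \rho^+(1,l).
\end{equation*}
This follows from two trivial monotonicity facts: first, $C^{(l-1)}K^{(l)}B^{\mathrm{(conv)}} \leq 1\vee(C^{(l-1)}K^{(l)}B^{\mathrm{(conv)}})$, so $C^{(l-1)}K^{(l)}B^{\mathrm{(conv)}}\rho^+(1,l-1) \leq \rho^+(1,l)$; second, $1\leq \rho^+(1,l)$, so the trailing $B^{\mathrm{(conv)}}$ is absorbed. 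Adding these yields $B^{\mathrm{(conv)}}(l-1)\rho^+(1,l) + B^{\mathrm{(conv)}}\rho^+(1,l) = B^{\mathrm{(conv)}} l\, \rho^+(1,l)$, completing the induction. There is no real obstacle here; the ``$\vee 1$'' in the definition of $\rho^+$ is precisely what makes the constant-term bookkeeping go through cleanly even when individual factors $C^{(i-1)}K^{(i)}B^{\mathrm{(conv)}}$ are less than $1$.
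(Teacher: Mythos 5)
Your proof is correct and takes essentially the same approach as the paper: both apply Propositions~\ref{prop:conv-sup-bound} and~\ref{prop:conv-op-bound} recursively layer by layer, the only difference being that the paper unrolls the recursion into an explicit sum while you organize it as a formal induction with the constant-term bookkeeping made explicit. No gaps.
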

\begin{proof}
We write in shorthand as $C_{[s:t]}:= \mathrm{Conv}_{\bmw[s:t], \bmb[s:t]}^{\sigma}$.
Using Proposition \ref{prop:conv-sup-bound} recursively, we get
\begin{align*}
    \|C_{[1:l]}(x)\|_\infty
    & \leq \|L^{w^{(l)}}\|_\op\|C_{[1:l-1]}(x)\|_\infty + \|b^{(l)}\|_\infty \\
    & \ldots\\
    & \leq \|x\|_\infty\prod_{i=1}^{l} \|L^{w^{(i)}}\|_\op
        + \sum_{i=2}^{l} \|b^{(i-1)}\|_\infty \prod_{j=i}^{l} \|L^{w^{(j)}}\|_\op
        + \|b^{(l)}\|_\infty.
\end{align*}
By Proposition \ref{prop:conv-op-bound} and assumptions $\|w^{(i)}\|_\infty \leq B^{\mathrm{(conv)}}$ and $\|b^{(i)}\|_\infty \leq B^{\mathrm{(conv)}}$, it is further bounded by
\begin{align*}
& \|x\|_\infty\prod_{i=1}^{l} (C^{(i-1)}K^{(i)}B^{\mathrm{(conv)}}) 
    + B^{\mathrm{(conv)}} \sum_{i=2}^{l} \prod_{j=i}^{l} (C^{(j-1)}K^{(j)}B^{\mathrm{(conv)}})
    + B^{\mathrm{(conv)}}\\
& \leq \rho(1, l)\|x\|_\infty + B^{\mathrm{(conv)}}l\rho^+(1, l)
\end{align*}
\end{proof}

\begin{prop}\label{prop:diff-res-block}
Let $\varepsilon>0$, suppose $\max_{l\in [L]} \|w^{(l)} - {w'}^{(l)}\|_\infty \leq \varepsilon$ and
$\max_{l\in [L]} \|b^{(l)} - {b'}^{(l)}\|_\infty \leq \varepsilon$, then $\|C_{[1:L]} - C'_{[1:L]}(x)\|_\infty \leq (L\rho(1, L)\|x\|_\infty + (1\vee B^{\mathrm{(conv)}})L^2 \rho^+(1, L))\varepsilon$
for any $x\in \bbR^{D\times C^{(0)}}$.
\end{prop}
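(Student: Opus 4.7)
The plan is a standard layer-by-layer telescoping argument. Writing $f_l := \mathrm{Conv}^{\sigma}_{w^{(l)}, b^{(l)}}$, $f'_l := \mathrm{Conv}^{\sigma}_{{w'}^{(l)}, {b'}^{(l)}}$, and $y_{l-1} := C'_{[1:l-1]}(x)$ (with the convention $y_0 = x$), I would begin with the decomposition
\begin{align*}
C_{[1:L]}(x) - C'_{[1:L]}(x) = \sum_{l=1}^{L} (f_L \circ \cdots \circ f_{l+1}) \bigl( f_l(y_{l-1}) - f'_l(y_{l-1}) \bigr),
\end{align*}
which reduces the task to bounding each of the $L$ hybrid differences in sup norm.

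For the $l$-th summand I would combine three ingredients already established earlier in this section. First, Propositions \ref{prop:conv-lipchitz-bound} and \ref{prop:conv-op-bound} show that the top portion $f_L \circ \cdots \circ f_{l+1}$ is Lipschitz with constant at most $\prod_{i=l+1}^{L} \|L^{w^{(i)}}\|_{\op} \leq \rho(l+1, L)$. Second, Proposition \ref{prop:conv-diff-bound} together with Proposition \ref{prop:conv-op-bound} and the hypotheses $\|w^{(l)} - {w'}^{(l)}\|_\infty \leq \varepsilon$, $|b^{(l)} - {b'}^{(l)}| \leq \varepsilon$ yield $\|f_l(y_{l-1}) - f'_l(y_{l-1})\|_\infty \leq C^{(l-1)} K^{(l)} \varepsilon \, \|y_{l-1}\|_\infty + \varepsilon$. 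Third, Proposition \ref{prop:conv-sup-bound-multiple-layer} applied to the primed CNN bounds $\|y_{l-1}\|_\infty \leq \rho(1, l-1) \|x\|_\infty + B^{\mathrm{(conv)}}(l-1)\rho^+(1, l-1)$ (using empty-product conventions for $l=1$). Multiplying these three bounds splits the $l$-th summand into an $\|x\|_\infty$-linear piece and a bias piece, both proportional to $\varepsilon$.

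Summing over $l=1,\ldots,L$, the $\|x\|_\infty$-linear coefficient per layer is $\rho(l+1, L) \cdot C^{(l-1)} K^{(l)} \cdot \rho(1, l-1)$, which telescopes to $\rho(1, L)/B^{\mathrm{(conv)}}$ and thus totals $L \rho(1, L) \|x\|_\infty \varepsilon / B^{\mathrm{(conv)}}$, to be absorbed into the claimed $L\rho(1, L)\|x\|_\infty$ factor together with the $(1 \vee B^{\mathrm{(conv)}})$ prefactor on the bias term. The bias contributions, coming from $\rho(l+1, L) \cdot C^{(l-1)} K^{(l)} \cdot \varepsilon \cdot B^{\mathrm{(conv)}}(l-1)\rho^+(1, l-1)$ plus the lone $\rho(l+1, L)\varepsilon$, collect to an $O(L^2 \rho^+(1, L))$ quantity via the double summation $\sum_l (l-1) = O(L^2)$ and by upgrading $\rho$ to $\rho^+$ on compositions that do not appear in the final $\|x\|_\infty$ factor.

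The main obstacle I expect is precisely this bookkeeping of $\rho$ versus $\rho^+$ together with the tracking of stray factors of $B^{\mathrm{(conv)}}$. Since $B^{\mathrm{(conv)}}$ is not assumed to exceed $1$, every passage from $\rho$ to $\rho^+$ and every dropped/added factor of $B^{\mathrm{(conv)}}$ must be invoked at the correct step so that the final bound has the asymmetric form stated — $\rho$ in the $\|x\|_\infty$ coefficient, $\rho^+$ together with the $(1 \vee B^{\mathrm{(conv)}})$ prefactor in the bias coefficient. Conceptually the argument is the routine telescoping used for depth-wise stability of deep networks; the work lies entirely in arranging the powers of $B^{\mathrm{(conv)}}$ so that the two terms take their cleanest stated form.
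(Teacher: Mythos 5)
Your proposal follows essentially the same route as the paper's proof: the same hybrid/telescoping decomposition into $L$ single-layer swaps, with the outer composition controlled by its Lipschitz constant $\rho(l+1,L)$ via Propositions \ref{prop:conv-op-bound} and \ref{prop:conv-lipchitz-bound}, the swapped layer by Proposition \ref{prop:conv-diff-bound}, and the inner composition by Proposition \ref{prop:conv-sup-bound-multiple-layer}. (Your displayed ``equality'' pulls the difference inside the nonlinear outer map, which is only legitimate after invoking its Lipschitz bound, but the paper commits the identical abuse of notation; and the fact that you put the primed layers below the swap rather than above, as the paper does, is immaterial since both parameter tuples obey the same norm constraint.) The one step that does not close as you describe it is the ``absorption'': your honest per-layer coefficient on $\|x\|_\infty$ is $\rho(l+1,L)\,C^{(l-1)}K^{(l)}\,\rho(1,l-1)=\rho(1,L)/B^{\mathrm{(conv)}}$, and when $B^{\mathrm{(conv)}}<1$ this strictly exceeds $\rho(1,L)$ and cannot be absorbed by the $(1\vee B^{\mathrm{(conv)}})$ factor, which in the statement multiplies only the bias term. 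The paper sidesteps this by bounding $\|L^{w^{(l)}-{w'}^{(l)}}\|_{\op}$ directly by $\rho(l,l)\varepsilon=C^{(l-1)}K^{(l)}B^{\mathrm{(conv)}}\varepsilon$ rather than by $C^{(l-1)}K^{(l)}\varepsilon$, so that the product telescopes exactly to $\rho(1,L)$ --- a step that is itself a valid upper bound only when $B^{\mathrm{(conv)}}\geq 1$. So your accounting is, if anything, the more careful one; to land exactly on the stated constant one needs either $B^{\mathrm{(conv)}}\geq 1$ or an extra factor $1\vee(B^{\mathrm{(conv)}})^{-1}$ on the first term, which is harmless downstream since only $\log\Lambda_1$ enters the estimation bound.
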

\begin{proof}
For any $l\in [L]$, we have
\begingroup
\allowdisplaybreaks
\begin{align}
& \left|C'_{[l+1:L]} \circ (C_l - C'_l)
    \circ C_{[1:l-1]} (x) \right| \nonumber \\
& \leq\|C'_{[l+1:L]} \circ (C_l - C'_l)\circ C_{[1:l-1]}(x)\|_\infty \nonumber \\
& \leq \rho(l+1, L)\left\|(C_l - C'_l)\circ C_{[1:l-1]}(x) \right\|_\infty \nonumber 
    \quad \text{(by Proposition \ref{prop:conv-op-bound} and \ref{prop:conv-lipchitz-bound})}\nonumber \\
& \leq \rho(l+1, L) \left(\rho(l, l) \|C_{[1:l-1]}\|_\infty \varepsilon+ \varepsilon \right) \nonumber 
    \quad \text{(by Proposition \ref{prop:conv-op-bound} and \ref{prop:conv-diff-bound})}\nonumber \\
& \leq \rho(l+1, L)\left(\rho(l, l)(\rho(1, l-1)\|x\|_\infty + B^{\mathrm{(conv)}}(l-1)\rho_+(1, l-1))+ 1 \right)\varepsilon \nonumber \\
& \quad \text{(by Proposition \ref{prop:conv-sup-bound-multiple-layer})}\nonumber \\
& = \left(\rho(1, L)\|x\|_\infty + (1\vee B^{\mathrm{(conv)}})L\rho_+(1, L)\right)\varepsilon
\end{align}
\endgroup

Therefore,
\begin{align*}
    \|C_{[1:L]} (x) - C'_{[1:L]}(x)\|_\infty 
    & \leq \sum_{l=1}^{L} \|C_{[l+1:L]} \circ (C_l - C'_l) \circ C_{[1:l-1]} (x)\|_\infty\\
    & \leq (L\rho(1, L)\|x\|_\infty + (1\vee B^\mathrm{(conv)})L^2 \rho^+(1, L))\varepsilon
\end{align*}
\end{proof}

\subsubsection{Putting them all}

Let $M, L_m, C_m^{(l)}, K_{m}^{(l)}\in \bbN_{+}$, $\bmC:=(C_m^{(l)})_{m, l}$, and $\bmK:=(K_m^{(l)})_{m, l}$ for $m\in [M]$ and $l\in [L_m]$.
Let $\bmtheta=((w_{m}^{(l)})_{m, l}, (b_{m}^{(l)})_{m, l}, W, b)$ and $\bmtheta'=(({w'}_{m}^{(l)})_{m, l}, ({b'}_{m}^{(l)})_{m, l}, W', b')$ be tuples compatible with $(\bmC, \bmK)$
such that $\mathrm{CNN}_{\bmtheta}^{\relu}$, $\mathrm{CNN}_{\bmtheta'}^{\relu}$ $\in \mathcal{F}^{\mathrm{(CNN)}}_{{\bm C}, {\bm K}, B^{\mathrm{(conv)}}, B^{\mathrm{(fc)}}}$
for some $B^{\mathrm{(conv)}}, B^{\mathrm{(fc)}} > 0$.
We denote the $l$-th convolution layer of the $m$-th block by $C_{m}^{(l)}$ and the $m$-th residual block of
by $C_m$:
\begin{align*}
    C_{m}^{(l)} &:= \begin{cases}
        \mathrm{Conv}_{w_m^{(l)}}^{\id} & \text{(if $l = L_m$)}\\
        \mathrm{Conv}_{w_m^{(l)}}^{\relu} & \text{(otherwise)}
    \end{cases}\\
    C_{m}&:=C_{m}^{(L_m)} \circ \cdots \circ C_{m}^{(1)}.
\end{align*}
Also, we denote by $C_{[m:m']}$ the subnetwork of $\mathrm{Conv}^{\relu}_{\bmtheta}$ between the $m$-th and $m'$-th block. That is,
\begin{align*}
    C_{[m:m']}:=\begin{cases}
        (C_{m'}+ I) \circ \cdots \circ (C_{m}+ I) & (\text{if $m \geq 1$})\\
        (C_{m'}+ I) \circ \cdots \circ (C_{1}+ I) \circ P & (\text{if $m =0$})
    \end{cases}
\end{align*}
for $m, m' = 0, \ldots, M$.
We define ${C'}_m^{(l)}$, $C'_m$ and $C'_{[m:m']}$ similarly for $\bmtheta'$.

\begin{prop}\label{prop:res-block-seq-sup-bound}
For $m\in [M]$ and $x\in [-1, 1]^D$, we have
$\|C_{[0:m]}(x)\|_\infty\leq (1\vee B^{\mathrm{(conv)}}) \varrho_m \varrho_m^{+}$.
Here, $\varrho_m = \left(\prod_{i=1}^{m} (1 + \rho_i)\right)$ and $\varrho_m^+ = \left(1 + \sum_{i=1}^m L_i\rho_{i}^+ \right)$ ($\rho_m$ and $\rho_m^+$ are constants defined in Theorem \ref{thm:estimation-extend}).
\end{prop}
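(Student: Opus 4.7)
The plan is to prove this by induction on $m$, combining the single-block bound from Proposition~\ref{prop:conv-sup-bound-multiple-layer} with the triangle inequality applied at each identity-plus-residual junction. The base case $m=1$ handles the initial padding, and the inductive step reduces to a one-line telescoping calculation once we recognize that $\varrho_m = (1+\rho_m)\varrho_{m-1}$ and $\varrho_m^{+} = \varrho_{m-1}^{+} + L_m \rho_m^{+}$.

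For the base case $m=1$, I would observe that $C_{[0:1]}(x) = C_1(P(x)) + P(x)$ where $\|P(x)\|_\infty = \|x\|_\infty \le 1$. Applying Proposition~\ref{prop:conv-sup-bound-multiple-layer} with $l = L_1$ to the $L_1$-layer subnetwork $C_1$ yields $\|C_1(P(x))\|_\infty \le \rho_1 + B^{\mathrm{(conv)}} L_1 \rho_1^{+}$, so after adding $\|P(x)\|_\infty \le 1$ and bounding $(1+\rho_1) + B^{\mathrm{(conv)}} L_1 \rho_1^{+}$ from above by $(1\vee B^{\mathrm{(conv)}})(1+\rho_1)(1+L_1\rho_1^{+}) = (1\vee B^{\mathrm{(conv)}})\varrho_1 \varrho_1^{+}$, the base case follows from non-negativity of all terms.

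For the inductive step I would write $C_{[0:m]}(x) = C_m(y) + y$ with $y := C_{[0:m-1]}(x)$, apply the triangle inequality and then Proposition~\ref{prop:conv-sup-bound-multiple-layer} (regarding the $m$-th residual block as an $L_m$-layer convolutional subnetwork with channel sizes $C_m^{(l)}$ and filter sizes $K_m^{(l)}$) to get
\begin{equation*}
\|C_{[0:m]}(x)\|_\infty \;\le\; (1+\rho_m)\,\|y\|_\infty + B^{\mathrm{(conv)}} L_m \rho_m^{+}.
\end{equation*}
Plugging in the induction hypothesis $\|y\|_\infty \le (1\vee B^{\mathrm{(conv)}})\varrho_{m-1}\varrho_{m-1}^{+}$ and using $\varrho_m = (1+\rho_m)\varrho_{m-1}$ gives
\begin{equation*}
\|C_{[0:m]}(x)\|_\infty \;\le\; (1\vee B^{\mathrm{(conv)}})\varrho_m \varrho_{m-1}^{+} + B^{\mathrm{(conv)}} L_m \rho_m^{+}.
\end{equation*}
The remaining gap to the target $(1\vee B^{\mathrm{(conv)}})\varrho_m \varrho_m^{+}$ is exactly $(1\vee B^{\mathrm{(conv)}})\varrho_m L_m \rho_m^{+}$, which dominates $B^{\mathrm{(conv)}} L_m \rho_m^{+}$ since $\varrho_m \ge 1$ and $1\vee B^{\mathrm{(conv)}} \ge B^{\mathrm{(conv)}}$, closing the induction.

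There is no real obstacle here; the only subtle point is matching the recursion between $\varrho_m^{+}$ and $\varrho_{m-1}^{+}$ correctly and making sure the extra additive $B^{\mathrm{(conv)}} L_m \rho_m^{+}$ term from the bias is absorbed into the $(1\vee B^{\mathrm{(conv)}})$ factor. The choice of $(1\vee B^{\mathrm{(conv)}})$ rather than just $B^{\mathrm{(conv)}}$ in the statement is precisely what makes the induction go through in the regime $B^{\mathrm{(conv)}} < 1$, where the additive bias term would otherwise fail to be absorbed.
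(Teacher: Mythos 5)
Your proposal is correct and follows essentially the same route as the paper: the key step in both is the one-block recursion $\|C_{[0:m]}(x)\|_\infty \leq (1+\rho_m)\|C_{[0:m-1]}(x)\|_\infty + B^{\mathrm{(conv)}}L_m\rho_m^+$ obtained from Proposition~\ref{prop:conv-sup-bound-multiple-layer} and the triangle inequality. The only cosmetic difference is that the paper unrolls this recursion fully to the explicit sum $\prod_i(1+\rho_i) + B^{\mathrm{(conv)}}\sum_i L_i\rho_i^+\prod_{j>i}(1+\rho_j)$ and bounds it by $(1\vee B^{\mathrm{(conv)}})\varrho_m\varrho_m^+$ only at the end, whereas you absorb the slack at each inductive step; both closings rely on the same observations ($\varrho_m\geq 1$ and $1\vee B^{\mathrm{(conv)}}\geq B^{\mathrm{(conv)}}$).
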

\begin{proof}
By using Proposition \ref{prop:conv-sup-bound-multiple-layer} inductively, we have
\begin{align*}
    \|C_{[0:m]} (x)\|_\infty
    & \leq \|C_m (C_{[0:m-1]}(x)) + C_{[0:m-1]}(x)\|_\infty \\
    & \leq \|(1+\rho_m)C_{[0:m-1]}(x)+ B^{\mathrm{(conv)}}L_m\rho_{+m} ) \|_\infty\\
    & \leq (1+\rho_m)\|C_{[0:m-1]}(x)\|_\infty + B^{\mathrm{(conv)}}L_m\rho_{m}^+  \\
    & \cdots\\
    & \leq \|P (x)\|_\infty \prod_{i=1}^{m} (1 + \rho_i) + B^{\mathrm{(conv)}} \sum_{i=1}^{m} L_i\rho_i^{+} \prod_{j=i+1}^m (1+\rho_j) \\
    & \leq \prod_{i=1}^{m} (1 + \rho_i) + B^{\mathrm{(conv)}} \sum_{i=1}^{m} L_i\rho_i^{+} \prod_{j=i+1}^m (1+\rho_j)\\    
    & \leq (1\vee B^{\mathrm{(conv)}})\varrho_m \varrho_m^+.
\end{align*}
\end{proof}

\begin{lemma}\label{lem:sup-bound-of-epsilon-different-functions}
Let $\varepsilon > 0$. 
Suppose $\bmtheta$ and $\bmtheta'$ are within distance $\varepsilon$, that is,
$\max_{m, l} \|w_{m}^{(l)} - {w'}_{m}^{(l)}\|_\infty \leq \varepsilon$,
$\|b_{m}^{(l)} - {b'}_{m}^{(l)}\|_\infty \leq \varepsilon$,
$\|W - W'\|_\infty \leq \varepsilon$, and 
$\|b-b'\|_\infty \leq \varepsilon$.
Then, $\|\mathrm{CNN}_{\bmtheta}^{\relu}-\mathrm{CNN}_{\bmtheta'}^{\relu}\|_\infty \leq \Lambda_1\varepsilon$
where $\Lambda_1$ is the constant defined in Theorem \ref{thm:estimation-extend}.
\end{lemma}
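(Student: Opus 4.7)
The strategy is a hybrid telescoping argument that swaps parameters one residual block at a time, together with a single final step that swaps the fully-connected layer. For $m=0,1,\ldots,M$, introduce the hybrid backbone $g_m := \hat{C}'_M \circ \cdots \circ \hat{C}'_{m+1} \circ \hat{C}_m \circ \cdots \circ \hat{C}_1 \circ P$, where $\hat{C}_i := C_i + \id$ and $\hat{C}'_i := C'_i + \id$, so that $g_0$ uses only primed blocks while $g_M$ uses only unprimed ones. Then the identity $\mathrm{CNN}^{\relu}_{\bmtheta} - \mathrm{CNN}^{\relu}_{\bmtheta'} = \sum_{m=1}^{M} \mathrm{FC}^{\id}_{W,b}\circ(g_m - g_{m-1}) + (\mathrm{FC}^{\id}_{W,b} - \mathrm{FC}^{\id}_{W',b'})\circ g_0$ reduces the task to bounding $M$ block-swap terms and one FC-swap term.

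Each block-swap term $\mathrm{FC}^{\id}_{W,b}\circ(g_m - g_{m-1})$ is controlled by three ingredients. First, the outer Lipschitz factor $\mathrm{Lip}(\mathrm{FC}^{\id}_{W,b}) \leq D C^{(0)} B^{\mathrm{(fc)}}$ comes from Proposition \ref{prop:fc-lipschitz-bound}. Second, the Lipschitz constant of the trailing composition $\hat{C}'_M \circ \cdots \circ \hat{C}'_{m+1}$ is bounded by $\prod_{m'>m}(1+\rho_{m'})$ via iterated use of Propositions \ref{prop:conv-op-bound} and \ref{prop:conv-lipchitz-bound} together with the identity skip at each residual level. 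Third, the local discrepancy $\|(C_m - C'_m)(y_m)\|_\infty$ at the intermediate signal $y_m := \hat{C}_{m-1}\circ\cdots\circ\hat{C}_1\circ P(x)$ is bounded by Proposition \ref{prop:diff-res-block} as $(L_m\rho_m\|y_m\|_\infty + L_m^2(1\vee B^{\mathrm{(conv)}})\rho_m^+)\varepsilon$, while $\|y_m\|_\infty \leq (1\vee B^{\mathrm{(conv)}})\varrho_{m-1}\varrho^+_{m-1}$ comes from Proposition \ref{prop:res-block-seq-sup-bound} (crucially using the partial products $\varrho_{m-1}$ and $\varrho^+_{m-1}$, not the full $\varrho$ and $\varrho^+$).

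The FC-swap term is handled directly by Proposition \ref{prop:FC-diff-bound} using $\|W-W'\|_\infty \vee \|b-b'\|_\infty \leq \varepsilon$ together with $\|g_0(x)\|_\infty \leq (1\vee B^{\mathrm{(conv)}})\varrho\varrho^+$ from Proposition \ref{prop:res-block-seq-sup-bound}, giving a contribution of order $D C^{(0)}(1\vee B^{\mathrm{(conv)}})\varrho\varrho^+\varepsilon + \varepsilon$. The key combinatorial simplification, applied to each block-swap term, is that the trailing Lipschitz factor and the input sup-norm combine via complementary indices as $\prod_{m'>m}(1+\rho_{m'})\cdot\varrho_{m-1} = \varrho/(1+\rho_m) \leq \varrho$; this keeps the combined factor at level $\varrho$ rather than $\varrho^2$. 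Summing the per-block contributions, the $\rho_m\|y_m\|_\infty$ term telescopes to $\varrho\cdot L_m\varrho^+_{m-1}$ (absorbed into $\varrho\varrho^+$ after the sum), and the $L_m^2\rho_m^+$ term is dominated by the identity $\sum_m L_m\rho_m^+ \leq \varrho^+$ (immediate from $\varrho^+ = 1 + \sum_m L_m\rho_m^+$). The resulting total is of the required form $DC^{(0)}(1\vee B^{\mathrm{(fc)}})(1\vee B^{\mathrm{(conv)}})\varrho\varrho^+\varepsilon$, with the $(2M+3)$ coefficient accounting for the $M$ block-swap terms plus a bounded number of FC-related terms.

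The principal obstacle is precisely this bookkeeping. If one carelessly estimates $\|y_m\|_\infty$ by the uniform bound $(1\vee B^{\mathrm{(conv)}})\varrho\varrho^+$, the complementary-index cancellation is lost and one obtains a spurious $\varrho^2(\varrho^+)^2$ factor that cannot be matched to $\Lambda_1$. The right move is to invoke Proposition \ref{prop:res-block-seq-sup-bound} at index $m-1$ so that only $\varrho_{m-1}$ and $\varrho^+_{m-1}$ appear, and then to pair $\varrho_{m-1}$ with the trailing Lipschitz $\prod_{m'>m}(1+\rho_{m'})$ before bounding either one individually. Once this is tracked correctly, all remaining sums over $m$ reduce via the definition of $\varrho^+$ and the elementary bound $\rho_m/(1+\rho_m)\leq 1$, yielding the claimed $\Lambda_1\varepsilon$ estimate.
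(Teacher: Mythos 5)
Your proposal is correct and follows essentially the same route as the paper: the same telescoping decomposition over the $M$ residual blocks plus the final fully-connected layer, with each swap term bounded by the outer FC Lipschitz constant, the Lipschitz constant of the trailing residual blocks, the local block discrepancy (Proposition \ref{prop:diff-res-block}), and the intermediate signal bound at index $m-1$ (Proposition \ref{prop:res-block-seq-sup-bound}), combined via the partial products $\varrho_{m-1}\varrho^+_{m-1}$ exactly as in the paper's proof. The only differences are cosmetic (which of $\bmtheta,\bmtheta'$ carries the trailing blocks, and your slightly more explicit bookkeeping of the complementary-index product $\prod_{m'>m}(1+\rho_{m'})\cdot\varrho_{m-1}\leq\varrho$).
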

\begin{proof}
For any $x\in [-1, 1]^D$, we have
\begin{align}
\left|\mathrm{CNN}_{\bmtheta}^{\relu}(x) - \mathrm{CNN}_{\bmtheta'}^{\relu}(x) \right| \nonumber
&= \left|\mathrm{FC}^{\id}_{W, b}\circ C_{[0:M]}(x)
       - \mathrm{FC}^{\id}_{W', b'}\circ C'_{[0:M]}(x)\right| \nonumber \\
&= \left| \left(\mathrm{FC}^\id_{W, b} - \mathrm{FC}^\id_{W', b'}\right) \circ C_{[0:M]}(x)\right| \nonumber \\
    & + \sum_{m=1}^M \left| \mathrm{FC}^\id_{W', b'} 
       \circ C_{[m+1:M]} \circ \left( C_m - C'_m \right) \circ C'_{[0:m-1]}(x)\right| \label{eq:f-g}.
\end{align}

We will bound each term of (\ref{eq:f-g}).
By Proposition \ref{prop:FC-diff-bound} and Proposition \ref{prop:res-block-seq-sup-bound},
\begin{align}\label{eq:f-g-bound-first-term}
\left|\left(\mathrm{FC}^{\id}_{W, b} - \mathrm{FC}^{\id}_{W', b'}\right) \circ C_{[0:M]}(x) \right|
& \leq (\|W\|_0 + \|W'\|_0)\|W-W'\|_\infty
\|C_{[0:M]}(x)\|_\infty \nonumber 
    + \|b-b'\|_\infty\nonumber\\
&\leq 2C^{(L_0)}_0D\|C_{[0:M]}(x)\|_\infty \varepsilon + \varepsilon\nonumber\\
&\leq 2C^{(L_0)}_0D (1\vee B^{\mathrm{(conv)}})\varrho_M \varrho_M^+ \varepsilon + \varepsilon\nonumber\\
&\leq 3C^{(L_0)}_0D (1\vee B^{\mathrm{(conv)}})\varrho_M \varrho_M^+ \varepsilon.
\end{align}

On the other hand, for $m\in[M]$,
\begingroup
\allowdisplaybreaks
\begin{align}\label{eq:f-g-bound-second-term}
& \left|\mathrm{FC}^{\id}_{W', b'}
    \circ C'_{[m+1:M]} \circ (C_m - C'_m)
    \circ C_{[0:m-1]} (x) \right| \nonumber \\
& \leq \|W'\|_0 \|W'\|_\infty \nonumber 
    \|C'_{[m+1:M]} \circ (C_m - C'_m)\circ C_{[1:m-1]}(x)\|_\infty \nonumber
    \quad \text{(by Proposition \ref{prop:fc-lipschitz-bound})}\nonumber \\
& \leq C^{(L_0)}_0D B^{\mathrm{(fc)}} \nonumber
    \|C'_{[m+1:M]} \circ (C_m - C'_m)\circ C_{[0:m-1]}(x)\|_\infty \nonumber \\
& \leq C^{(L_0)}_0D B^{\mathrm{(fc)}}\left(\prod_{i=m+1}^M \rho_i\right) \nonumber 
    \left\|(C_m - C'_m)\circ C_{[0:m-1]}(x) \right\|_\infty \nonumber 
    \quad \text{(by Proposition \ref{prop:conv-op-bound} and \ref{prop:conv-lipchitz-bound})}\nonumber \\
& \leq C^{(L_0)}_0D B^{\mathrm{(fc)}}\left(\prod_{i=m+1}^M \rho_i\right) \left(\rho_m \|C_{[0:m-1]}\|_\infty \varepsilon+ \varepsilon \right) \nonumber 
    \quad \text{(by Proposition \ref{prop:conv-op-bound} and \ref{prop:conv-diff-bound})}\nonumber \\
& \leq C^{(L_0)}_0D B^{\mathrm{(fc)}}\left(\prod_{i=m+1}^M \rho_i\right) \nonumber 
    \left(\rho_m (1\vee B^{\mathrm{(conv)}}) \varrho_{m-1} \varrho_{m-1}^+ + 1\right) \varepsilon \nonumber 
    \quad \text{(by Proposition \ref{prop:conv-sup-bound-multiple-layer})}\nonumber \\
& \leq 2C^{(L_0)}_0D B^{\mathrm{(fc)}}(1\vee B^{\mathrm{(conv)}}) \varrho_M \varrho_M^+ \varepsilon
\end{align}
\endgroup

By applying (\ref{eq:f-g-bound-first-term}) and (\ref{eq:f-g-bound-second-term}) to (\ref{eq:f-g}), we have
\begingroup
\allowdisplaybreaks
\begin{align*}
    |\mathrm{CNN}_{\bmtheta}^{\relu}(x) - \mathrm{CNN}_{\bmtheta'}^{\relu}(x)| 
    & \leq 3C^{(L_0)}_0D (1\vee B^{\mathrm{(conv)}})\varrho_M \varrho_M^+ \varepsilon \\
        & \quad + 2M C^{(L_0)}_0D B^{\mathrm{(fc)}}(1\vee B^{\mathrm{(conv)}}) \varrho_M \varrho_M^+ \varepsilon \\
    & \leq (2M+3)C^{(L_0)}_0D (1\vee B^{\mathrm{(fc)}}) (1\vee B^\mathrm{(conv)}) \varrho_M \varrho_M^+ \varepsilon\\
    & = \Lambda_1\varepsilon.
\end{align*}
\endgroup
\end{proof}

\subsubsection{Bounds for covering number of CNNs}\label{sec:covering-number-bound}

For a metric space $(\mathcal{M}_0, d)$ and $\varepsilon > 0$, we denote the (external) covering number of $\mathcal{M}\subset \mathcal{M}_0$ by $\mathcal{N}(\varepsilon, \mathcal{M}, d)$: $\mathcal{N}(\varepsilon, \mathcal{M}, d):= \inf \{N\in \bbN \mid \exists f_1, \ldots, f_N \in \mathcal{M}_0\ \mathrm{s.t.}\ \forall f\in \mathcal{M}, \exists n\in [N]\ \mathrm{s.t.}\ d(f, f_n) \leq \varepsilon \}$.

\begin{lemma}\label{lem:covering-number}
Let $B:=B^{\mathrm{(conv)}}\vee B^{\mathrm{(fc)}}$. For $\varepsilon >0$, we have $\mathcal{N}(\varepsilon, \mathcal{F}^{\mathrm{(CNN)}}, \|\cdot\|_\infty) \leq \left(2B\Lambda_1\varepsilon^{-1}\right)^{\Lambda_2}$.
\end{lemma}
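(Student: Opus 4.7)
The plan is a standard parameter-grid construction that reduces the functional covering problem to a covering of the parameter space. The two inputs are: (i) the parameter space is a subset of a bounded box whose ambient dimension equals $\Lambda_2$, and (ii) the map $\bmtheta \mapsto \mathrm{CNN}^{\relu}_{\bmtheta}$ is $\Lambda_1$-Lipschitz from the parameter $\ell^\infty$-norm to the sup norm on $[-1,1]^D$, which is exactly the content of Lemma \ref{lem:sup-bound-of-epsilon-different-functions}.

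First, I would count parameters. Each function $\mathrm{CNN}^{\relu}_{\bmtheta} \in \mathcal{F}^{\mathrm{(CNN)}}$ is specified by the convolutional filters $w_m^{(l)} \in \bbR^{K_m^{(l)} \times C_m^{(l)} \times C_m^{(l-1)}}$, biases $b_m^{(l)} \in \bbR^{C_m^{(l)}}$, fully-connected weight $W \in \bbR^{D \times C^{(L_0)}_0}$ (of which $\Lambda_2$'s formula accounts via the summand $C^{(0)} D$), and a scalar bias $b$. Summing the entry counts gives exactly $\Lambda_2 = \sum_{m,l}(C_m^{(l-1)} C_m^{(l)} K_m^{(l)} + C_m^{(l)}) + C^{(0)} D + 1$ scalar parameters. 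By definition of $\mathcal{F}^{\mathrm{(CNN)}}_{\bmC,\bmK,B^{\mathrm{(conv)}},B^{\mathrm{(fc)}}}$, all of these scalars lie in $[-B, B]$ with $B = B^{\mathrm{(conv)}} \vee B^{\mathrm{(fc)}}$.

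Next I would build the cover. Set $\delta := \varepsilon/\Lambda_1$. For each coordinate of the parameter vector, choose a $\delta$-net of $[-B,B]$ of cardinality at most $\lceil B/\delta \rceil \leq 2B/\delta$ (using $B\Lambda_1/\varepsilon \geq 1$, which follows from the hypothesis that $\mathcal{N}(\varepsilon, \mathcal{F}^{\mathrm{(CNN)}}, \|\cdot\|_\infty) \geq 3$; otherwise the bound is trivial after possibly enlarging the constant). Taking the product over all $\Lambda_2$ coordinates produces a set $\Theta_\varepsilon$ of cardinality at most $(2B/\delta)^{\Lambda_2} = (2B\Lambda_1/\varepsilon)^{\Lambda_2}$ such that every admissible $\bmtheta$ is within $\ell^\infty$-distance $\delta$ of some $\bmtheta' \in \Theta_\varepsilon$. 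Then Lemma \ref{lem:sup-bound-of-epsilon-different-functions} yields
\begin{equation*}
\|\mathrm{CNN}^{\relu}_{\bmtheta} - \mathrm{CNN}^{\relu}_{\bmtheta'}\|_\infty \leq \Lambda_1 \delta = \varepsilon,
\end{equation*}
so $\{\mathrm{CNN}^{\relu}_{\bmtheta'} : \bmtheta' \in \Theta_\varepsilon\}$ is an $\varepsilon$-cover of $\mathcal{F}^{\mathrm{(CNN)}}$ in $\|\cdot\|_\infty$, giving the claimed bound.

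No step is genuinely hard: the only place where care is needed is matching constants, in particular (a) verifying that the chosen $\delta$-net of $[-B,B]$ can actually be realized by points inside $[-B,B]$ (so that the cover is made of admissible parameters, otherwise an external cover suffices) and (b) handling the $\lceil\cdot\rceil$ via the $\geq 1$ condition to absorb the $+1$ into the factor of $2$. All of the Lipschitz-type estimates and the counting of $\Lambda_2$ have already been carried out in Section~\ref{sec:bounds-for-residual-block} and in Lemma \ref{lem:sup-bound-of-epsilon-different-functions}, so the proof is essentially a one-line combination.
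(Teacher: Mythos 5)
Your proposal is correct and follows essentially the same route as the paper's proof: discretize each parameter coordinate of $[-B,B]$ into cells of width $\varepsilon/\Lambda_1$, invoke Lemma~\ref{lem:sup-bound-of-epsilon-different-functions} to transfer the parameter-space cover to a sup-norm cover of the function class, and count $(2B\Lambda_1\varepsilon^{-1})^{\Lambda_2}$ cells. Your extra care about the ceiling and the realizability of net points inside $[-B,B]$ only makes explicit what the paper leaves implicit.
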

\begin{proof}
The idea of the proof is same as that of Lemma 5 of~\citet{schmidt2017nonparametric}.
We divide the interval of each parameter range ($[-B^{\mathrm{(conv)}}, B^{\mathrm{(conv)}}]$ or $[-B^{\mathrm{(fc)}}, B^{\mathrm{(fc)}}]$) into bins with width $\Lambda_1^{-1}\varepsilon$ (i.e.,  $2B^{\mathrm{(conv)}}\Lambda_1\varepsilon^{-1}$ or $2B^{\mathrm{(fc)}}\Lambda_1\varepsilon^{-1}$ bins for each interval).
If $f, f'\in \mathcal{F}^{\mathrm{(CNN)}}$ can be realized by parameters such that every pair of corresponding parameters are in the same bin, then, $\|f-f'\|_\infty \leq \varepsilon$ by Lemma \ref{lem:sup-bound-of-epsilon-different-functions}.
We make a subset $\mathcal{F}_0$ of $\mathcal{F}^{\mathrm{(CNN)}}$ by picking up every combination of bins for $\Lambda_2$ parameters.
Then, for each $f\in \mathcal{F}^{\mathrm{(CNN)}}$, there exists $f_0\in \mathcal{F}_0$ such that $\|f-f_0\|_\infty \leq \varepsilon$.
There are at most $2B\Lambda_1 \varepsilon^{-1}$ choices of bins for each parameter.
Therefore, the cardinality of $\mathcal{F}_0$ is at most $\left(2B\Lambda_1\varepsilon^{-1}\right)^{\Lambda_2}$.
\end{proof}

\subsection{Proofs of Theorem \ref{thm:estimation} and Corollary \ref{cor:estimation-error-wrt-sample-size}}

We use the lemma in~\citet{schmidt2017nonparametric} to bound the estimation error of the clipped ERM estimator $\hat{f}$.
Since our problem setting is slightly different from the one in the paper, we restate the statement.

\begin{lemma}[cf.~\citet{schmidt2017nonparametric} Lemma 4]\label{lem:schmidt-hieber-lem10}
Let $\mathcal{F}$ be a family of measurable functions from $[-1, 1]^D$ to $\bbR$.
Let $\hat{f}$ be the clipped ERM estimator of the regression problem described in Section \ref{sec:problem-setting}.
Suppose the covering number of $\mathcal{F}$ satisfies $\mathcal{N}(N^{-1}, \mathcal{F}, \|\cdot\|_\infty)\geq 3$.
Then,
\begin{align*}
    \bbE_{\mathcal{D}}\|f^\circ - \hat{f}\|^2_{\mathcal{L}^2(\mathcal{P}_X)} \leq
    C \left(\inf_{f\in \mathcal{F}} \|f-f^\circ\|^2_{\mathcal{L}^2(\mathcal{P}_X)}
    + \log \mathcal{N}\left(\frac{1}{N}, \mathcal{F}, \|\cdot\|_\infty\right) \frac{\tilde{F}^2}{N}\right),
\end{align*}
where $C>0$ is a universal constant, $\tilde{F}:= \frac{R_{\mathcal{F}}}{\sigma} \vee \frac{\|f^{\circ}\|_\infty}{\sigma} \vee \frac{1}{2}$ and $R_\mathcal{F} := \sup \{\|f\|_\infty \mid f \in \mathcal{F} \}$.
\end{lemma}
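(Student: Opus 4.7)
The assertion is a standard oracle inequality for clipped least-squares ERM in Gaussian noise regression, essentially Lemma 4 of Schmidt-Hieber (or Theorem 11.4 of Gy\"orfi--Kohler--Krzy\.zak--Walk), restated in our notation; my plan is to follow that template while exploiting the $\sigma^2$-sub-Gaussianity of $\xi$ and the clip level $\|f^\circ\|_\infty$. Set $\phi_f(X,Y) := (f(X)-Y)^2 - (f^\circ(X)-Y)^2$, so that $\bbE[\phi_f] = \|f-f^\circ\|^2_{\mathcal{L}^2(\mathcal{P}_X)}$. For any $f^\ast \in \mathcal{F}$, clipping at level $\|f^\circ\|_\infty$ can only decrease $\|\cdot - f^\circ\|$, and combining this observation with the ERM inequality $\hat{\mathcal{R}}_\mathcal{D}(\hat f) \leq \hat{\mathcal{R}}_\mathcal{D}(\mathrm{clip}[f^\ast])$ yields the oracle decomposition
$$\|\hat f - f^\circ\|^2_{\mathcal{L}^2(\mathcal{P}_X)} \leq 2\,\|f^\ast - f^\circ\|^2_{\mathcal{L}^2(\mathcal{P}_X)} + \sup_{f\in \mathcal{F}_c}\Bigl[\bbE\phi_f - \tfrac{2}{N}\sum_{n=1}^{N}\phi_f(x_n,y_n)\Bigr]_+,$$
where $\mathcal{F}_c := \{\mathrm{clip}[f]: f\in \mathcal{F}\}$ is uniformly bounded by $\|f^\circ\|_\infty$.

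Next I would control this supremum by a covering-plus-Bernstein argument. The contrast $\phi_f$ enjoys the self-bounding variance estimate $\mathrm{Var}(\phi_f) \leq C\,(\|f^\circ\|_\infty^2 + \sigma^2)\,\bbE[\phi_f]$, which follows from boundedness of $f$ after clipping together with the sub-Gaussian tail of $\xi$. Picking a $1/N$-sup-norm cover of $\mathcal{F}$ of cardinality $\mathcal{N}(1/N,\mathcal{F},\|\cdot\|_\infty)$, applying Bernstein's inequality to each representative (the unbounded $\xi$ contribution being handled by standard truncation at level $O(\sigma\sqrt{\log N})$), and taking a union bound yields, with probability at least $1-e^{-u}$ and uniformly on $\mathcal{F}_c$,
$$\bbE\phi_f - \tfrac{2}{N}\sum_{n=1}^{N}\phi_f(x_n,y_n) \leq C\,\tilde F^2\,\frac{\log \mathcal{N}(1/N, \mathcal{F}, \|\cdot\|_\infty) + u}{N},$$
the factor $2$ in front of the empirical mean being precisely what absorbs $\mathrm{Var}(\phi_f)/N$ via AM--GM. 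Integrating the tail in $u$ and choosing $f^\ast$ to be a near-minimizer of $\inf_{f\in \mathcal{F}}\|f-f^\circ\|^2_{\mathcal{L}^2(\mathcal{P}_X)}$ then delivers the claimed bound in expectation; the hypothesis $\mathcal{N}(N^{-1}, \mathcal{F}, \|\cdot\|_\infty) \geq 3$ ensures the discretization error of order $1/N$ is dominated by the main term.

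The main obstacle is the variance-localization (\emph{fast-rate}) step. A naive range bound $|\phi_f| \leq C(R_{\mathcal{F}} + |Y|)^2$ would only yield the slow rate $\sqrt{\log \mathcal{N}/N}$; the fast rate $\log \mathcal{N}/N$ relies on the self-bounding identity $\mathrm{Var}(\phi_f) \leq C\,\tilde F^2\,\bbE[\phi_f]$ together with the factor $2$ inserted in the decomposition, which allows the Bernstein variance contribution to be absorbed on the left by AM--GM rather than appearing as a square root. The secondary complication is the unboundedness of $\xi$: this is what forces $\sigma$ to appear inside $\tilde F$, and it is typically handled by a peeling/truncation argument over the dyadic events $\{\max_n |\xi_n| \leq K\sigma\sqrt{\log N}\}$ and their complements, the latter contributing only polynomially small probability that is absorbed into $\tilde F^2/N$.
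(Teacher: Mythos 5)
Your proposal is correct in outline, but it takes a genuinely different route from the paper. The paper does \emph{not} reprove the oracle inequality: it reduces the present setting to the exact hypotheses of Lemma~4 of Schmidt-Hieber by an affine change of variables --- rescaling the covariates from $[-1,1]^D$ to $[0,1]^D$, dividing responses and functions by $\sigma$ so the noise has unit variance, and shifting by $\tilde F$ so that all functions map into $[0,2\tilde F]$ --- then checks that the transformed non-clipped ERM estimator is the ERM of the transformed problem, invokes the cited lemma with $F\leftarrow 2\tilde F$, $\delta\leftarrow 1/N$, $\Delta_n\leftarrow 0$, and finally uses that clipping at level $\|f^\circ\|_\infty$ cannot increase the $\mathcal{L}^2(\mathcal{P}_X)$ distance to $f^\circ$. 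What you propose is essentially to inline the proof of that cited lemma: the oracle decomposition with the factor $2$, the self-bounding variance estimate $\mathrm{Var}(\phi_f)\lesssim(\|f^\circ\|_\infty^2+\sigma^2)\,\bbE[\phi_f]$, a sup-norm cover of cardinality $\mathcal{N}(1/N,\mathcal{F},\|\cdot\|_\infty)$, Bernstein plus truncation of the Gaussian noise, and integration of the tail. This is the standard and correct mechanism, and you correctly identify the two delicate points (fast-rate localization via AM--GM absorption, and the unbounded noise). The trade-off: the paper's reduction is short and delegates precisely those delicate steps to the citation, at the price of having to match its normalization conventions exactly; your argument is self-contained and exposes the mechanism, at the price of more bookkeeping to verify. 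Two small cautions if you carry this out in full: your displayed decomposition is only an inequality after taking expectations (the term $2\|f^\ast-f^\circ\|^2_{\mathcal{L}^2(\mathcal{P}_X)}$ is the expectation of $\tfrac{2}{N}\sum_n\phi_{\mathrm{clip}[f^\ast]}(x_n,y_n)$, not a pointwise bound), and your own variance estimate naturally produces a factor $\sigma^2\tilde F^2$ rather than $\tilde F^2$ in the complexity term --- a normalization quirk that the paper's reduction shares, so it does not distinguish the two approaches, but you should not silently drop it.
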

\begin{proof}
Basically, we convert our problem setting to fit the assumptions of Lemma 4 of~\citet{schmidt2017nonparametric} and apply the lemma to it.
For $f:[-1, 1]^D \to [-\sigma \tilde{F}, \sigma \tilde{F}]$, we define $A[f]:[0, 1]^D\to [0, 2\tilde{F}]$ by $A[f](x'):=\frac{1}{\sigma}f(2x'-1)+\tilde{F}$.
Let $\hat{f}_1$ be the (non-clipped) ERM estimator of $\mathcal{F}$.
We define
$X':=\frac{1}{2}(X + 1)$,
$f'^{\circ}:= A[f^\circ]$,
$Y':= f'^{\circ}(X) + \xi'$,
$\mathcal{F}':= \{A[f] \mid f\in \mathcal{F}\}$,
$\hat{f}'_1:=A[\hat{f}_1]$, and
$\mathcal{D}':= ((x'_n, y'_n))_{n\in[N]}$ where $x'_n:=\frac{1}{2}(x_n + 1)$ and $y'_n:=f'^{\circ}(x'_n) + \frac{1}{\sigma}(y_n - f^\circ(x_n))$.
Then, the probability that $\mathcal{D}'$ is drawn from $\mathcal{P}'^{\otimes N}$ is same as 
the probability that $\mathcal{D}$ is drawn from $\mathcal{P}^{\otimes N}$ where $\mathcal{P}'$ is the joint distribution of $(X', Y')$.
Also, we can show that $\hat{f}'$ is the ERM estimator of the regression problem $Y'=f'^{\circ} + \xi'$ using the dataset $\mathcal{D'}$: $\hat{f}'_1\in \argmin_{f'\in \mathcal{F}'} \hat{\mathcal{R}}_{\mathcal{D}'}(f')$.
We apply the Lemma 4 of~\citet{schmidt2017nonparametric} with
$n\leftarrow N$,
$d\leftarrow D$,
$\varepsilon \leftarrow 1$,
$\delta \leftarrow \frac{1}{N}$,
$\Delta_n \leftarrow 0$,
$\mathcal{F}'\leftarrow \mathcal{F}$,
$F\leftarrow 2\tilde{F}$,
$\hat{f}\leftarrow \hat{f}'_1$ and use the fact that the estimation error of the clipped ERM estimator is no worse than that of the ERM estimator, that is, $\|f^\circ - \hat{f}\|^2_{\mathcal{L}^2(\mathcal{P}_X)} \leq \|f^\circ - \hat{f}_1\|^2_{\mathcal{L}^2(\mathcal{P}_X)}$ to conclude.
\end{proof}

\begin{proof}[Proof of Theorem \ref{thm:estimation-extend}]

By Lemma \ref{lem:covering-number}, we have $\log \mathcal{N} := \log \mathcal{N}(N^{-1}, \mathcal{F}^{\mathrm{(CNN)}}, \|\cdot\|_\infty)\leq \Lambda_2\log (2B\Lambda_1N)$, where $B=B^{\mathrm{(conv)}}\vee B^{\mathrm{(fc)}}$.
Therefore, by Lemma \ref{lem:schmidt-hieber-lem10},
\begin{align*}
    \|f^\circ - \hat{f}\|^2_{\mathcal{L}^2(\mathcal{P}_X)}
    &\leq C_0 \left(\inf_{f\in \mathcal{F}} \|f-f^\circ\|^2_{\mathcal{L}^2(\mathcal{P}_X)} + \log \mathcal{N} \frac{\tilde{F}^2}{N}\right)\\
    &\leq C_1 \left(\inf_{f\in \mathcal{F}^{\mathrm{(FNN)}}} \|f-f^\circ\|^2_\infty + \frac{\tilde{F}^2}{N}\Lambda_2\log (2B\Lambda_1 N) \right),
\end{align*}
where $C_0, C_1 > 0$ are universal constants.
We used in the last inequality the fact $\|\mathrm{clip}[f] - f^\circ\|_{\mathcal{L}^2(\mathcal{P}_X)} \leq \|\mathrm{clip}[f] - f^\circ\|_\infty  \leq \|f - f^\circ\|_\infty$ any $f\in \mathcal{F}^{\mathrm{(CNN)}}$ and the assumption $\mathcal{F}^{\mathrm{(FNN)}} \subset \mathcal{F}^{\mathrm{(CNN)}} $.
\end{proof}
As discussed at the beginning of this section, Theorem \ref{thm:estimation} is the special case of Theorem \ref{thm:estimation-extend}.

\begin{proof}[Proof of Corollary \ref{cor:estimation-error-wrt-sample-size}]
We only care about the order with respect to $N$ in the $O$-notation.
Set $M = \left\lfloor N^\alpha \right\rfloor$ for $\alpha> 0$.
Using the assumptions of the corollary, the estimation error is
\begin{align*}
    \|f^\circ - \hat{f}\|_{\mathcal{L}^2(\mathcal{P}_x)}^2 =
    \tilde{O}\left(\max \left( N^{-2\alpha\gamma_1}, N^{\alpha\gamma_2-1} \right)\right)
\end{align*}
by Theorem \ref{thm:estimation}.
The order of the right-hand side with respect to $N$ is minimized when $\alpha = \frac{1}{2\gamma_1 + \gamma_2}$.
By substituting $\alpha$, we can derive Corollary \ref{cor:estimation-error-wrt-sample-size}.
\end{proof}

\section{Proofs of Corollary \ref{cor:approximation-barron} and Corollary \ref{cor:estimation-barron}}

By Theorem 2 of~\cite{klusowski2018approximation}, for each $M\in \bbN_{+}$, there exists
\begin{align*}
    f^{\mathrm{(FNN)}}
    & := \frac{1}{M}\sum_{m=1}^{M} b_m (a_m^{\top} x - t_m)_+
    = \sum_{m=1}^{M} b_m \left(\frac{a_m^{\top}}{M}  x - \frac{t_m}{M}\right)_+
\end{align*}
with $|b_m| \leq 1$, $\|a_m\|_1 = 1$, and $|t_m| \leq 1$ such that $\|f^\circ- f^{\mathrm{(FNN)}}\|_\infty \leq C v_{f^\circ}\sqrt{\log M + D} {M}^{-\frac{1}{2} - \frac{1}{D}}$ where $v_{f^\circ} := \int_{\bbR^D} \|w\|_2^s \left| \mathcal{F}[f^\circ](w) \right| \mathrm{d}w$ and $C>0$ is a universal constant.
We set
\begin{align*}
    L_m\leftarrow 1, \quad
    D_m^{(1)}\leftarrow 1, \quad
    B^{\mathrm{(bs)}}\leftarrow M^{-1}, \quad
    B^{\mathrm{(fin)}}\leftarrow 1
    \end{align*}
($m\in [M]$) in the Theorem \ref{thm:fnn-to-cnn-extend}, then, we have $f^{\mathrm{(FNN)}} \in \mathcal{F}^{\mathrm{(FNN)}}_{\bmD_1, B^{\mathrm{(bs)}}, B^{\mathrm{(fin)}}}$.
By applying Theorem \ref{thm:fnn-to-cnn-extend}, there exists a CNN $f^{\mathrm{(CNN)}} \in \mathcal{F}^{\mathrm{(CNN)}}_{\bmC, \bmK, B^{\mathrm{(conv)}}, B^{\mathrm{(fc)}}}$ such that $f^{\mathrm{(FNN)}} = f^{\mathrm{(CNN)}}$.
Here,
$\bmC = (C_m^{(1)})_{m}$ with $C_m^{(1)}=4$,
$\bmK = (K_m^{(1)})_{m}$ with $K_m^{(1)}=K$,
$B^{\mathrm{(conv)}} = M^{-1} \vee M^{-\frac{1}{L_0}} = M^{-\frac{1}{L_0}}$, and 
$B^{\mathrm{(fc)}} = M$.
This proves Corollary \ref{cor:approximation-barron}.

With these evaluations, we have $\Lambda_1=O({M}^3)$ because $B^{\mathrm{(conv)}}=M^{-\frac{1}{L_0}}$ and hence
\begin{equation*}
    \prod_{m=0}^{M} (1+\rho_m) \lesssim (1 + M^{-\frac{1}{L_0}})^M \simeq e^{L_0} = O(1).
\end{equation*}
In addition, $B^{\mathrm{(conv)}}$ is $O(1)$ and $B^{\mathrm{(fc)}}$ is $O(M)$.
Therefore, we have $\log \Lambda_1 B = \tilde{O}(1)$.
Also, we have $\Lambda_2 = O(M)$. Therefore, we can apply Corollary \ref{cor:estimation-error-wrt-sample-size} with $\gamma_1 = \frac{1}{2} + \frac{1}{D}$, $\gamma_2 = 1$ to conclude.
\qed

\section{Proofs of Corollary \ref{cor:approximation-holder} and Corollary \ref{cor:estimation-holder}}

We first prove the scaling property of the FNN class.

\begin{lemma} \label{lem:scaling}
Let $M\in \bbN_{+}$, $L_m\in \bbN_{+}$, and $D_m^{(l)}\in \bbN_{+}$ for $m\in [M]$ and $l\in [L_m]$.
Let $B^{\mathrm{(bs)}}, B^{\mathrm{(fin)}} > 0$.
Then, for any $k\geq 1$, we have $\mathcal{F}^{\mathrm{(FNN)}}_{\bm{D}, B^{\mathrm{(bs)}}, B^{\mathrm{(fin)}}} \subset \mathcal{F}^{\mathrm{(FNN)}}_{\bm{D}, k^{-1}B^{\mathrm{(bs)}}, k^{L}B^{\mathrm{(fin)}}}$ where $L:=\max_{m\in [M]} L_m$ is the maximum depth of the blocks.
\end{lemma}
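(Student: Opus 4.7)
The plan is to exploit positive homogeneity of the ReLU activation, $\relu(cz)=c\,\relu(z)$ for $c\geq 0$, to transfer magnitude from the block-sparse weights to the final linear combination without changing the function represented.

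Given $\mathrm{FNN}^\relu_\bmtheta \in \mathcal{F}^{\mathrm{(FNN)}}_{\bm{D},B^{\mathrm{(bs)}},B^{\mathrm{(fin)}}}$ with parameters $\bmtheta=((W_m^{(l)})_{m,l},(b_m^{(l)})_{m,l},(w_m)_m,b)$, I would define a rescaled parameter set $\tilde{\bmtheta}=((\tilde W_m^{(l)})_{m,l},(\tilde b_m^{(l)})_{m,l},(\tilde w_m)_m,\tilde b)$ by
\begin{align*}
\tilde W_m^{(l)} := k^{-1} W_m^{(l)}, \qquad
\tilde b_m^{(l)} := k^{-l} b_m^{(l)}, \qquad
\tilde w_m := k^{L_m} w_m, \qquad
\tilde b := b.
\end{align*}
Note that the per-layer bias is rescaled by a factor that depends on its layer index $l$, not uniformly by $k^{-1}$; this is forced by how the ReLU homogeneity propagates.

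The key step is to show by induction on $l$ that for each block $m$ and for every $x\in[-1,1]^D$, the activation at layer $l$ of the rescaled block equals $k^{-l}$ times the corresponding activation of the original block. The base case $l=1$ is immediate from $\relu(k^{-1}W_m^{(1)}x-k^{-1}b_m^{(1)})=k^{-1}\relu(W_m^{(1)}x-b_m^{(1)})$. For the inductive step, applying the next layer to the scaled hidden vector introduces an additional factor of $k^{-1}$ inside the ReLU from $\tilde W_m^{(l+1)}$, which combines with the existing $k^{-l}$ factor to give $k^{-(l+1)}$; the bias choice $\tilde b_m^{(l+1)}=k^{-(l+1)}b_m^{(l+1)}$ is precisely what is needed to pull the $k^{-(l+1)}$ outside the ReLU. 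At $l=L_m$ this gives $\mathrm{FC}^\relu_{\tilde\bmW_m,\tilde\bmb_m}(x)=k^{-L_m}\mathrm{FC}^\relu_{\bmW_m,\bmb_m}(x)$, and summing with the new final layer weights $\tilde w_m=k^{L_m}w_m$ cancels the $k^{-L_m}$ exactly, recovering $\mathrm{FNN}^\relu_{\tilde\bmtheta}=\mathrm{FNN}^\relu_\bmtheta$.

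Finally I would verify the norm bounds: using $k\geq 1$, $\|\tilde W_m^{(l)}\|_\infty\leq k^{-1}B^{\mathrm{(bs)}}$ and $\|\tilde b_m^{(l)}\|_\infty=k^{-l}\|b_m^{(l)}\|_\infty\leq k^{-1}B^{\mathrm{(bs)}}$ since $l\geq 1$, so the block-sparse bound $k^{-1}B^{\mathrm{(bs)}}$ holds. For the final layer, $\|\tilde w_m\|_\infty=k^{L_m}\|w_m\|_\infty\leq k^{L}B^{\mathrm{(fin)}}$ and $|\tilde b|=|b|\leq B^{\mathrm{(fin)}}\leq k^{L}B^{\mathrm{(fin)}}$. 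Hence $\mathrm{FNN}^\relu_{\tilde\bmtheta}\in\mathcal{F}^{\mathrm{(FNN)}}_{\bm{D},k^{-1}B^{\mathrm{(bs)}},k^{L}B^{\mathrm{(fin)}}}$, yielding the desired inclusion. There is no real obstacle here beyond getting the exponents right on the bias rescaling; the trickiest point is resisting the temptation to scale every bias by the same factor $k^{-1}$, which would not preserve the function.
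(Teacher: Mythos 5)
Your proposal is correct and uses essentially the same argument as the paper: exploit ReLU homogeneity to shift scale from the block weights to the final layer, with a layer-index-dependent bias rescaling. The only (immaterial) difference is bookkeeping: the paper scales each weight by $k^{-L/L_m}$ so every block's output shrinks uniformly by $k^{-L}$ and the final weights are all multiplied by $k^{L}$, whereas you scale by $k^{-1}$ per layer and compensate block-by-block with $k^{L_m}$; both satisfy the stated norm bounds since $k\geq 1$ and $L_m\leq L$.
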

\begin{proof}
Let $\bmtheta = ((W_m^{(l)})_{m, l}, (b_m^{(l)})_{m, l}, (w_m)_m, b)$ be the parameter of an FNN and suppose that $\mathrm{FNN}^{\relu}_{\bmtheta} \in \mathcal{F}^{\mathrm{(FNN)}}_{\bm{D}, B^{\mathrm{(bs)}}, B^{\mathrm{(fin)}}}$.
We define $\bmtheta':=(({W'}_m^{(l)})_{m, l}, ({b'}_m^{(l)})_{m, l}, (w'_m)_m, b')$ by
\begin{align*}
    {W'}_m^{(l)} := k^{-\frac{L}{L_m}} W_m^{(l)}, \quad
    {b'}_m^{(l)} := k^{-l\frac{L}{L_m}} b_m^{(l)}, \quad
    w'_m := k^{L} w_m, \quad
    b' := b. 
\end{align*}
Since $k\geq 1$, we have $\mathrm{FNN}^{\relu}_{\bmtheta'} \in \mathcal{F}^{\mathrm{(FNN)}}_{\bm{D}, k^{-1}B^{\mathrm{(bs)}}, k^{L}B^{\mathrm{(fin)}}}$.
Also, by the homogeneous property of the ReLU function (i.e.,  $\relu(ax) = a\relu(x)$ for $a>0$), we have $\mathrm{FNN}^{\relu}_{\bmtheta} = \mathrm{FNN}^{\relu}_{\bmtheta'}$.
\end{proof}

Next, we prove the existence of a block-sparse FNN with constant-width blocks that optimally approximates a given $\beta$-H\"older function.
It is almost the same as the proof in~\citet{schmidt2017nonparametric}.
However, we need to construct the FNN to have a block-sparse structure.
\begin{lemma}[cf. \citet{schmidt2017nonparametric} Theorem 5 ]\label{lem:schmidt-hieber-thm5}
Let $\beta>0$, $M\in \bbN_{+}$ and $f^\circ: [-1, 1]^D\to \bbR$ be a $\beta$-H\"older function.
Then, there exists $D' = O(1)$, $L'=O(\log M)$, and a block-sparse FNN $f^{\mathrm{(FNN)}} \in \mathcal{F}^{\mathrm{(FNN)}}_{\bmD, 1, 2M\|f^\circ\|_\beta}$ such that $\|f^\circ - f^{\mathrm{(FNN)}}\|_\infty = \tilde{O}(M^{-\frac{\beta}{D}})$.
Here, we set $L_m := L'$ and $D_m^{(l)} := D'$ for all $m\in [M]$ and $l \in [L_m]$ and define $\bmD := (D_m^{(l)})_{m, l}$.
\end{lemma}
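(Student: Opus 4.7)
The plan is to produce the desired block-sparse FNN by a standard grid-based local polynomial approximation, taking care to split the weight magnitudes between the block-sparse part and the final fully-connected layer in a way that matches the stated norm bounds $(1, 2M\|f^\circ\|_\beta)$.

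First, I would partition $[-1,1]^D$ into $m^D$ axis-aligned cells of side length $\sim 2/m$, where $m = \lceil M^{1/D}\rceil$, and index them by $\nu \in [M]$ (after possibly padding with trivial cells). On each cell $I_\nu$, approximate $f^\circ$ by its degree-$\lfloor\beta\rfloor$ Taylor polynomial $P_\nu$ at the cell's center. Combine the local approximations via a product-of-hat-functions partition of unity $\psi_\nu(x) = \prod_{d=1}^D \phi_\nu^d(x_d)$, where each $\phi_\nu^d$ is a univariate triangle supported near cell $\nu$ and equal to $1$ at its center. Standard Hölder-class estimates then give $\bigl\|f^\circ - \sum_\nu \psi_\nu P_\nu\bigr\|_\infty = O(\|f^\circ\|_\beta M^{-\beta/D})$.

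Second, for each $\nu$ I would build a single FNN block computing $\psi_\nu(x) P_\nu(x)$ as a scalar. The bump $\psi_\nu$ is a product of $D$ hat functions, and $P_\nu$ is a sum of $\binom{D+\lfloor\beta\rfloor}{D} = O(1)$ monomials, each a product of at most $\lfloor\beta\rfloor = O(1)$ coordinates. Products are implemented by Yarotsky-style multiplication subnetworks of constant width and depth $L' = O(\log M)$, which suffices to drive their approximation error down to $O(M^{-\beta/D})$. The number of operations per block is $O(1)$, so width $D' = O(1)$ is sufficient. By exploiting ReLU homogeneity ($\relu(ax) = a\,\relu(x)$), I would factor the hat-function slopes $\sim M^{1/D}$ and the Taylor coefficients (which are bounded by $\|f^\circ\|_\beta$) out of the interior weights of each block, leaving all block weights bounded by an absolute constant $c_0$; the accumulated multiplicative factor for block $\nu$, which is $O(\|f^\circ\|_\beta)$, is then carried by the final-layer weight $w_\nu$ so that $|w_\nu| \leq c_1\|f^\circ\|_\beta$ for some universal $c_1$.

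Third, the final fully-connected layer sums the $M$ block outputs. Up to constants, the naturally constructed FNN therefore lies in $\mathcal{F}^{(\mathrm{FNN})}_{\bmD, c_0, c_1\|f^\circ\|_\beta}$ with $\bmD$ having block depth $L' = O(\log M)$ and width $D' = O(1)$. To reach the target $(1, 2M\|f^\circ\|_\beta)$, I apply the rescaling Lemma \ref{lem:scaling} with $k = c_0 \geq 1$ (or first absorb $c_0$ by a constant-factor enlargement of $L'$) to drive $B^{(\mathrm{bs})}$ to $1$; the final-layer bound simultaneously inflates by $k^{L'}$, and I choose the constants in the construction and $k$ just large enough that this inflation stays below $2M\|f^\circ\|_\beta$ for $M$ sufficiently large, which is where the extra factor of $M$ on the right-hand side comes in.

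The main obstacle is the bookkeeping in this last step: Schmidt-Hieber's original proof shares weights across cells for parameter-count efficiency, but here I need $M$ genuinely parallel constant-width blocks, so the weight budget has to be tracked per block. In particular, I must verify that the absorbed slope factor $M^{1/D}$ and the Yarotsky multiplication-network constants compose in a way that, after a single application of Lemma \ref{lem:scaling}, land within the prescribed bounds rather than blowing up to $M^{L'/D}$; this is precisely where using ReLU homogeneity to push large factors all the way through to the $w_\nu$'s — as opposed to scaling layer-by-layer after the fact — becomes essential, and the depth $L' = O(\log M)$ must be chosen so that the per-block approximation error and the aggregated scaling both respect the target rate.
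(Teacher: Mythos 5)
Your overall strategy is the same as the paper's: a grid of $\Theta(M)$ cells, a degree-$\floor{\beta}$ Taylor polynomial per cell, a product-of-hats partition of unity, and one constant-width block per cell that assembles the local term $\psi_\nu P_\nu$ from depth-$O(\log M)$ multiplication subnetworks. The approximation-rate part of your argument is sound and matches the paper, which follows Schmidt-Hieber's Theorem 5 construction almost verbatim (lattice points $\Gamma(M')$, networks $\mathrm{Hat}_a$, $Q_a$, and $\mathrm{Mult}$, combined as $\mathrm{FC}_a = \mathrm{Mult}(Q_a, \mathrm{Hat}_a)$, with the choice $m = \ceil{\log_2 M^{2+\beta/D}}$ giving the $\tilde{O}(M^{-\beta/D})$ rate).

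The genuine gap is in your final norm-bookkeeping step. You propose to build blocks whose weights are bounded by some constant $c_0$ and then invoke Lemma \ref{lem:scaling} with $k = c_0$ to push the block bound down to $1$. But that lemma inflates the final-layer bound by $k^{L'}$ with $L' = C'\log M$, i.e.\ by $M^{C'\ln c_0}$; this stays within the required budget $2M\|f^\circ\|_\beta$ only if $C'\ln c_0 \leq 1$, and $c_0$ is not a free parameter you can shrink --- it is dictated by the multiplication subnetworks (a Yarotsky-style sawtooth construction has weights as large as $4$, which would give a final-layer bound of order $M^{2C'}$, far exceeding $M$). You flag this danger yourself but do not resolve it, and ``choosing the constants just large enough'' goes in the wrong direction: the constraint forces $c_0$ to be \emph{small}. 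The paper sidesteps the issue entirely by using Schmidt-Hieber's specific subnetworks (his Lemma A.2, Lemma B.2, and equation (7.11)), all of whose parameters have sup-norm at most $1$ by construction; consequently the block-sparse part satisfies $B^{\mathrm{(bs)}}=1$ with no rescaling at all, and the only large weights --- the factors $B M'^D \leq 2\|f^\circ\|_\beta M$ --- are placed in the final fully-connected layer, which is exactly where the $2M\|f^\circ\|_\beta$ in the statement comes from. (Lemma \ref{lem:scaling} is applied only later, in the proofs of Corollaries \ref{cor:approximation-holder} and \ref{cor:estimation-holder}, where only $\log B^{\mathrm{(fc)}} = O(\log M)$ is needed rather than a bound linear in $M$.) To repair your proof you must either verify that your per-block weights are already bounded by $1$ (making the rescaling step unnecessary), or relax the final-layer bound in the statement to $M^{O(1)}\|f^\circ\|_\beta$, which would still suffice downstream but is not the lemma as stated. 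A minor additional point: the paper first proves the result on $[0,1]^D$ and then prepends an affine layer $x\mapsto \tfrac12(x+1)$ to each block to handle the domain $[-1,1]^D$; working directly on $[-1,1]^D$ as you do is fine provided the hat-function and Taylor bookkeeping accounts for cells of side $2/m$ rather than $1/m$.
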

\begin{proof}
First, we prove the lemma when the domain of $f^\circ$ is $[0, 1]^D$.
Let $M'$ be the largest interger satisfying $(M'+1)^D \leq M$.
Let $\Gamma(M') = \left(\frac{\bbZ}{M'}\right)^D \cap [0, 1]^D = \{\frac{m'}{M'} \mid m' \in \{0, \ldots, M'\}^D \}$ be the set of lattice points in $[0, 1]^D$\footnote{\citet{schmidt2017nonparametric} used $\bm{D}(M')$ to denote this set of lattice points. We used different characters to avoid notational conflict.}.
Note that the cardinality of $\Gamma(M')$ is $(M'+1)^D$.
Let $P^{\beta}_a f^\circ$ be the Taylor expansion of $f^\circ$ up to order $\floor{\beta}$ at $a\in [0, 1]^D$:
\begin{align*}
    (P^\beta_a f^\circ)(x) = \sum_{0\leq |\alpha| < \beta} \frac{(\partial^{\alpha}f^\circ)(a)}{\alpha !}(x-a)^\alpha.
\end{align*}
For $a\in [0, 1]^{D}$, we define a hat-shaped function $H_a: [0, 1]^D\to [0, 1]$ by
\begin{align*}
    H_{a}(x) := \prod_{j=1}^D ({M'}^{-1} - |x_j - a_j|_+).
\end{align*}
Note that we have $\sum_{a \in \Gamma(M')} H_{a} (x) = 1$, i.e., they are a partition of unity.
Let $P^{\beta}f^\circ$ be the weighted sum of the Taylor expansions at lattice points of $\Gamma(M')$:
\begin{align*}
    (P^{\beta}f^\circ)(x) := {M'}^D \sum_{a\in D(M')} (P^{\beta}_{a}f^\circ)(x) H_{a}(x).
\end{align*}
By Lemma B.1 of \citet{schmidt2017nonparametric}, we have
\begin{align*}
    \|P^{\beta}f^\circ - f^\circ\|_{\infty} \leq \|f^\circ\|_\beta {M'}^{-\beta}.
\end{align*}
Let $m$ be an interger specified later and set $L^\ast:= (m+5)\ceil{\log_2 D}$.
By the proof of Lemma B.2 of \citet{schmidt2017nonparametric}, for any $a\in \Gamma(M')$, there exists an FNN $\mathrm{Hat}_{a}: [0, 1]^{D}\to [0, 1]$ whose depth and width are at most $2 + L^\ast$ and $6D$, respectively and whose parameters have sup-norm $1$, such that
\begin{align*}
    \left\|\mathrm{Hat}_{a} - H_{a} \right\|_\infty \leq 3^D 2^{-m}.
\end{align*}
Next, let $B := 2\|f^\circ\|_\beta$ and $C_{D, \beta}$ be the number of distinct $D$-variate monomials of degree up to $\floor{\beta}$.
By the equation (7.11) of \citet{schmidt2017nonparametric}, for any $a\in \Gamma(M)$, there exists an FNN $Q_{a}: [0, 1]^{D}\to [0, 1]$ \footnote{We prepare $Q_a$ for each $a\in \Gamma(M)$ as opposed to the original proof of \cite{schmidt2017nonparametric}, in which $Q_a$'s shared the layers the except the final one and were collectively denoted by $Q_1$.} whose depth and width are $1 + L^\ast$ and $6DC_{D, \beta}$ respectively and whose parameters have sup-norm $1$, such that
\begin{align*}
    \left\|Q_{a} - \left(\frac{P^\beta_{a}f^\circ}{B} + \frac{1}{2}\right)\right\|_\infty \leq 3^D 2^{-m}.
\end{align*}
Thirdly, by Lemma A.2 of \cite{schmidt2017nonparametric}, there exists an FNN $\mathrm{Mult}: [0, 1]^2\to [0, 1]$, whose depth and width are $m+4$ and $6$, respectively and whose parameters have sup-norm $1$ such that
\begin{align*}
    \left|\mathrm{Mult}(x, y)-xy\right| \leq 2^{-m}
\end{align*}
for any $x, y\in [0, 1]$.
For each $a\in \Gamma(M')$, we combine $\mathrm{Hat}_{a}$ and $Q_{a}$ using $\mathrm{Mult}$ and constitute a block of the block-sparse FNN corresponding to $a\in \Gamma(M)$ by $\mathrm{FC}_{a}:= \mathrm{Mult}(Q_{a}(\cdot), \mathrm{Hat}_{a}(\cdot))$.
Then, we have
\begin{align*}
    \left\|\mathrm{FC}_{a} - \left(\frac{P^\beta_{a}f^\circ}{B} + \frac{1}{2}\right) H_{a} \right\|_\infty
    \leq 2^{-m} + 3^D2^{-m} + 3^D2^{-m}
    \leq 3^{D+1}2^{-m}.
\end{align*}
We define $f^{\mathrm{(FNN)}}(x) := \sum_{a \in \Gamma(M)} (B{M'}^{D} \mathrm{FC}_{a}(x)) - \frac{B}{2}$.
By construction, $f^{\mathrm{(FNN)}}$ is a block-sparse FNN with $(M'+1)^D (\leq M)$ blocks each of which has depth and width at most $L':=2+L^\ast + (m+4)$ and $D':=6(C_{D, \beta}+ 1)D$, respectively.
The norms of the block-sparse part and the finally fully-connected layer are $1$ and $B{M'}^D (\leq BM)$, respectively.
In addition, we have
\begin{align*}
    & |f^{\mathrm{(FNN)}}(x) - (P^{\beta}f^\circ)(x)|\\
    & \leq \sum_{a \in \Gamma(M)}B{M'}^{D} \left|\mathrm{FC}_{a}(x) -   \left(\frac{(P^\beta_{a}f^\circ)(x)}{B}+\frac{1}{2}\right) H_{a}(x)\right|
     + \frac{B}{2}\left|1 - {M'}^D \sum_{a \in \Gamma(M')}H_{a} (x) \right|\\
    &\leq ({M'}+1)^D \times B{M'}^D 3^{D+1}2^{-m}\\
    &\leq 3^{D+1}2^{-m}BM^2
\end{align*}
for any $x\in [0, 1]^D$.
Therefore,
\begin{align*}
    |f^{\mathrm{(FNN)}}(x) - f^\circ(x)|
    & \leq |f^{\mathrm{(FNN)}} - (P^{\beta}f^\circ)(x)| + |(P^{\beta}f^\circ)(x) - f^\circ(x)|\\
    & \leq 3^{D+1}2^{-m}BM^2 + \|f^\circ\|_\beta {M'}^{-\beta}\\
    & \leq 2 \cdot 3^{D+1} 2^{-m} \|f^\circ\|_\beta M^2 + \|f^\circ\|_\beta M^{-\frac{\beta}{D}}.
\end{align*}
We set $m=\ceil{\log_2 M^{2 + \frac{\beta}{D}}}$, then, we have $L'=O(\log M)$, $D'=O(1)$, and
\begin{align*}
    \|f^{\mathrm{(FNN)}} - f^{\circ}\|\leq \|f^\circ\|_\beta(2\cdot3^{D+1} + 2^\beta)M^{-\frac{\beta}{D}}.
\end{align*}
By the defnition of $f^{\mathrm{(FNN)}}$ we have $f^{\mathrm{(FNN)}} \in \mathcal{F}^{\mathrm{(FNN)}}_{\bmD, 1, 2\|f^\circ\|_\beta M}$.

When the domain of $f^{\circ}$ is $[-1, 1]^D$, we should add the function $x \mapsto \frac{1}{2}(x+1) = \frac{1}{2}(x+1)_+ - \frac{1}{2}(-x-1)_+$ as a first layer of each block to fit the range into $[0, 1]^D$.
Specifically, suppose the first layer of $m$-th block in $f^{\mathrm{(FNN)}}$ is $x \mapsto \relu(Wx - b)$, then the first two layers become $x\mapsto \relu(\begin{bmatrix}\frac{1}{2} (x + 1) &-\frac{1}{2}(x+1)\end{bmatrix})$ and $\begin{bmatrix}y_1 & y_2\end{bmatrix} \mapsto \relu(Wy_1 - Wy_2 - b)$, respectively.
Since this transformation does not change the maximum sup norm of parameters in the block-sparse and the order of $L'$ and $D'$, the resulting FNN still belongs to $\mathcal{F}^{\mathrm{(FNN)}}_{\bmD, 1, 2\|f^\circ\|M}$.
\end{proof}

\begin{proof}[Proofs of Corollary~\ref{cor:approximation-holder} and Corollary~\ref{cor:estimation-holder}]
In this proof, we only care about the dependence on $M$ in the $O$-notation.
Let $\tilde{M}:=2\|f^\circ\|_\beta M$.
By Lemma \ref{lem:schmidt-hieber-thm5}, there exists $f^{\mathrm{(FNN)}} \in \mathcal{F}^{\mathrm{(FNN)}}_{\bmD, 1, \tilde{M}}$ such that $\|f^{\mathrm{(FNN)}} - f^{\circ}\|_\infty = O(M^{-\frac{\beta}{D}})$ ($L'$, $D'$, and $\bmD$ as in Lemma \ref{lem:schmidt-hieber-thm5}).
Let
\begin{equation*}
    k := \left(\frac{16D'K}{M^{\frac{1}{L'}} \wedge 1}\right)^{L_0}
       = \left(\frac{16D'K}{e^{\frac{1}{C'}}\wedge 1}\right)^{L_0},
\end{equation*}
where $C'$ is a constant such that $L'=C'\log M$.
We note $k\geq 1$.
Using Lemma \ref{lem:scaling}, there exists $\tilde{f}^{\mathrm{(FNN)}} \in \mathcal{F}^{\mathrm{(FNN)}}_{\bmD, k^{-1}, k^{L'}\tilde{M}}$
such that $\tilde{f}^{\mathrm{(FNN)}} = f^{\mathrm{(FNN)}}$.
We apply Theorem \ref{thm:fnn-to-cnn-extend} to $\mathcal{F}^{\mathrm{(FNN)}}_{\bmD, k^{-1}, k^{L'}\tilde{M}}$ and find $f^{\mathrm{(CNN)}}\in \mathcal{F}^{\mathrm{(CNN)}}_{\bmC, \bmK, B^{\mathrm{(conv)}}, B^{\mathrm{(fc)}}}$ where $\bmC := (C_m^{(l)})_{m\in[M], l\in [L_m]}$ and $\bmK := (K_m^{(l)})_{m\in[M], l\in [L_m]}$ such that
\begin{align*}
    &L \leq M(L' + L_0),\\
    & C_{m}^{(l)} \leq 4D',\\
    & K_{m}^{(l)} \leq K,\\
    &B^{\mathrm{(conv)}} = k^{-1} \vee k^{-\frac{1}{L_0}} = k^{-\frac{1}{L_0}},\\
    &B^{\mathrm{(fc)}} = k^{L'}\tilde{M} (1 \vee k^{\frac{1}{L_0}}) = k^{L'+\frac{1}{L_0}}\tilde{M},
\end{align*}
and $f^{\mathrm{(CNN)}} = \tilde{f}^{\mathrm{(FNN)}}$.
This proves Corollary~\ref{cor:approximation-holder}.

To prove Corollary~\ref{cor:estimation-holder}, we evaluate $\log \Lambda_1 (B^{\mathrm{(conv)}}\vee B^{\mathrm{(fc)}})$ and $\Lambda_2 = O(M\log M)$.
By the definition of $k$ and the bound on $C_{m}^{(l)}$ and $K_{m}^{(l)}$, we have $C_m^{(l-1)} K_m^{(l)} k^{-\frac{1}{L_0}} \leq \frac{1}{4}M^{-\frac{1}{L'}}$.
Therefore, we have
\begin{equation*}
    \rho_m \leq \prod_{l=1}^{L'} C_m^{(l-1)} K_m^{(l)} k^{-1} \leq M^{-1}
\end{equation*}
and hence $\prod_{m=0}^{M} (1 + \rho_m) = O(1)$.
Since $C_m^{(l-1)} K_m^{(l)} k^{-1} \leq \frac{1}{2}$ for sufficiently large $M$, we have $\rho_{m}^{+} = 1$ for sufficiently large $M$.
By definition, we have $B^{\mathrm{(conv)}} = O(1)$ and 
\begin{equation*}
    \log B^{\mathrm{(fc)}} = \left(L' + \frac{1}{L_0}\right) k + \log(\tilde{M}) = O(\log M).
\end{equation*}
Therefore, we have $\log (B^{\mathrm{(conv)}}\vee B^{\mathrm{(fc)}}) = \tilde{O}(1)$.
Combining these evaluations, we have $\log \Lambda_1 (B^{\mathrm{(conv)}}\vee B^{\mathrm{(fc)}}) = \tilde{O}(1)$.
For $\Lambda_2$, we can bound it by $\Lambda_2 = O(M\log M)$ using bounds for $C^{(l)}_m$, $K^{(l)}_m$ and $L'$.
Therefore, we can apply Corollary~\ref{cor:estimation-error-wrt-sample-size} with $\gamma_1 = \frac{\beta}{D}$, $\gamma_2 = 1$ and obtain the desired estimation error.
Since we set $M = O(N^{\frac{1}{2\gamma_1 + \gamma_2}})$, as in the proof of Corollary~\ref{cor:estimation-error-wrt-sample-size}, we can derive the bounds for $L_m$ with respect to $N$.
\end{proof}

\section{Proofs of Theorem \ref{thm:constant-depth-resnet-approximation} and Theorem \ref{thm:constant-depth-resnet-estimation}}

\begin{lemma}\label{lem:resblock-division}
Let $L, L', C', K'\in \bbN_+$ and $B > 0$.
Suppose we can realize $f + \mathrm{id}: \bbR^{D\times C'}\to \bbR^{D\times C'}$ with a residual block with an identity connection
whose depth, channel size, and filter size are $L'$, $C'$, and $K'$, respectively and whose parameter norm is bounded by $B$.
Let $S_0 = \lceil \frac{L'}{L} \rceil$.
Then, there exist $S = 2S_0 - 1$ functions $\tilde{f}_1, \ldots, \tilde{f}_S: \bbR^{D\times 3C'}\to \bbR^{D\times 3C'}$ and $S$ masks $z_1, \ldots, z_S\in \{0, 1\}^{3C'}$,
such that
$f_s$ is realizable by a residual block whose depth, channel size, filter size, and parameter norm bound  are $L$, $3C'$, $K'$, and $B$, respectively and
$\tilde{f} := (\tilde{f}_S + J_S)  \circ \cdots \circ (\tilde{f}_1 + J_1): \bbR^{D\times 3C'}\to \bbR^{D\times 3C'}$
satisfies $\tilde{f}(\begin{bmatrix}x & 0 & 0\end{bmatrix}) = \begin{bmatrix}f(x) & 0 & 0\end{bmatrix}$.
Here $J_s$ is a channel-wise mask operation made from $z_s$.
\end{lemma}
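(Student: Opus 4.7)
The plan is to decompose the depth-$L'$ convolutional computation of $f$ into $S = 2S_0 - 1$ depth-$L$ residual blocks, choosing every mask $z_s = \bmzero$ so that each $J_s \equiv 0$ and each block $\tilde f_s + J_s$ reduces to its convolutional part $\tilde f_s$ alone. I would write $f = f^{(L')}\circ\cdots\circ f^{(1)}$ as in Definition~\ref{def:cnn-def-extend}, with $f^{(1)},\ldots,f^{(L'-1)}$ ReLU-activated and $f^{(L')}$ linear. Each depth-$L$ block provides $L-1$ ReLU slots plus one linear slot, so across $S$ blocks we get $(2S_0-1)(L-1)$ ReLU slots and $2S_0-1$ linear slots. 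A short arithmetic check, using $L'\leq S_0 L$ and $S_0\geq 1$, establishes $(2S_0-1)(L-1)\geq L'-1$ whenever $L\geq 2$ (the regime of interest), so there is room to place the $L'-1$ ReLU and the $1$ linear layer of $f$ with slots to spare.

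The construction will route the $L'-1$ ReLU layers of $f$ sequentially into the first $L'-1$ ReLU slots across the $S$ blocks, place $f^{(L')}$ into the very last linear slot (slot $L$ of block $S$), and fill every remaining slot with an ``identity'' operation acting on the first $C'$ channels. A ``linear identity'' slot is a size-$1$ filter realizing the identity map on the active $C'$ channels; a ``ReLU identity'' slot is the same filter followed by ReLU, which coincides with the true identity whenever the incoming signal is coordinatewise nonnegative. With every mask zero, $\tilde f = \tilde f_S \circ \cdots \circ \tilde f_1$ is the plain sequential chain of all $SL$ convolutional layers, which by construction realizes $f$ on the first $C'$ channels while the other $2C'$ channels stay zero.

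The correctness of every ``ReLU identity'' slot hinges on nonnegativity of its input. Since linear-identity slots preserve sign and genuine ReLU slots emit nonnegative output, an induction on slot index shows that this condition holds at every identity-ReLU slot after the first. The only exception is the very first slot of block $1$, where the raw input $x\in[-1,1]^D$ may have negative coordinates. I would handle this using the $3C'$ channel budget via the splitting technique underlying Lemma~\ref{lem:doubling}: allocate two of the three groups of $C'$ channels to hold $x_+$ and $x_-$ respectively, so that all subsequent identity-ReLU slots act on nonnegative data and any genuine $f^{(l)}$ slot can recover $x = x_+ - x_-$ from a linear combination of the two groups. In the common case when block~$1$'s first slot is already a genuine ReLU of $f$, no splitter is needed and the two auxiliary channel groups simply remain zero.

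The main obstacle I anticipate is enforcing the parameter bound $B$ on the identity fillers: a size-$1$ identity filter has $\infty$-norm $1$, which can exceed $B$ when $B<1$. I would resolve this by a homogeneity-based rescaling analogous to Lemma~\ref{lem:scaling}: since scalar factors can be absorbed by the final linear layer of the surrounding CNN, one may assume $B\geq 1$ without loss of generality. Combining the construction with an inductive invariant of the form ``after block $s$, the first $C'$ channels hold $f^{(l(s))}\circ\cdots\circ f^{(1)}(x)$, where $l(s)$ is the cumulative number of $f$-layers consumed through block $s$, and the remaining $2C'$ channels are zero'' then yields, at $s=S$ (where $l(S)=L'$), the identity $\tilde f([x,0,0])=[f(x),0,0]$ as required.
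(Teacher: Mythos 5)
Your route is genuinely different from the paper's: you kill every identity connection ($z_s=\bmzero$) and pack the $L'$ layers of $f$ sequentially into the $SL$ convolutional slots, padding with explicit identity layers, whereas the paper keeps the signal on the (masked) identity connections, splits $f$ into $S_0$ sub-networks $g_{S_0}\circ\cdots\circ g_1$ that ping-pong between the three channel groups, and interleaves them with $S_0-1$ \emph{constant-zero} blocks whose masks erase the stale channel group. The difference is not cosmetic: it is exactly where your argument breaks. The obstacle you flag at the end --- that an identity filler needs weight magnitude $1$ while the lemma demands norm bound $B$ --- is not a removable technicality, and the proposed fix ``assume $B\geq 1$ w.l.o.g.'' is not available. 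The lemma is invoked in Theorems \ref{thm:constant-depth-resnet-approximation}--\ref{thm:constant-depth-resnet-estimation} with $B=B^{\mathrm{(conv)}}=k^{-1}$ where $k=16D'K(M^{1/L}\wedge 1)^{-1}$, so that every convolutional layer in the class satisfies $\|L^w\|_{\op}\leq C K' B\leq 3/4$; with all masks zero, every slot is a strict contraction and no composition of such layers can act as the identity on the carried channels --- the signal you are trying to pass through $\Theta(L')$ filler slots decays like $(3/4)^{L'}=M^{-c}$. Rescaling to force $B\geq 1$ instead makes $\rho=(CK'B)^{L}\geq 1$, hence $\varrho=(1+\rho)^{\tilde M}\geq 2^{\tilde M}$ and $\log\Lambda_1=\Omega(\tilde M)$, which destroys the requirement $\log\Lambda_1 B=\tilde O(1)$ underlying Corollary \ref{cor:estimation-error-wrt-sample-size}; and absorbing the factor into ``the final linear layer of the surrounding CNN'' is not something the lemma, which concerns one block in isolation, can appeal to.

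The missing idea is that in this norm-constrained architecture the masked identity connections are the \emph{only} mechanism that transports a signal unchanged at zero parameter cost. The paper's filler blocks are the zero function (parameter norm $0$, trivially $\leq B$), the data rides the identity connections, and the masks $u_s$ are used solely to delete the copy of the input that the preceding residual addition left behind; the nontrivial masks are therefore essential, not optional. Two smaller points: your slot count fails at $L=1$ (zero ReLU slots per block), while the theorems quantify over all $L\in\bbN_+$; and you would still need to argue that the auxiliary $2C'$ channels holding $x_+,x_-$ are returned to zero at the output, which your sketch only gestures at.
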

\begin{proof}
We divide the residual block representing $f$ into $S_0$ CNNs with depth at most $L$ and denote them sequentially by $g_1, \ldots, g_{S_0}$ so that $f = g_{S_0}\circ \cdots \circ g_1$.
We define $\tilde{g}_s: \bbR^{D\times 3C'}\to \bbR^{D\times 3C'}$ ($s\in [S_0]$) from $g_s$ by
\begin{align*}
\tilde{g}_s([x_1\ x_2\ x_3])
    =\begin{cases}
        [0\ y_1\ 0]\ \text{(if $s = 1$)}\\
        [0\ y_3\ 0]\ \text{(if $s \not = 1, S_0$ and odd)}\\
        [0\ 0\ y_2]\ \text{(if $s \not = 1, S_0$ and even)}\\
        [y_3\ 0\ 0]\ \text{(if $s = S_0$ and odd)}\\
        [y_2\ 0\ 0]\ \text{(if $s = S_0$ and even)}
    \end{cases},
\end{align*}
where $y_i = g_s(x_i)$ ($i = 1, 2, 3$).
Note that we can construct $\tilde{g}_s$ by a residual block with depth $L$, channel size $3C'$, filter size $K'$, and parameter norm $B$.
Next, we define $u_s$ ($s\in [S_0 - 1]$) by
\begin{align*}
u_s =
    \begin{cases}
        \begin{bmatrix}1 & 1 & 0 \end{bmatrix}^{\top} \quad \text{(if $s$: odd)}\\
        \begin{bmatrix}1 & 0 & 1 \end{bmatrix}^{\top} \quad \text{(if $s$: even)}
    \end{cases}
\end{align*}
Then, we define $\tilde{f} := (\tilde{g}_{S_0} + \mathrm{id}) \circ (0 + J'_{S_0-1}) \circ (\tilde{g}_{S_0-1} + \mathrm{id}) \circ (0 + J'_{1}) \circ (\tilde{f}_1 + \mathrm{id})$
where $J'_s$ is a channel-wise mask constructed from $u_s$ and $0: \bbR^{D\times 3C'}\to \bbR^{D\times 3C'}$ is a constant zero function, which is obviously representable by a residual block.
By definition, $\tilde{f}$ is realizable by $S$ residual blocks with channel-wise masking identity connections and satisfies the conditions
on the depth, channel size, filter size, and norm bound.
\end{proof}

\begin{proof}[Proof of Theorem \ref{thm:constant-depth-resnet-approximation}]
The first part of the proof is the same as that of Corollary \ref{cor:approximation-holder},
except that we define $k$ using $L$ instead of $L'$ that is,
\begin{equation*}
    k=\left(\frac{16D'K}{M^{\frac{1}{L}} \wedge 1}\right)^{L_0}.
\end{equation*}
Here, $D'$ is a constant satisfying $D' = O(1)$ as a function of $M$.
Then, there exists a CNN $\tilde{f}^{\mathrm{(CNN)}} \in \mathcal{F}^{\mathrm{(CNN)}}_{M, L', C', K', B^{\mathrm{(conv)}}, B^{\mathrm{(fin)}}}$ such that
$\|\tilde{f}^{\mathrm{(CNN)}} - f^{\circ} \| = O({M}^{-\frac{\beta}{D}})$.
The parameter of the set of CNNs satisfy
$L' = O(\log M)$ $C' \leq 4D'$, $K' \leq K$, $B^{\mathrm{(conv)}} = k^{-\frac{1}{L_0}}$, and $B^{\mathrm{(fc)}} = 2\|f^{\circ}\|_\beta k^{L'}M$.
We apply Lemma \ref{lem:resblock-division} to each residual block of $\tilde{f}^{\mathrm{(CNN)}}$.
Then, there exists $f^{\mathrm{(CNN)}} \in \mathcal{G}_{\tilde{M}, L, C, K, B^{\mathrm{(conv)}}, B^{\mathrm{(fin)}}}$ such that $f^{\mathrm{(CNN)}} = \tilde{f}^{\mathrm{(CNN)}}$
and $\tilde{M} = M\lceil\frac{L'}{L}\rceil$, $C \leq 3C'$, $K'\leq K$, $B^{\mathrm{(conv)}} = k^{-\frac{1}{L_0}}$, and $B^{\mathrm{(fc)}} = 2\|f^{\circ}\|_\beta k^{L'+1}M$.
\end{proof}

Before going to the proof of Theorem \ref{thm:constant-depth-resnet-estimation}, we first note that
the definitions of $\Lambda_1$ and $\Lambda_2$ in Theorem \ref{thm:estimation} are valid even if we replace $\mathcal{F}^{\mathrm{(CNN)}}_{\tilde{M}, L, C, K, B^{\mathrm{(conv)}}, B^{\mathrm{(fin)}}}$ with $\mathcal{G} = \mathcal{G}_{\tilde{M}, L, C, K, B^{\mathrm{(conv)}}, B^{\mathrm{(fin)}}}$.

\begin{lemma}
Let $\tilde{M}, L, C, K\in \bbN_{+}$ and $B^{\mathrm{(conv)}}, B^{\mathrm{(fin)}}, \varepsilon > 0$.
Set $B = B^{\mathrm{(conv)}} \vee B^{\mathrm{(fin)}}$.
Then, the covering number of $\mathcal{G}$ with respect to the sup-norm
$\mathcal{N}(\varepsilon, \mathcal{G}, \|\cdot\|_\infty)$ is bounded by $(2B \Lambda_1\varepsilon^{-1})^{\Lambda_2} \cdot 2^{C\tilde{M}L}$,
where $\Lambda_1 = \Lambda_1(\mathcal{G})$ and $\Lambda_2=\Lambda_2(\mathcal{G})$ are ones defined in Theorem \ref{thm:estimation}, except that $\mathcal{F}^{\mathrm{(CNN)}}$ is replaced with $\mathcal{G}$.
\end{lemma}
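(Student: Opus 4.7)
The plan is to follow the structure of Lemma \ref{lem:covering-number}, which bounded the covering number of $\mathcal{F}^{\mathrm{(CNN)}}$, with the only new ingredient being the need to separately account for the discrete mask parameters. Concretely, I will first show that $\mathcal{G}$ decomposes as a union over mask configurations: $\mathcal{G} = \bigcup_{(z_m)_m \in \{0,1\}^{C\tilde{M}}} \mathcal{G}^{(z)}$, where $\mathcal{G}^{(z)}$ is the set of masked CNNs with mask pattern fixed to $z = (z_m)_m$ and continuous parameters ranging over the allowed norm box. The number of mask configurations is at most $2^{C\tilde{M}} \leq 2^{C\tilde{M}L}$, and $\mathcal{N}(\varepsilon,\mathcal{G},\|\cdot\|_\infty) \leq \sum_{z}\mathcal{N}(\varepsilon,\mathcal{G}^{(z)},\|\cdot\|_\infty)$, so it suffices to show $\mathcal{N}(\varepsilon,\mathcal{G}^{(z)},\|\cdot\|_\infty) \leq (2B\Lambda_1\varepsilon^{-1})^{\Lambda_2}$ uniformly in $z$.

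For a fixed mask pattern $z$, I will rederive the analogues of the bounds in Section \ref{sec:covering-number-cnns} with identity mappings $\id$ replaced by the channel-wise masks $J_m$. The observation driving everything is that each $J_m$, viewed as a linear operator $\bbR^{D\times C}\to\bbR^{D\times C}$, satisfies $\|J_m\|_{\mathrm{op}} \leq 1$ (with respect to $\|\cdot\|_\infty$) since it merely zeros out selected channels and never amplifies. Therefore Proposition \ref{prop:res-block-seq-sup-bound} goes through verbatim after replacing $C_{[0:m]}(x) = C_m(C_{[0:m-1]}(x)) + C_{[0:m-1]}(x)$ with $C_m(C_{[0:m-1]}(x)) + J_m(C_{[0:m-1]}(x))$: the upper bound $(1+\rho_m)\|C_{[0:m-1]}(x)\|_\infty + B^{\mathrm{(conv)}}L_m\rho_m^+$ still holds. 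The same substitution in the proof of Lemma \ref{lem:sup-bound-of-epsilon-different-functions} yields an identical Lipschitz bound, because the $(C_m - C_m')$ term in the telescoping expansion is unaffected and the multiplicative propagation constants $\rho_m$ and $\varrho_M$ carry through unchanged.

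Given the analogue of Lemma \ref{lem:sup-bound-of-epsilon-different-functions} for the masked case, the bound on the covering number of $\mathcal{G}^{(z)}$ proceeds exactly as in Lemma \ref{lem:covering-number}: discretise each of the $\Lambda_2$ real parameters into bins of width $\Lambda_1^{-1}\varepsilon$ inside their respective ranges $[-B^{\mathrm{(conv)}},B^{\mathrm{(conv)}}]$ or $[-B^{\mathrm{(fc)}},B^{\mathrm{(fc)}}]$, giving at most $2B\Lambda_1\varepsilon^{-1}$ choices per coordinate, and note that any two functions in $\mathcal{G}^{(z)}$ lying in the same product of bins are within sup-norm distance $\varepsilon$ by the masked Lipschitz bound. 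Multiplying the per-mask bound $(2B\Lambda_1\varepsilon^{-1})^{\Lambda_2}$ by the number of mask configurations $\leq 2^{C\tilde{M}L}$ gives the claimed estimate.

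The main obstacle, and the only nontrivial verification, is the claim that the bound constants $\Lambda_1,\Lambda_2$ — defined from the convolutional architecture alone — are untouched by the insertion of channel-wise masks. This reduces to the elementary fact that $\|J_m\|_{\mathrm{op}} \leq 1 = \|\id\|_{\mathrm{op}}$ and that $J_m$ is linear, so every inequality in Section \ref{sec:bounds-for-residual-block} involving the identity mapping term remains valid after the substitution. No modification of $\Lambda_1$ or $\Lambda_2$ is required, and the extra multiplicative factor $2^{C\tilde{M}L}$ is exactly the price paid for the discrete mask search.
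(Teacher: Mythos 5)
Your proposal is correct and follows essentially the same route as the paper's proof: fix the mask pattern, observe that every inequality from the unmasked covering-number analysis carries over (which you justify via $\|J_m\|_{\mathrm{op}} \leq 1$, a point the paper leaves implicit), cover the continuous parameters with bins of width $\Lambda_1^{-1}\varepsilon$, and multiply by the number of mask configurations. Your count of at most $2^{C\tilde{M}} \leq 2^{C\tilde{M}L}$ configurations is in fact slightly sharper than the paper's stated $2^{C\tilde{M}L}$, and is consistent with the claimed bound.
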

\begin{proof}
First, we note that we can apply the same inequalities in Section \ref{sec:bounds-for-conv-layers} -- \ref{sec:bounds-for-residual-block} and Proposition \ref{prop:res-block-seq-sup-bound} to CNNs in $\mathcal{G}$.
Therefore, if two masked CNNs $f, g \in \mathcal{G}$ have the same masking patterns in identity connections and the distance of each pair of corresponding parameters in residual blocks is at most $\varepsilon$, then we can show $\|f - g\|_\infty \leq \Lambda_1\varepsilon$ in the same way as Lemma \ref{lem:sup-bound-of-epsilon-different-functions}.
Therefore, by the same argument of Lemma \ref{lem:covering-number}, the covering number of the subset of $\mathcal{G}$ consisting of CNNs with a specific masking pattern is bounded by $(2B\Lambda_1\varepsilon^{-1})^{\Lambda_2}$.
Since each CNN in $\mathcal{G}$ has $C\tilde{M}L$ parameters in identity connections which take values in $\{0, 1\}$, there are $2^{C\tilde{M}L}$ masking patterns.
Therefore, we have $\mathcal{N}(\varepsilon, \mathcal{G}, \|\cdot\|_\infty) \leq (2B \Lambda_1\varepsilon^{-1})^{\Lambda_2} \cdot 2^{C\tilde{M}L}$.
\end{proof}

The strategy for the proof of Theorem \ref{thm:constant-depth-resnet-estimation} is almost same as the proofs for Theorem \ref{thm:estimation-extend} and Corollary \ref{cor:estimation-holder},
except that we should replace $\Lambda_2\log (2B\Lambda_1 N)$ in (\ref{eq:estimation-bound-2}) with $\Lambda_2\log (2B\Lambda_1 N) + C\tilde{M}L \log 2$ (and $\Lambda_1$ and $\Lambda_2$ are defined via $\mathcal{G}$ instead of $\mathcal{F}^{\mathrm{(CNN)}}$).
However, the second term is at most in the same order (up to logarithmic factors) as the first one in our situation.
Therefore, we can derive the same estimation error rate.
\begin{proof}[Proof of Theorem \ref{thm:constant-depth-resnet-estimation}]
Take $\mathcal{G}$ as in the proof of Theorem \ref{thm:constant-depth-resnet-approximation}.
Let $\log \mathcal{N} := \log \mathcal{N}(N^{-1}, \mathcal{G}, \|\cdot\|_\infty)$.
By Lemma \ref{lem:schmidt-hieber-lem10}, we have
\begin{align*}
    \|f^\circ - \hat{f}\|^2_{\mathcal{L}^2(\mathcal{P}_X)} 
    & \leq C_0 \left(\inf_{f\in \mathcal{F^{\mathrm{(FNN)}}}} \|f-f^\circ\|^2_\infty \right.
    + \left. \frac{\tilde{F}^2}{N} \left(\Lambda_2\log (2B\Lambda_1 N) + C\tilde{M}L \log 2 \right) \right),
\end{align*}
where $C_0 > 0$ is a universal constant.
The first term in the outer-most parenthesis is $O({M}^{-\frac{\beta}{D}})$ by Lemma \ref{lem:schmidt-hieber-thm5}.
We will evaluate the order of the second term.
First, we have $\Lambda_2 = O(\tilde{M}) = \tilde{O}(M)$ by the definition of $\Lambda_2$.
By the definition of $k$, we have $\rho \leq {M}^{-1}$ and $\rho^+ = 1$ for sufficiently large $M$ therefore, $\varrho = O(1)$ and $\varrho^{+} = O(M)$ for sufficiently large $M$.
Again, by the definition of $k$, we have $B^{\mathrm{(conv)}}=O(1)$ and $B^{\mathrm{(fc)}} = O(M)$. 
Therefore, we have $\Lambda_1 = O({M}^3)$ and $B = O(M)$ and hence $\Lambda_2\log (2B \Lambda_1 N) = \tilde{O}(MN)$.
On the other hand, since $C=O(1), \tilde{M}=\tilde{O}(M), L=O(1)$, we have $C\tilde{M}L\log 2 = \tilde{O}(M)$.

Therefore, by setting $M = \left\lfloor N^\alpha \right\rfloor$ for $\alpha> 0$, the estimation error is
\begin{align*}
    \|f^\circ - \hat{f}\|_{\mathcal{L}^2(\mathcal{P}_x)}^2 =
    \tilde{O}\left(\max \left( N^{-2\alpha\gamma_1}, N^{\alpha\gamma_2-1} \right)\right),
\end{align*}
where $\gamma_1 = \frac{\beta}{D}$ and $\gamma_2 = 1$.
The order of the right-hand side with respect to $N$ is minimized when $\alpha = \frac{1}{2\gamma_1 + \gamma_2}$.
By substituting $\alpha$, we can derive the theorem.
\end{proof}

\section{One-sided padding vs. Equal-padding}\label{sec:padding-style}

In this paper, we adopted one-sided padding, which is not used so often practically, to simplify proofs.
However, with slight modifications, all statements are true for equally-padded convolutions, a widely employed padding style that adds (approximately) the same numbers of zeros to both ends of an input signal, with the exception that the filter size $K$ is restricted to $K \leq \left\lfloor\frac{D}{2} \right\rfloor$ instead of $K \leq D$.

\section{Difference between Original ResNet and Ours}\label{sec:resnet-comparison}

Aside from the number of layers, there are several differences between the CNN in this paper and the original ResNet~\cite{he2016deep}.
The most critical one is that our CNN does not have pooling nor Batch Normalization layers~\cite{ioffe15}.
We will consider a scaling scheme simpler than Batch Normalization to derive the optimality of CNNs with constant-depth residual blocks (see Definition \ref{def:masked-cnn-def}).
It is left for future research whether our result can extend to the ResNet-type CNNs with pooling or other scaling layers such as Batch Normalization.

\end{document}